\theoremstyle{plain}
\newtheorem{theorem}{Theorem}[section]
\newtheorem{proposition}[theorem]{Proposition}
\newtheorem{lemma}[theorem]{Lemma}
\theoremstyle{definition}
\theoremstyle{remark}
\newtheorem{remark}[theorem]{Remark}
\newcommand{\deriv}[2]{\frac{\text{d}#1}{\text{d}#2}}
\title{Transition Constrained Bayesian Optimization via Markov Decision Processes}
\author{%
  Jose Pablo Folch \\
  Imperial College London\\
  London, UK \\
  \And
  Calvin Tsay \\
  Imperial College London\\
  London, UK \\
  \And
  Robert M Lee \\
  BASF SE \\
  Ludwigshen, Germany \\
  \AND
  Behrang Shafei \\
  BASF SE \\
  Ludwigshen, Germany \\
  \And 
  Weronika Ormaniec \\
  ETH Zurich \\
  Zurich, Switzerland \\
  \And 
  Andreas Krause \\
  ETH Zurich \\
  Zurich, Switzerland \\
  \AND 
  Mark van der Wilk \\
  Imperial College London\\
  London, UK \\
  \And
  Ruth Misener \\
  Imperial College London\\
  London, UK \\
  \And
  Mojmír Mutn\'{y} \\
  ETH Zurich \\
  Zurich, Switzerland
  % examples of more authors
  % \And
  % Coauthor \\
  % Affiliation \\
  % Address \\
  % \texttt{email} \\
  % \AND
  % Coauthor \\
  % Affiliation \\
  % Address \\
  % \texttt{email} \\
  % \And
  % Coauthor \\
  % Affiliation \\
  % Address \\
  % \texttt{email} \\
  % \And
  % Coauthor \\
  % Affiliation \\
  % Address \\
  % \texttt{email} \\
}
\begin{document}

\maketitle

\begin{abstract}
Bayesian optimization is a methodology to optimize black-box functions. Traditionally, it focuses on the setting where you can arbitrarily query the search space. However, many real-life problems do not offer this flexibility; in particular, the search space of the next query may depend on previous ones. Example challenges arise in the physical sciences in the form of local movement constraints, required monotonicity in certain variables, and transitions influencing the accuracy of measurements. Altogether, such \textit{transition constraints} necessitate a form of planning. This work extends classical Bayesian optimization via the framework of Markov Decision Processes. We iteratively solve a tractable linearization of our utility function using reinforcement learning to obtain a policy that plans ahead for the entire horizon. This is a parallel to the optimization of an \emph{acquisition function in policy space}. The resulting policy is potentially history-dependent and non-Markovian. We showcase applications in chemical reactor optimization, informative path planning, machine calibration, and other synthetic examples.
\end{abstract}

\section{Introduction}\looseness -1 
\label{sec: introduction}
Many areas in the natural sciences and engineering deal with optimizing expensive black-box functions. Bayesian optimization (BayesOpt) \citep{frazier2018tutorial,jones1998efficient,shahriari2016bo}, a method to optimize these problems using a probabilistic surrogate, has been successfully applied to a myriad of examples, e.g.\ hyper-parameter selection \citep{hutter2019automated}, robotics \citep{Marco2017}, battery design \citep{folch2023combining}, laboratory equipment tuning \citep{dixon2024operator}, and drug discovery \citep{brennan2022sample}. However, state-of-the-art algorithms are often ill-suited when physical sciences interact with potentially dynamic systems \citep{THEBELT2022117469}. In such circumstances, real-life constraints limit our future decisions while depending on the prior state of our interaction with the system. This work focuses on transition constraints influencing future choices depending on the current state of the experiment. In other words, reaching certain parts of the decision space (search space) requires long-term planning in our optimization campaign. This effectively means we address a general sequential-decision problem akin to those studied in reinforcement learning (RL) or optimal control for the task of optimization. We assume the transition constraints are known \textit{a priori} to the optimizer. 
% An application of our work is when ``large'' (in application-specific metrics) changes between sequential decision steps are infeasible. In such a setting, we enforce that the decision changes are constrained and at the same time follow the goal of the BayesOpt campaign.

Applications with transition constraints include chemical reaction optimization \citep{vellanki2017proccess-constrained, folch2022snake, boyall2024automated}, environmental monitoring \citep{bogunovic2016truncated, hellan2022bayesian, hellan2023bayesian, stoddart2023gaussian, mutny2023active}, lake surveillance with drones \citep{hitz2014fully, Gotovos2013, samaniego2021bayesian}, energy systems \citep{ramesh2022movement}, vapor compression systems \citep{paulson2023lsr}, electron-laser tuning \cite{Kirchner2022} and seabed identification \citep{sullivanadaptive}. For example, Figure~\ref{fig: algorithm_summary} depicts an application in environmental monitoring where autonomous sensing vehicles must avoid obstacles (similar to \citet{hitz2014fully}). Our main focus
application are transient flow reactors \cite{mozharov2011improved, schrecker2023efficient, schrecker2023discovery}. Such reactors allow efficient data collection by obtaining semi-continuous time-series data rather than a single measurement after reaching the steady state of the reactor. As we can only change the inputs of the reactor continuously and slowly to maintain quasi-steady-state operation, allowing arbitrary changes, as in conventional BayesOpt, would result in measurement sequences which are not possible due to physical limitations.%simplification of the system. %Frequently, we also face additional constraints, such as monotonicity in the paths created. %Movement constraints are also important in environmental monitoring. Autonomous vehicles searching for pollution sources, as in \citet{hitz2014fully}, need to plan and avoid obstacles before they are deployed on informative paths. Figure \ref{fig: constrained_ypacarai} gives an example of an environmental monitoring task with such constraints.

% In such reactors we can vary reaction conditions such residence time, temperature, and stoichiometry, however, due the continuous nature of the process and due to reactor control limitations we are faced with strict movement constraints on subsequent experiments.

%\subsection{Problem Statement} \looseness -1 
\textbf{Problem Statement. }
More formally, we design an algorithm to identify the optimal configuration of a physical system governed by a black box function $f$, namely, $x^\star = \arg\max_{x \in \mathcal{X}} f(x).$ The set $\mX$ summarizes all possible system configurations, the so called \emph{search space}. We assume that we can sequentially evaluate the unknown function at specific points $x$ in the search space and obtain noisy observations, $y = f(x) + \epsilon(x)$, where $\epsilon$ has a known Gaussian likelihood, which is possibly heteroscedastic. We assume that $f$ can be modeled probabilistically using a Gaussian process prior that we introduce later. Importantly, the order of the evaluations is dictated by \emph{known}, potentially stochastic, dynamics modeled by a Markov chain that limits our choices of $x \in \mX$.

%We model $f$ as a linear function of \textit{known} $m$-dimensional embeddings $\Phi(x)$ with Gaussian weights:
% \begin{equation}
%     f(x) = \Phi(x)^T \theta \quad \text{and} \quad  \theta \sim \mN(0,\bI_{m\times m})
% \end{equation}
% Note that this problem formulation is equivalent to the traditional BayesOpt Gaussian Processes (GPs) formulation if we consider embeddings such as Nystr\"om features \citep{williams2000using} or Fourier features \citep{mutny2018efficient, rahimi2007random}. It is also equivalent to a classical linear bandits problem \citep{soare2014best} under the transformed bandit space $\{\Phi(x) : x \in \mathcal{X} \}$. The posterior of $f$ can easily be calculated analytically using standard GP results  \citep{rasmussen2005gps} and therefore we have easy access to predictive distributions.

\textbf{BayesOpt with a Markov Decision Processes. }
%This problem formulation can be addressed by BayesOpt with Gaussian Processes (GPs). 
The problem of maximizing an unknown function could be addressed by BayesOpt, which typically chooses to query $f(x)$ by sequentially maximizing an \emph{acquisition function}, $u$:
\begin{equation}\label{eq:acq_basic}
    x_{t+1} = \argmax_{x \in \mathcal{X}} u(x | \bX_t),
\end{equation}
depending on all the past data at iteration $t$, $\bX_t$. Eq.\ \eqref{eq:acq_basic} arises as a greedy one-step approximation whose overall goal is to minimize e.g. cumulative regret, and assumes that any choice of point in the search space $\mX$ is available. However, given transition constraints, we must traverse the search space according to the system dynamics. This work extends the BayesOpt framework and provides a method that constructs a potentially non-Markovian policy by myopically optimizing a utility as, 
\begin{equation}\label{eq:acq_traj}
    \pi_{t+1} = \argmax_{\pi \in \Pi} \mathcal{U}(\pi | \bX_t),
\end{equation}
where $\mathcal{U}$ is the greedy utility of the policy $\pi$ and $\bX_t$ encodes  past trajectories through the search space. In the following sections, we will show how to tractably formulate the overall utility, how to greedily maximize it, and how to adapt it to admit policies depending on the full optimization history.% Our solution is computationally tractable and theoretically grounded as we show later. 

\textbf{Contributions.} %\looseness -1 
We present a BayesOpt framework that tractably plans over the complete experimentation horizon and respects Markov transition constraints, building on active exploration in Markov chains \citep{mutny2023active}. Our key contributions include:
\begin{itemize}
    \item We identify a novel utility function for maximum identification as a function of policies, and greedily optimize it. The optimization is tractable, and does not scale exponentially in the policy horizon. In many cases, the problem is convex in the natural representation. 
    \item We provide exact solutions to the optimization problems using convex optimization for discrete Markov chains. For continuous Markov chains, we propose a reparameterization by viewing our problem as an instance of model predictive control (MPC) with a non-convex objective. Interestingly, in both cases, the resulting policies are history-dependent (non-Markovian).
    \item We analyze the scheme theoretically and empirically demonstrate its practicality on problems in physical systems, such as electron laser calibration and chemical reactor optimization.% in physical sciences. %Further, we point out an experiment involving the trade-off of information and movement costs as well as test the approach on traditional benchmarks.
\end{itemize}

\begin{figure*}[!ht]
	\centering
	\begin{subfigure}[t]{0.32\textwidth}
	\includegraphics[width = \textwidth]{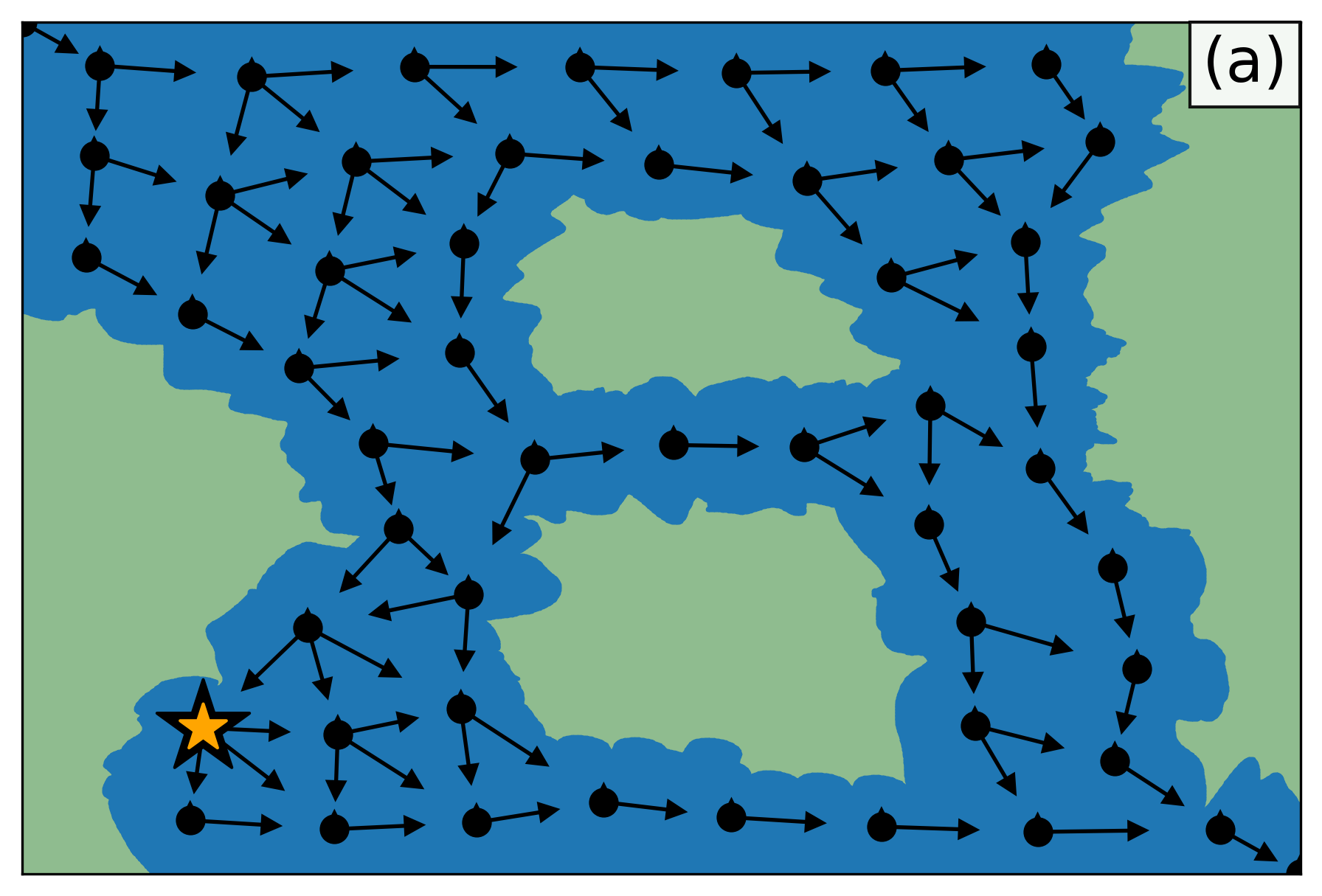}
	\end{subfigure}
	\hfill
	\begin{subfigure}[t]{0.32\textwidth}
	\includegraphics[width = \textwidth]{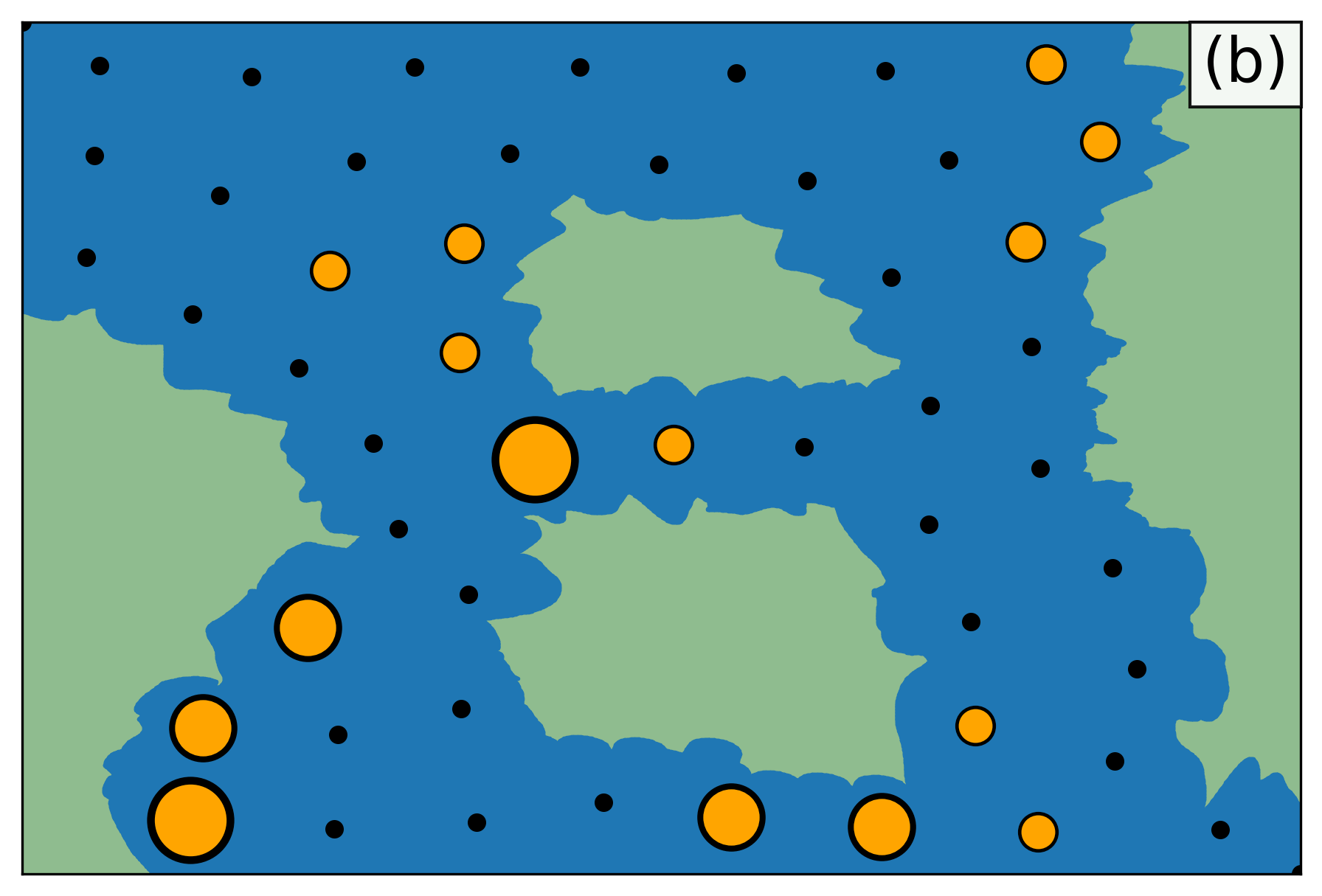}
	\end{subfigure}
	\hfill
	\begin{subfigure}[t]{0.32\textwidth}
	\includegraphics[width = \textwidth]{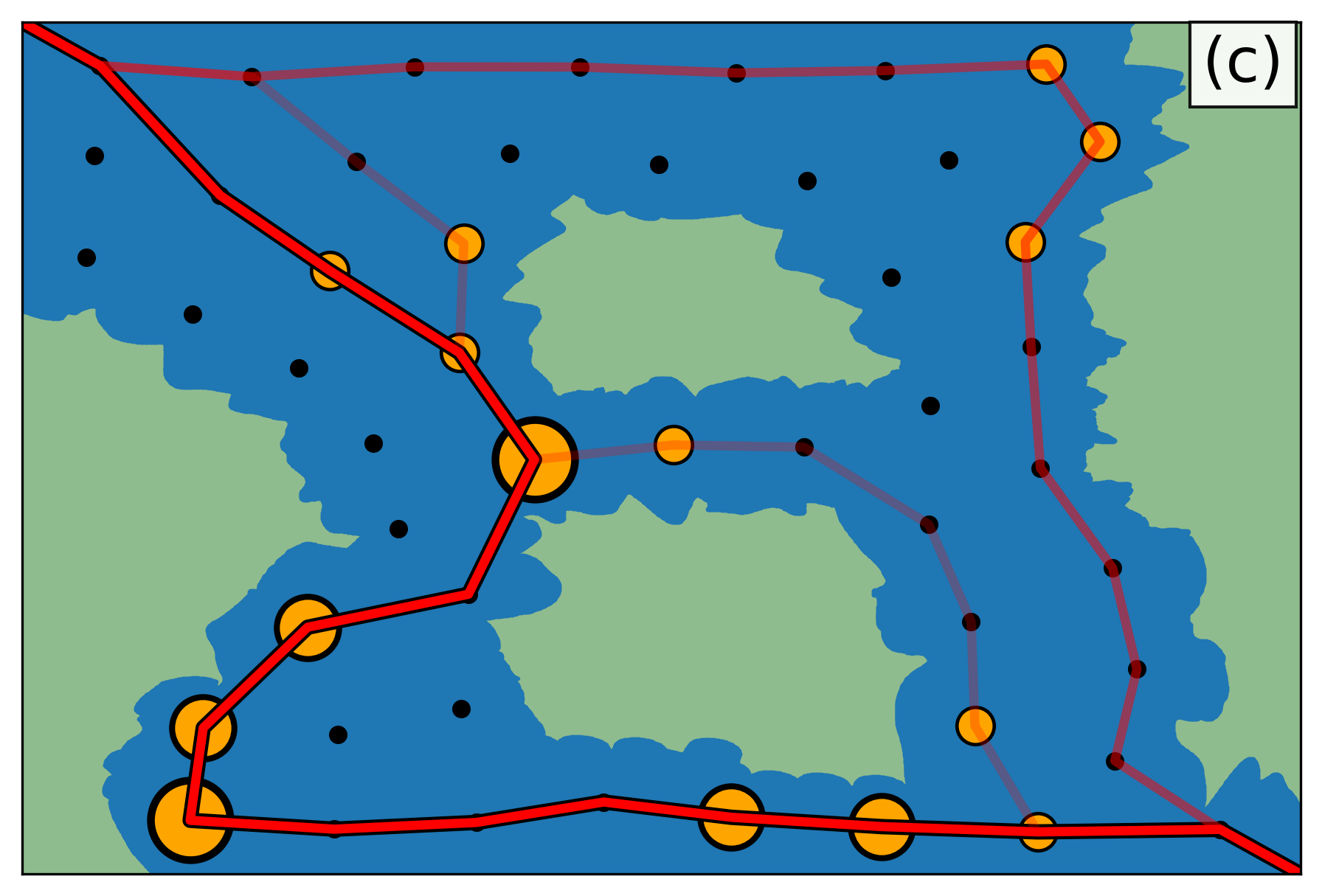}
	\end{subfigure}
	\caption{Representative task of finding pollution in a river while following the current. (a) Problem formulation: The star represents the maximizer and the arrows the Markov dynamics. (b) Objective formulation: Orange balls represent potential maximizers, with size corresponding to model uncertainty. (c) Optimization: Deploy a potentially stochastic policy that minimizes our objective.}
	\label{fig: algorithm_summary}
\end{figure*}

\section{Background }\looseness -1 
Because our contributions address experimental design of real-life systems by intersecting design of experiments, BayesOpt and RL, we review each of these of components. Refer to Figure \ref{fig: visual_abstract} in Appendix \ref{app: visual_abstract} for a visual overview of how we selected the individual components for tractability of the entire problem.  % \vspace{-0.1cm}
\paragraph{Gaussian Processes} \looseness -1 
To model the unknown function $f$, we use Gaussian processes (GPs) \citep{rasmussen2005gps}. GPs are probabilistic models that capture nonlinear relationships and offer well-calibrated uncertainty estimates. Any finite marginal of a GP, e.g., for inputs $(x_1, .., x_p)$, the values  $\{f(x_j)\}_{j=1}^p$, are normally distributed. We adopt a Bayesian approach and assume $f$ is a sample from a GP prior with a known covariance kernel, $k$, and zero mean function, $f \sim \operatorname{GP}(0,k)$. Under these assumptions, the posterior of $f$, given a Gaussian likelihood of data, is a GP that is analytically tractable. 

\subsection{Maximum Identification: Experiment Design Goal} \label{sec: maximum_identification} \looseness -1   % \vspace{-0.1cm}
Classical BayesOpt is naturally myopic in its definition as a greedy one-step update (see \eqref{eq:acq_basic}), but has the overall goal to minimize, e.g., the cumulative regret. Therefore $u$ needs to chosen such that overall non-myopic goals can be achieved, usually defined as balancing an exploration-exploitation trade-off. In this paper we follow similar ideas; however, we do not focus on regret but instead on gathering information to maximize our chances to identify $x^\star$, the maximizer of $f$. 

%The probabilistic framework allows for long-term planning in BayesOpt, namely anticipating the outcomes of planned observations and optimizing over sequences of decisions \citep{osborne2009gaussian}. This is, however, highly computationally burdensome, as it requires simulations from the posterior or heuristic approximations \citep{jiang2020binoculars}. The uncertainty in possible experiment observations must be averaged, leading to large integrals and the need to use Monte Carlo estimates. For similar reasons standard acquisition functions such as q-Expected Improvement \citep{ginsbourger2007multi} and Predictive Entropy Search \citep{hernandez2014predictive} are ill suited to our setting. 

\textbf{Maximum Identification via Hypothesis testing.}  %\vspace{-0.1cm}
Maximum identification can naturally be expressed as a multiple hypothesis testing problem, where we need to determine which of the elements in $\mX$ is the maximizer. To do so, we require good estimates of the differences (or at least their signs) between individual queries $f(x_i) - f(x_j)$; $x_i,x_j \in \mX$. For example, if $f(x_i)-f(x_j)\leq 0$, then $x_i$ cannot be a maximizer. Given the current evidence, the set of arms which we cannot rule out are all \emph{potential maximizers}, $\mZ \subset \mX$. At termination we report our best guess for the maximizer as:
\[ x_T = \argmax_{x\in \mZ} \mu_T(x),\quad  \text{ where }\mu_T \text{ is the predictive mean at termination time $T$.}\]
Suppose we are in step $t$ out of $T$, then let $\bX_t$ be the set of previous queries, we seek to identify new $\bX_{\text{new}}$ that when evaluated minimize the probability of returning a sub-optimal arm at the end. For a given function draw $f$, the probability of returning a wrong maximizer $z \neq x_f^\star$ is $P(\mu_T(z) - \mu_T(x_f^\star) \geq 0 | f)$. We can then consider the \textit{worst-case} probability across potential maximizers, and taking expectation over $f$ we obtain a utility through an asymptotic upper-bound on the log-probability, indeed for large $T$ we obtain:
% the largest variance of the differences: $
\begin{equation}
\label{eq:main-utility}
% \label{eq: original-objective}
\mE_{f}\left[\sup_{z\in \mZ\setminus \{x_f^\star\} }\log P(\mu_T(z) - \mu_T(x_f^\star) \geq 0 | f)\right] \stackrel{\cdot}\leq -\frac{1}{2} \mE_f\left[ \sup_{z\in \mZ\setminus \{x_f^\star\}}\frac{(f(z) - f(x_f^\star))^2}{k_{\bX_{t} \cup \bX_{new}}(z,x_f^\star)}\right] 
% \label{eq:main_utility}
\end{equation}
% \begin{equation}
% \label{eq:main-utility}
% \min_{\bX_{\text{new}}}\mE_{f\sim \operatorname{GP}}\left[\max_{z\neq x^\star \in \mathcal{Z}} \frac{k_{\bX\cup \bX_{\text{new}}}(x^\star,z)}{(f(x^\star) - f(z))^2} | \bX_t \right],
% \end{equation}
% i.e., we minimize the expected worst-case (maximum) uncertainty gap among points in $\mathcal{Z}$. 
The expectation is on the current prior (posterior up to $\bX_t$), the kernel $k$ is the posterior kernel given observations $\bX_t \cup \bX_{\text{new}}$. Since we consider the probability of an error, it is more appropriate to talk about minimizing instead of `maximizing the utility' but the treatment is analogous. Further, note the intuitive interpretation of the bound: the probability of an error will be minimized if the uncertainty is small or if the values of $f(z)$ and $f(x^\star_f)$ are far apart. The non-trivial distribution of $f(x^\star)$ \citep{hennig2012entropy} renders the utility intractable; therefore we employ a simple and tractable upper bound on the objective \eqref{eq:main-utility} which can be optimized by minimizing the uncertainty among all pairs in $\mathcal{Z}$: %, which ensures we minimize the original objective as well, 
\begin{equation}
    U(\bX_{\text{new}}) = \max_{z', z \in \mathcal{Z}, z\neq z'} \text{Var}[f(z) - f(z') | \bX_t \cup \bX_{\text{new}}] \label{eq:utility-fiesz-kenrel}.
\end{equation}
Such objectives can be solved greedily in a similar way as acquisition functions in Eq.\ \eqref{eq:acq_basic} by minimizing $U$ over $\bX_{\text{new}}$. %Reassuringly, we recover the same objective introduced by 
Note that \citet{fiez2019sequential} derive this objective for the same problem with linear bandits, albeit they consider the frequentist setting and (surprisingly) a different optimality criterion: minimizing $T$ for a given failure rate. For their setting, the authors prove that it is an asymptotically optimal objective to follow. They do not consider any Markov chain structure. Derivation of the Bayesian utility and its upper bound in Eq.\eqref{eq:utility-fiesz-kenrel} can be found in Appendix \ref{app: derivation-utility}--\ref{app: utility-upper-bound}.
 
\textbf{Utility with kernel embeddings.} \label{sec: approx_of_gps} \looseness -1   % \vspace{-0.1cm}
For illustrative purposes, consider a special case where the kernel $k$ has a low rank due to existence of embeddings $\Phi(x) \in \mR^{m}$, i.e., $k(x,y) = \Phi(x)^\top \Phi(y)$. 
%To mitigate the computational and exposition difficulty, we utilize finite-dimensional approximations of kernel $k$ based 
Such embeddings can be, e.g., Nystr\"om features \citep{williams2000using} or Fourier features \citep{mutny2018efficient, rahimi2007random}. While not necessary, these formulations make the objectives considered in this work more tractable and easier to expose to the reader. 
With the finite rank assumption, the random function $f$ becomes,
 \begin{equation}
     f(x)  = \Phi(x)^T \theta \quad \text{and} \quad  \theta \sim \mN(0,\bI_{m\times m})
 \end{equation}
where $\theta$ are weights with a Gaussian prior. We can then rewrite the objective Eq.\ \eqref{eq:utility-fiesz-kenrel} as:
\begin{equation}
      U(\bX_{\text{new}}) = \max_{z, z' \in \mathcal{Z}} || \Phi(z) - \Phi(z')||^2_{\left( \sum_{x \in \bX_t \cup \bX_{\text{new}}} \frac{\Phi(x) \Phi(x)^\top}{\sigma^2} + \bI \right)^{-1}} \label{eq: fiez_bo_objective_unrelaxed}.
\end{equation}
This reveals an essential observation that the utility depends only on the visited states; not their order. This suggests a vast simplification, where we do not to model whole trajectories, and Markov decision processes sufficiently describe our problem. Additionally, numerically, the objective involves the inversion of an $m \times m$ matrix instead of $|\mX| \times |\mX|$ (see Sec. \ref{sec: planning}). Appendix \ref{sec: general_kernel_formulation} provides a utility without the finite rank-assumptions that is more involved symbolically and computationally. 
\subsection{Markov Decision Processes} \looseness -1 \label{sec:mdps}  % \vspace{-0.1cm}
To model the transition constraints, we use the versatile model of Markov Decision processes (MDPs). We assume an environment with state space $\mathcal{X}$ and action space $\mathcal{A}$, where we interact with an unknown function $f: \mathcal{X} \times \mathcal{A} \rightarrow \mathbb{R}$ by rolling out a policy for $H$ time-steps (horizon) and obtain a trajectory, $\tau = (x_0, a_0, x_1, a_1, ..., x_{H-1}, a_{H-1})$. From the trajectory, we obtain a sequence of noisy observations $y(\tau) := \{ y(x_0, a_0), ..., y(x_{H-1}, a_{H-1}) \}$ s.t. $y(x_h) = f(x_h, a_h) + \epsilon(x_h,a_h)$, where $\epsilon(x_h,a_h)$ is zero-mean Gaussian with known variance which is potentially state and action dependent. The trajectory is generated using a \emph{known} transition operator $P(x_{h+1}|x_h,a_h)$. A Markov policy $\pi(a_h|x_h)$ is a mapping that dictates the probability of action $a_h$ in state $x_h$. Hence, the state-to-state transitions are $P(x_{h+1},x_h) = \sum_{a \in \mA} \pi_h(a|x_h)P(x_{h+1}|x_h,a)$. In fact, an equivalent description of any Markov policy $\pi$ is the corresponding distribution giving us the probability of visiting a state-action pair under the policy, which we denote $d_\pi \in \mD$, where 
\begin{align*}\label{eq:state-action}
	\mD := \Big\{ \forall h\in[H] ~ d_h ~ | ~ d_h(x,a)\geq 0, ~ \sum_{a,x} d_h(x,a) = 1, ~  \sum_a d_{h}(x^\prime,a) = \sum_{x,a}d_{h-1}(x,a)p(x^\prime|x,a)   \Big\}
\end{align*}
 We will use this polytope to reformulate our optimization problem over trajectories. Any $d \in \mD$ can be realized by a Markov policy $\pi$ and vice-versa. We work with non-stationary policies, meaning the policies depend on horizon count $h$. The execution of deterministic trajectories is only possible for deterministic transitions. Otherwise, the resulting trajectories are random. In our setup, we repeat interactions $T$ times (episodes) to obtain the final dataset of the form $\bX_T = \{\tau_i\}_{i=1}^T$. 

\subsection{Experiment Design in Markov Chains} \looseness -1 \label{sec: experiment_design_markov_chains}  % \vspace{-0.1cm}
Notice that the utility $U$ in Eq. \ref{eq: fiez_bo_objective_unrelaxed} depends on the states visited and hence states of the trajectory. In our notation, $\bX_t$ will now form a set of executed trajectories. With deterministic dynamics, we could optimize over trajectories, but this would lead to an exponential blowup (i.e. $|X|^H$). In fact, for stochastic transitions, we cannot pick the trajectories directly, so instead we work in the space of distributions. For a given policy, through sampling, we are able to create an empirical distribution of all the state-action pairs visited during policy executions, $\hat{d}_{\pi}(x, a)$, which assigns equal mass to each state-action visited during our trajectories. This allows us to focus on the expected utility over the randomness of the policy and the environment, namely,
\begin{equation}\label{eq: relaxation}
%\utilexp(\{\pi\}_{t=1}^T) = 
 \mathcal{U}(d_\pi) := U(\mE_{\tau_1 \sim \pi_1,\dots \tau_t \sim \pi_t}[\hat{d}_\pi]).
\end{equation}
This formulation stems from \citet{mutny2023active} who try to tractably solve such objectives that arise in experiment design by performing planning in MDPs. They focus on learning linear operators of an unknown function, unlike identifying a maximum, as we do here. The key observation they make is that any policy $\pi$ induces a distribution over the state-action visitations, $d_\pi$. Therefore we can reformulate the problem of finding the optimal policy, into finding the optimal distribution over state-action visitations as: $ \min_{d_\pi \in \mD} \mathcal{U} (d_\pi)$, and then construct policy $\pi$ via marginalization. We refer to this optimization as the \textit{planning problem}. The constraint $\mD$ encodes the dynamics of the MDP.

\subsection{Additional Related Works} \looseness -1
The most relevant prior work to ours is exploration in reinforcement learning through the use of Markov decision processes as in \citet{mutny2023active} and convex reinforcement learning of \citet{Hazan2019, zahavy2021reward} which we will use to optimize the objective. Other related works are:

\textbf{Pure exploration bandits objectives.} Similar objectives to ours have been explored for BayesOpt. \citet{li2022gaussian} use the $\mathcal{G}$-allocation variant of our objective for batch BayesOpt, achieving good theoretical bounds. \citet{zhang2023learning} and recently \citet{han2024no} take advantage of possible maximizer sets to train localized models, while \citet{salgia2024random} show that considering adaptive maximization sets yields good regret bounds under random sampling. Contrary to them, motivation and derivation in terms of a Bayesian decision rule do not appear elsewhere according to our best knowledge. We also recognize that we can relax the objective and optimize it in the space of policies.

\textbf{Optimizing over sequences.} %\looseness -1 
Previous work has focused on planning experimental sequences for minimizing switching costs \cite{folch2022snake, ramesh2022movement, yang2023mongoose, chen2024traveling} however they are only able adhere to strict constraints under truncation heuristics \citep{samaniego2021bayesian, folch2023practicalsnake, paulson2023lsr}. Recently, \citet{qing2024system} also tackle Bayesian optimization within dynamical systems, with the focus of optimizing initial conditions. Concurrent work of \citet{che2023planningmpc} tackles a constrained variant of a similar problem using model predictive control with a different goal.

\textbf{Regret vs Best-arm identification.} %\looseness -1 
Most algorithms in BayesOpt focus on regret minimization. This work focuses on maximizer identification directly, i.e., to identify the maximum after a certain number of iterations with the highest confidence. This branch of BayesOpt is mostly addressed in the bandit literature \cite{Audibert2010}. Our work builds upon prior works of  \citet{soare2014best, yu2006active}, and specifically upon the seminal approach of \citet{fiez2019sequential} to design an optimal objective via hypothesis testing. Novel to our setting is the added difficulty of transition constraints necessitating planning. 

\textbf{Non-myopic Bayesian Optimization.} %\looseness -1 
Look-ahead BayesOpt \citep{ginsbourger2010towards, marchant2014sequential, lam2016bayesian, jiang2020binoculars, lee2021nonmyopic, paulson2022efficient, cheon2022reinforcement} seeks to improve the greedy aspect of BayesOpt. Such works also use an MDP problem formulation, however, they define the state space to include all past observations (e.g. \citep{jiang2020efficient, astudillo2021multi}). This comes at the cost of simulating expensive integrals, and the complexity grows exponentially with the number of look-ahead steps (usually less than three steps). Our work follows a different path, we maintain the greedy approach to control computational efficiency (i.e. by optimizing over the space of Markovian policies), and maintain provable and state-of-art performance. Even though the optimal policy through non-myopic analysis is non-Markovian, in Sec. \ref{sec: planning}, we show that \emph{adaptive resampling} iteratively approximates this non-myoptic optimal policies in a numerically tractable way via receeding horizon planning. In our experiments we comfortably plan for over a hundred steps.

\section{Transition Constrained BayesOpt} \looseness -1 % \vspace{-0.1cm} 
This section introduces BayesOpt with transition constraints. We use MDPs to encode constraints. %\textbf{Encoding Constraints.} %\looseness -1  \vspace{-0.1cm}
Namely, the Markov dynamics dictates which inputs we are allowed to query at time-step $h+1$ given we previously queried state $x_h$. This mean that the transition operator is $P(x_{h+1}|x_h, a) = 0$ for any transition $x_h \rightarrow x_{h+1}$ not allowed by the physical constraints. 

Motivated by our practical experiments with chemical reactors, we distinguish two different types of \textit{feedback}. With \textbf{episodic feedback} we can be split  the optimization into episodes. At the end of each episode of length $H$, we obtain the whole set of noisy observations. On the other hand, \textbf{instant feedback} is the setting where we obtain a noisy observation immediately after querying the function. \textit{Asynchronous feedback} describes a mix of the previous two, where we obtain observations with unspecific a delay.

\subsection{Expected Utility for Maximizer Identification}\label{sec:util} % \vspace{-0.1cm}
 In section \ref{sec: maximum_identification} we introduced the utility for maximum identification. Using the same simplifying assumption  (finite rank approximation of GPs in Sec.\ \ref{sec: approx_of_gps}, Eq.\ \eqref{eq:utility-fiesz-kenrel}), we can show that the expected utility $\mathcal{U}$ can be rewritten in terms of the state-action distribution induced by $\bX_{\text{new}}$:
\begin{eqnarray}\label{eq: fiez_bo_objective}
      \mathcal{U}(d_\pi) = \max_{z, z' \in \mathcal{Z}} || \Phi(z) - \Phi(z')||^2_{\bV(d_\pi)^{-1}}  
\end{eqnarray}
where $\bV(d_\pi) = \left( \sum_{x,a \in \mX \times \mA} \frac{d_\pi(x,a)\Phi(x,a) \Phi(x,a)^\top}{\sigma^2(x, a)} + \frac{1}{TH}\bI \right)$.
The variable $d_\pi(x,a)$ is a state-action visitation, $\Phi(x)$ are e.g. Nyström features of the GP. We prove that the function is additive in terms of state-action pairs in Lemma \ref{app:additivity} in Appendix \ref{app:proof}, a condition required for the expression as a function of state-action visitations \citep{mutny2023active}. Additionally, by rewriting the objective in this form, the dependence and convexity with respect to the state-action density $d_\pi$ becomes clear as it is only composition of a linear function with an inverse operator. Also, notice that the constraint set is a convex polytope. Therefore we are able to use convex optimization to solve the planning problem (see Sec. \ref{sec: planning}).

\textbf{Set of potential maximizers $\mZ$.}  \looseness -1  % \vspace{-0.1cm}
The definition of the objective requires the use of a set of maximizers. In the ideal case, we can say a particular input $x$, is not the optimum if there exists $x'$ such that $f(x') > f(x)$ with high confidence. We formalize this using the GP credible sets (Bayesian confidence sets) and define:
\begin{equation}
    \mathcal{Z}_t = \{x \in \mathcal{X} : \text{UCB}(f(x)|\bX_t) \geq \sup_{x' \in \mathcal{X}} \text{LCB}(f(x')|\bX_t)\}
\end{equation}
where UCB and LCB correspond to the upper and lower confidence bounds of the GP surrogate with a user specified confidence level defined via the posterior GP with data up to $\bX_t$. 

\subsection{Discrete vs Continuous MDPs.}\looseness -1  % \vspace{-0.1cm}
Until this point, our formulation focused on discrete $\mathcal{S}$ and $\mathcal{A}$ for ease of exposition. However, the framework is compatible with continuous state-action spaces. The probabilistic reformulation of the objective in Eq.\ \eqref{eq: relaxation} is possible irrespective of whether  $\mX$ (or $\mA$) is a discrete or continuous subset of $\mR^d$. In fact, the convexity of the objective in the space of distributions is still maintained. The difference is that the visitations $d$ are no longer probability mass functions but have to be expressed as probability density functions $d_c(x,a)$. To recover probabilities in the definition of $\bV$, we need to replace sums with integrals i.e. $\sum_{x \in \mathcal{X}, a\in \mA} d(x) \frac{\Phi(x,a) \Phi(x,a)^\top}{\sigma(x,a)^2} \rightarrow \int_{x \in \mathcal{X}, a \in \mA} d_c(x,a) \frac{\Phi(x,a) \Phi(x,a)^\top}{\sigma(x,a)^2}$. 

In the Eq.\ \eqref{eq: fiez_bo_objective} we need to approximate a maximum over all input pairs in $\mathcal{Z}$. While this can be enumerated in the discrete case without issues, it poses a non-trivial constrained optimization problem when $\mX$ is continuous. As an alternative, we propose approximating the set $\mathcal{Z}$ using a finite approximation of size $K$ which can be built using Thompson Sampling \citep{thompson1933likelihood, kandasamy2018parallelised} or through maximization of different UCBs for higher exploitation (see Appendix \ref{app: approximating_Z_batchBO}). In Appendix \ref{app: ablation}, we numerically benchmark reasonable choices of $K$, and show that the performance is not significantly affected by them.

\begin{algorithm*}[tb]
   \caption{Transition Constrained BayesOpt via MDPs}
   \label{alg: movement_bo_via_mdps}
    \begin{algorithmic}
       \STATE {\bfseries Input:} Procedure for estimating sets of maximizers, initial point $x_0$, initial set of maximizer candidates $\mathcal{Z}_0$
       \STATE Initialize the empirical state-action distribution $\hat{d}_0 = 0$
       \FOR{$t=0$ {\bfseries to} $T-1$}
       \FOR{$h=0$ {\bfseries to} $H-1$}
       \STATE $\mathcal{U}_{t, h}(d_\pi) \leftarrow \mathcal{U}(d_\pi \oplus \hat{d}_{t, h} | \mathcal{Z}_{t, h}, x_{t, h}) $ \COMMENT{define the objective, see eq. \eqref{eq: fiez_bo_objective}}
       \STATE $ \pi_{t, h} = \argmin_{\pi  : d_{\pi} \in  \mathcal{D}_{t, h}} \mathcal{U}_{t, h} (d_\pi)$ \COMMENT{solve MDP planning problem}
       \STATE $x_{t, h + 1} = \pi_{t, h}(x_{t, h})$ \COMMENT{deploy policy}
       \IF{feedback is immediate}
       \STATE $y_{t, h+1} = f(x_{t, h+1}) + \epsilon_{t, h}$ \COMMENT{obtain observation}
       \STATE $\mathcal{GP}_{t, h}, \ \mathcal{Z}_{t, h} \leftarrow \text{Update}(\bX_{t, h}, \bY_{t, h})$ \COMMENT{update model and maximizer candidate set}
       \ENDIF
       \STATE $\hat{d}_{t, h + 1}(x) \leftarrow \hat{d}_{t, h} \oplus \delta(x_{t, h+1}, x)$ \COMMENT{update empirical state-action distribution, see eq. \eqref{eq: empirical_state_action}}
       \ENDFOR
       \IF{feedback is episodic}
       \STATE $\bY_{t, H} = f(\bX_{t, H}) + \vec{\epsilon}_{t, :}$ \COMMENT{obtain observations} 
       \STATE $\mathcal{GP}_{t+1, :}, \ \mathcal{Z}_{t+1, :} \leftarrow \text{Update}(\bX_{t, H}, \bY_{t, H})$ \COMMENT{update model and maximizer candidate set}
       \ENDIF
       \ENDFOR
       \STATE {\bfseries Return:} Estimate of the maximum using the GP posterior's mean $\hat{x}_* = \argmax_{x \in \mathcal{X}} \mu_T(x)$
    \end{algorithmic}
\end{algorithm*}
% \vspace{-0.1cm}
\subsection{General algorithm and Theory} \looseness -1 % \vspace{-0.1cm}
The general algorithm combines the ideas introduced so far. We present it in Algorithm \ref{alg: movement_bo_via_mdps}. Notice that apart from constructing the current utility, keeping track of the visited states and updating our GP model, an essential step is \emph{planning}, where we need to find a policy that maximizes the utility. As this forms the core challenge of the algorithm, we devote Sec.\ \ref{sec: planning} to it. In short, it solves a sequence of dynamic programming problems defined by the steps of the Frank-Wolfe algorithm. From a theoretical point of view, under the assumption of episodic feedback, the algorithm provably minimizes the utility as we show in Proposition \ref{prop :theory} in Appendix \ref{app :theory}.

\section{Solving the planning problem} \label{sec: planning} \looseness -1 
The planning problem, defined as $\min_{d_\pi \in \mD} \mathcal{U} (d_\pi)$, can be thought of as analogous to optimizing an acquisition function in traditional BayesOpt, with the added difficulty of doing it in the space of policies. See the bottom half of Figure \ref{fig: visual_abstract} in Appendix \ref{app: visual_abstract} for a breakdown of the different components of our solution. Following developments in \citet{Hazan2019} and \citet{mutny2023active}, we use the classical Frank-Wolfe algorithm \citep{Jaggi2013}. It proceeds by decomposing the problem into a series of linear optimization sub-problems. Each linearization results in a policy, and we build a mixture policy consisting of optimal policies for each linearization $\pi_{\mix, n} = \{(\alpha_i, \pi_i) \}_{i = 1}^n$, and $\alpha_i$ step-sizes of Frank-Wolfe. Conveniently, after the linearization of $\mathcal{U}$ the subproblem on the polytope $\mD$ corresponds to an RL problem with reward $\nabla \mathcal{U}$ for which many efficient solvers exist. Namely, for a single mixture component we have,  
\begin{equation} \label{eq: frank-wolfe-sub-problem}
    d_{\pi_{n+1}} = \argmin_{d \in  \mathcal{D}} \sum_{x, a, h} \nabla \mathcal{U} (d_{\pi_{\mix, n}})(x, a) d_h(x, a).
\end{equation}
Due to convexity, the state-action distribution follows the convex combination, $d_{\pi_{\mix, n}} = \sum_{i = 1}^n \alpha_i d_{\pi_i}$. The optimization produces a Markovian policy due to the subproblem in Eq. \eqref{eq: frank-wolfe-sub-problem} being optimized by one. We now detail how to construct a non-Markovian policies by adaptive resampling.%, splitting the treatment into discrete and continuous MDPs. 

\subsection{Adaptive Resampling: Non-Markovian policies.}  %\looseness -1 \vspace{-0.1cm}
A core contribution of our paper is receding horizon re-planning. This means that we keep track of the past states visited in the \textit{current} and \emph{past} trajectories and adjust the policy at every step $h$  of the horizon $H$ in each trajectory indexed by $t$. At $h$, we construct a Markov policy for a reward that depends on all past visited states. This makes the resulting policy history dependent. While in episode $t$ and time-point $h$ we follow a Markov policy for a single step, the overall policy is a history-dependent non-Markov policy.

We define the empirical state-action visitation distribution, 
\begin{equation}
    \hat{d}_{t,h} = \frac{1}{tH + h }\underbrace{( \sum_{j=1}^{t}\sum_{x,a\in \tau_j} \delta_{x,a}}_{\text{visited states in past trajectories}}  + \underbrace{\sum_{x,a \in \tau_t|_h}\delta_{x,a})}_{\text{states at ep. $t$ up to $h$}} \label{eq: empirical_state_action}
\end{equation}
where $\delta_{x, a}$ denotes a delta mass at state-action $(x, a)$.
% Furthermore, the method allows us to improve sample efficiency by adapting our policies to consider previous observations, by considering the total density induced by previous observations plus our new planned future distribution:
Instead of solving the objective $\mathcal{U}(d)$ as in Eq.\ \eqref{eq: frank-wolfe-sub-problem}, we seek to find a correction to the empirical distribution by minimizing,
\begin{equation}
    \mathcal{U}_{t,h}(d) = \mathcal{U}\left(\frac{1}{H}\left(\frac{H-h}{1 + t} d + \frac{tH + h}{1 + t} \hat{d}_{t,h} \right)\right).
\end{equation}
We use the same Frank-Wolfe machinery to optimize this objective:
 $\label{eq: planning_problem_direct}
    d_{\pi_{t,h}} = \argmin_{d_\pi \in \tilde{\mathcal{D}}} \mathcal{U}_{t,h}(d_\pi). $
The distribution $d_{\pi_{t,h}}$ represents the density of the policy to be deployed at trajectory $t$ and horizon counter $h$. We now need to solve multiple ($n$ due to FW) RL problems at each horizon counter $h$. Despite this, for discrete MDPs, the sub-problem can be solved extremely efficiently to exactness using dynamic programming. As can be seen in Appendix \ref{sec: computational_study}, our solving times are just a few seconds, even if planning for very long horizons. The resulting policy $\pi$ can be found by marginalization $\pi_h(a|x) = d_{\pi,h}(x,a)/\sum_a d_{\pi,h}(x,a)$, a basic property of MDPs \citep{Puterman2014}.% With this, it is also trivial to incorporate any terminal state constraints.

% \begin{align}
%     d_{\pi_*} &= \argmin_{d_\pi \in \mathcal{D}_t} F(d_\pi + \hat{d}_{t-1} | \mathcal{Z}_{t-1}, x_{t-1}) \nonumber \\
%    % &= \argmin_{d_\pi \in \mathcal{D}_t} F_t(d_\pi) \label{eq: planning_problem_direct}
% \end{align}
% \vspace{-0.1cm}
\subsection{Continuous MDPs: Model Predictive Control} \looseness -1 \label{sec:continuous_mdps}% \vspace{-0.1cm}
With continuous search space, the sub-problem can be solved using continuous RL solvers. However, this can be difficult. The intractable part of the problem is that the distribution $d_\pi$ needs to be represented in a computable fashion. We represent the distribution by the sequence of actions taken $\{a_h\}_{h=1}^T$ with the linear state-space model, $x_{h+1} = Ax_h + Ba_h $. While this formalism is not as general as it could be, it gives us a tractable sub-problem formulation common to control science scenario \citep{rawlings2017model} that is practical for our experiments and captures a vast array of problems. The optimal set of actions is solved with the following problem, where we state it for the full horizon $H$:
% \vspace{-0.15cm}
\begin{align}
    & \argmin_{a_0, ..., a_{H}} \sum_{h = 0}^H \nabla \mathcal{U}_{t,0} (d_{\pi_{\mix, t}})\left(x_h, a_h \right) \label{eq:linear_cts_subproblem},
\end{align}
such that $||a_h|| \leq a_{\max}, x_h \in  \mathcal{X}, \text{ and } \ x_{h+1} = Ax_h + Ba_h$, where the \emph{known} dynamics serves as constraints. Notice that instead of optimizing over the policy $d_\pi$, we directly optimize over the parameterizations of the policy $\{a_h\}_{h=1}^H$. In fact, this formulation is reminiscent of the model predictive control (MPC) optimization problem.  Conceptually, these are the same. The only caveat in our case is that unlike in MPC \citep{MPCbook}, our objective is non-convex and tends to focus on gathering information rather than stability. Due to the non-convexity in this parameterization, we need to solve it heuristically.
We identify a number of useful heuristics to solve this problem in Appendix \ref{app: practical}.
\section{Experiments}

 Sections \ref{sec: knorr_pyrazole_synthesis} -- \ref{sec: electron_laser} showcase real-world applications under physical transitions constraints, using the discrete version of the algorithm. Section \ref{sec: synthetic_benchmarks} benchmarks against other algorithms in the continuous setting, where we consider the additive transition model of Section \ref{sec:continuous_mdps} with $A = B = \bI$. We include additional results in Appendix \ref{app:extra-results}. For each benchmark, we selected reasonable GP hyper-parameters and fixed them during the optimization. These are summarized in Appendix \ref{app: benchmark_details}. As we are interested maximizer identification, in discrete problems, we report the proportion of reruns that succeed at identifying the true maximum. For continuous benchmarks, we report inference regret at each iteration: $\text{Regret}_t = f(x_*) - f(x_{\mu, t})$, where $x_{\mu, t} = \argmax_{x \in \mathcal{X}} \mu_t(x)$. All statistics reported are over 25 different runs.

%\vspace{-0.1cm}
\textbf{Baselines.} We include a naive baseline that greedily optimizes the immediate reward to showcase a method with no planning (Greedy-UCB). Likewise, we create a baseline that replaces the gradient in Eq.\ \eqref{eq: frank-wolfe-sub-problem} with Expected Improvement \citep{saltenis1971ei} (MDP-EI), a weak version of planning. In the continuous settings, we compare against truncated SnAKe (TrSnaKe) \citep{folch2023practicalsnake}, which minimizes movement distance, and against local search region-constrained BayesOpt or LSR \citep{paulson2023lsr} for the same task. We compare two variants for approximating the set of maximizers, one using Thompson Sampling (MDP-BO-TS) and one using Upper Confidence Bound (MDP-BO-UCB).

% \ For discrete examples, we consider end-point and monotonicity constraints, which, to our best knowledge, have never been considered in BayesOpt. Therefore, we create a baseline that replaces the gradient in Eq.\ \eqref{eq: frank-wolfe-sub-problem} with Expected Improvement (MDP-EI). We also consider a naive baseline that greedily optimizes the immediate reward to showcase a method with no planning (Greedy-UCB). In the synthetic continuous benchmarks, we use a single mixture component for our algorithm, and for the maximization set we consider Thompson Sampling (MDP-BO-TS) and Upper Confidence Bound (MDP-BO-UCB). We compare against truncated SnAKe (TrSnaKe) \citep{folch2023practicalsnake}, which minimizes movement distance, and against local search region-constrained BayesOpt or LSR \citep{paulson2023lsr}.
% \vspace{-0.5cm}
\subsection{Knorr pyrazole synthesis} \looseness -1 \label{sec: knorr_pyrazole_synthesis}

\begin{figure*}
    \centering
    \includegraphics[width = \textwidth]{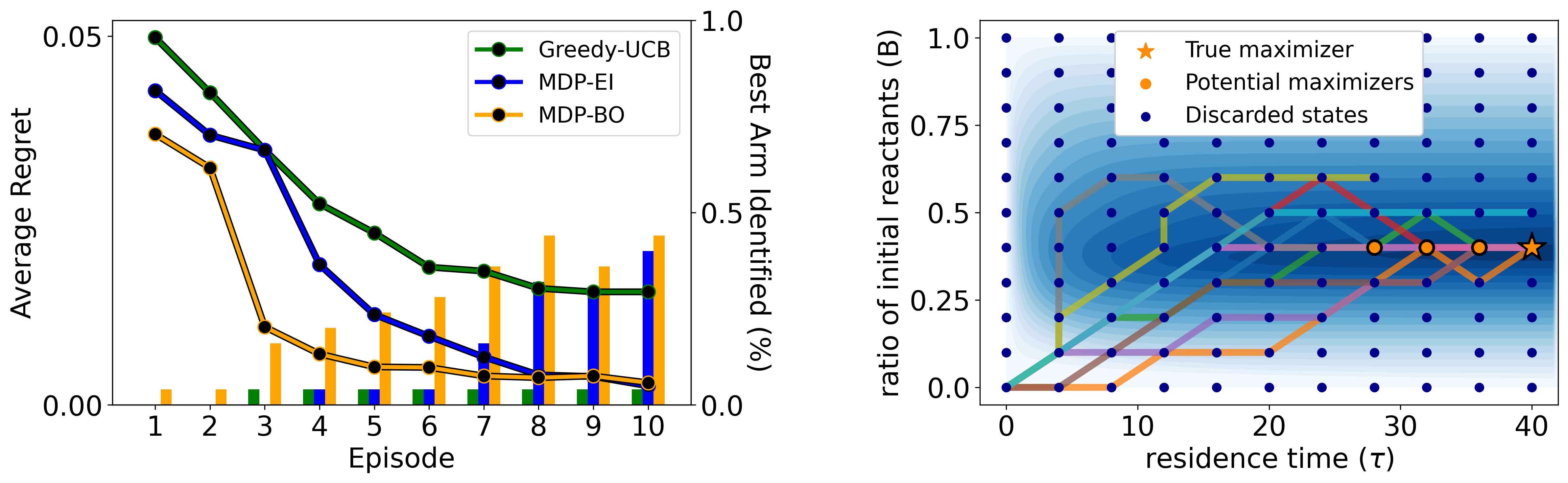}
    \caption{The Knorr pyrazole synthesis experiment. On the left, we show the quantitative results. The line plots denote the best prediction regret, while the bar charts denote the percentage of runs that correctly identify the best arm at the end of each episode. On the right, we show ten paths in different colours chosen by the algorithm. The underlying black-box function is shown as the contours, and we can see the discretization as dots. We can see four remaining potential maximizers (in orange), which includes the true one (star). \textit{Notice all paths are non-decreasing in residence time, following the transition constraints.}}
    \label{fig: knorr_joint_expereiment}
\end{figure*}

\begin{figure*}[!ht]
	\centering
	\begin{subfigure}[t]{0.45\textwidth}
	\includegraphics[width = \textwidth]{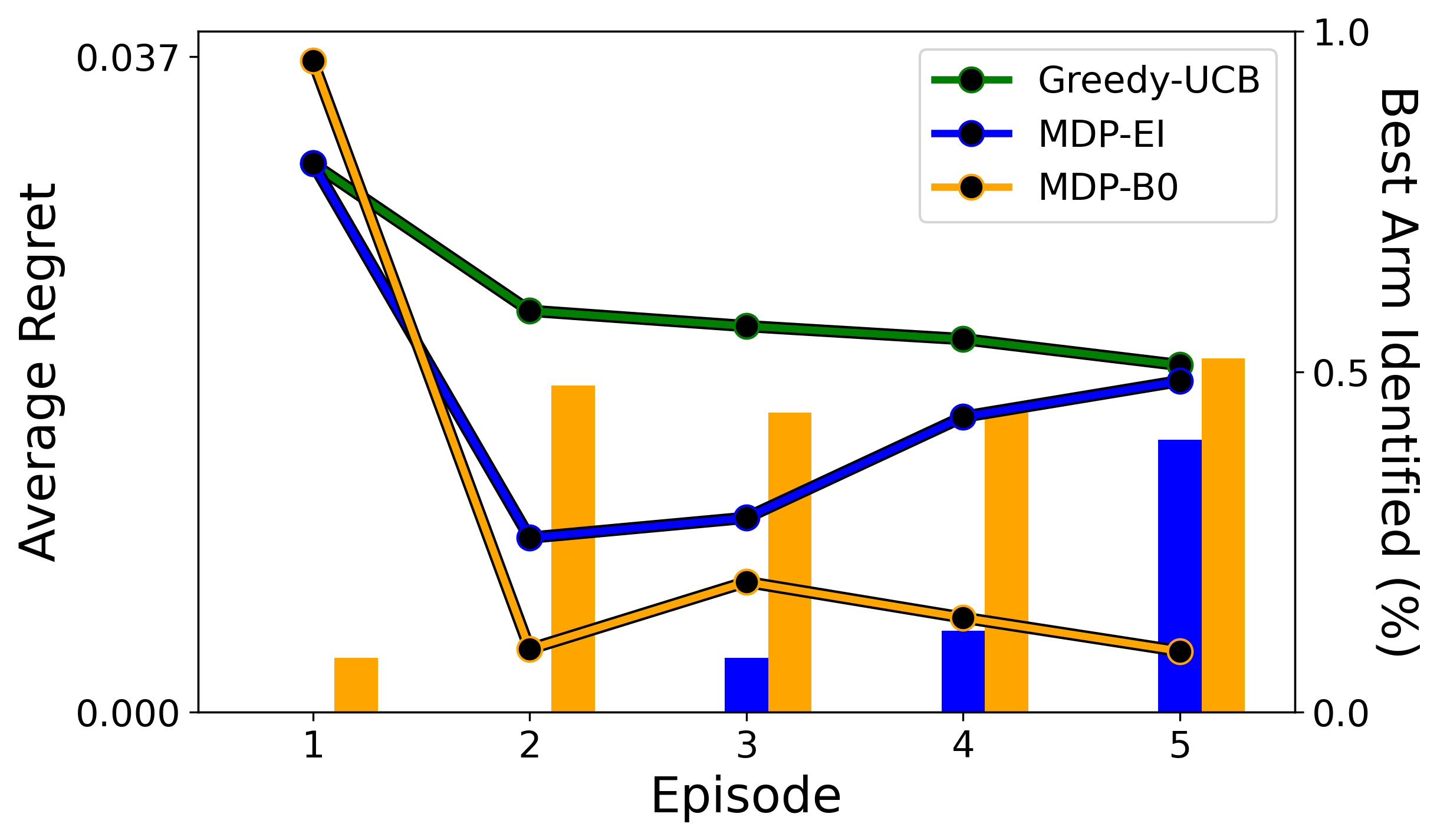}
	\caption{Monitoring Lake Ypacarai.}
    \label{fig: ypacarai_results}
	\end{subfigure}
	\hfill
	\begin{subfigure}[t]{0.45\textwidth}
	\includegraphics[width = \textwidth]{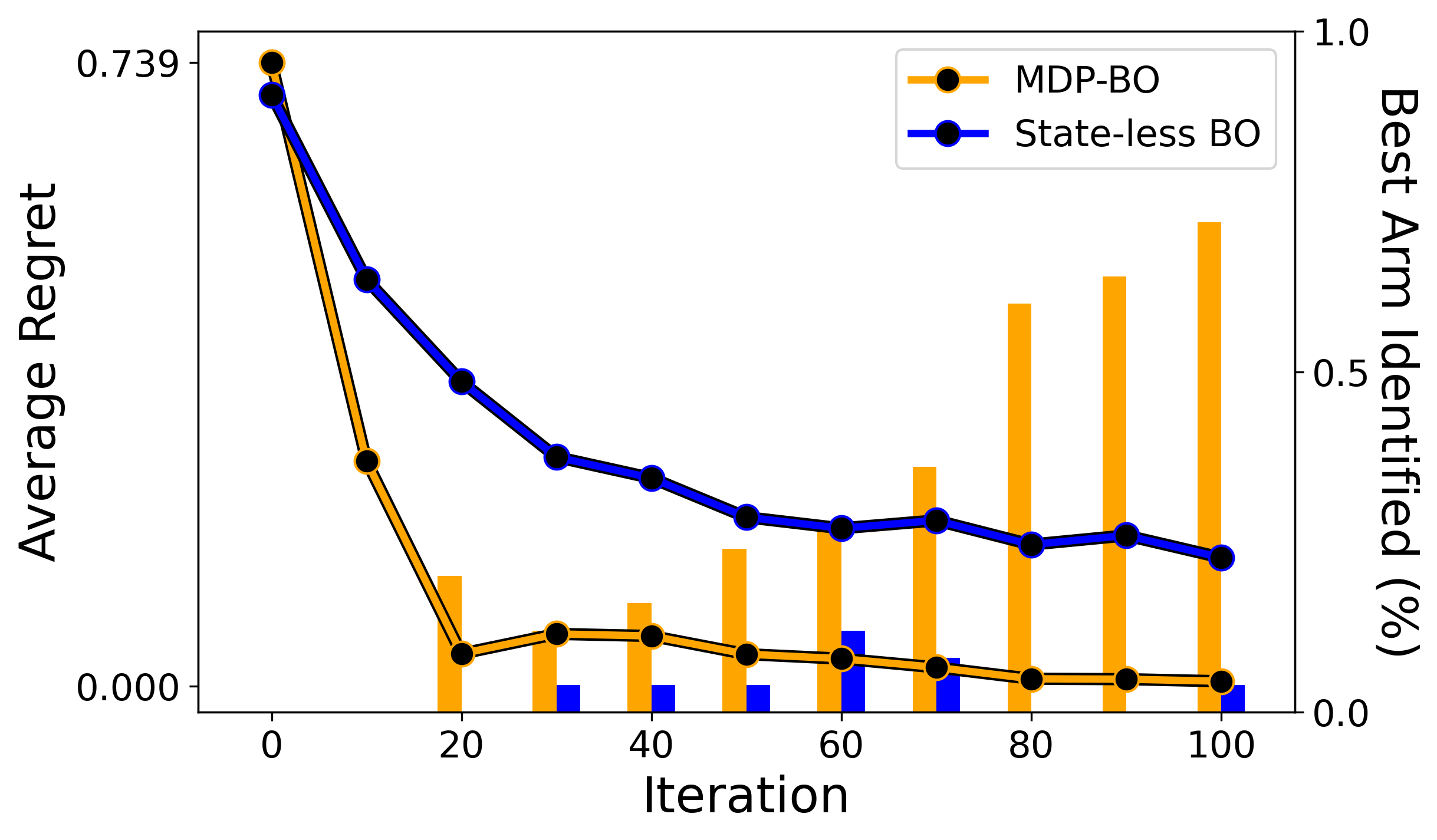}
	\caption{Free-electron laser tuning.}
    \label{fig: swissfel}
	\end{subfigure}
	\hfill
    \caption{Results for Ypacarai and free electron-laser tuning experiments. On the left, the line plots denote the best prediction regret, while the bar charts denote the percentage of runs that correctly identify the best arm at the end of each episode. On the right, We plot the regret and compare against standard BO without accounting for movement-dependent noise.}
	\label{fig: ypacarai_swissfel_results}
\end{figure*}

\begin{figure*}[!ht]
	\centering
	\begin{subfigure}[t]{0.32\textwidth}
	\includegraphics[width = \textwidth]{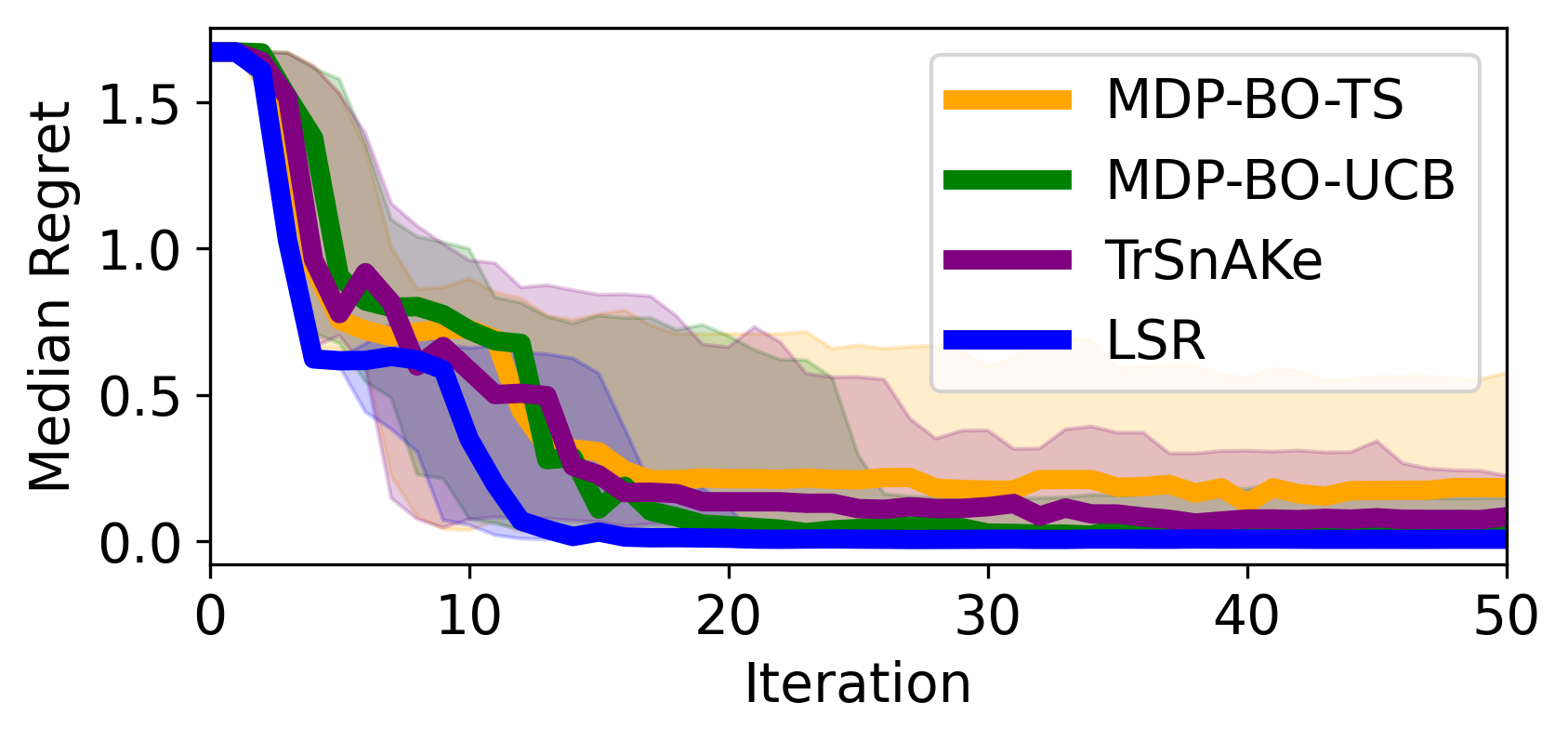}
	\caption{Michalewicz 3D. (synch)}
	\end{subfigure}
	\hfill
	\begin{subfigure}[t]{0.32\textwidth}
	\includegraphics[width = \textwidth]{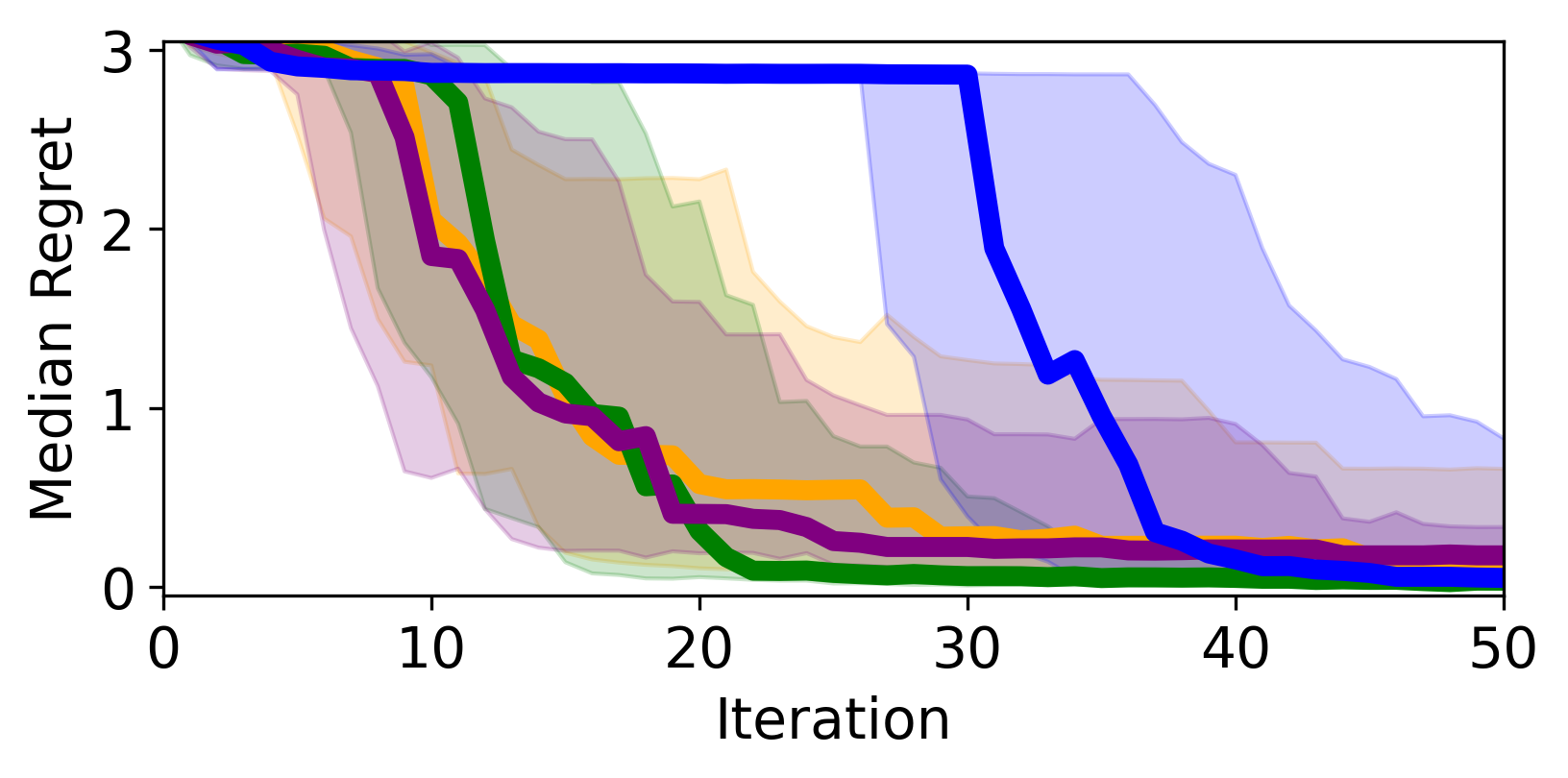}
	\caption{Hartmann 3D. (synch)}
	\end{subfigure}
	\hfill
	\begin{subfigure}[t]{0.32\textwidth}
	\includegraphics[width = \textwidth]{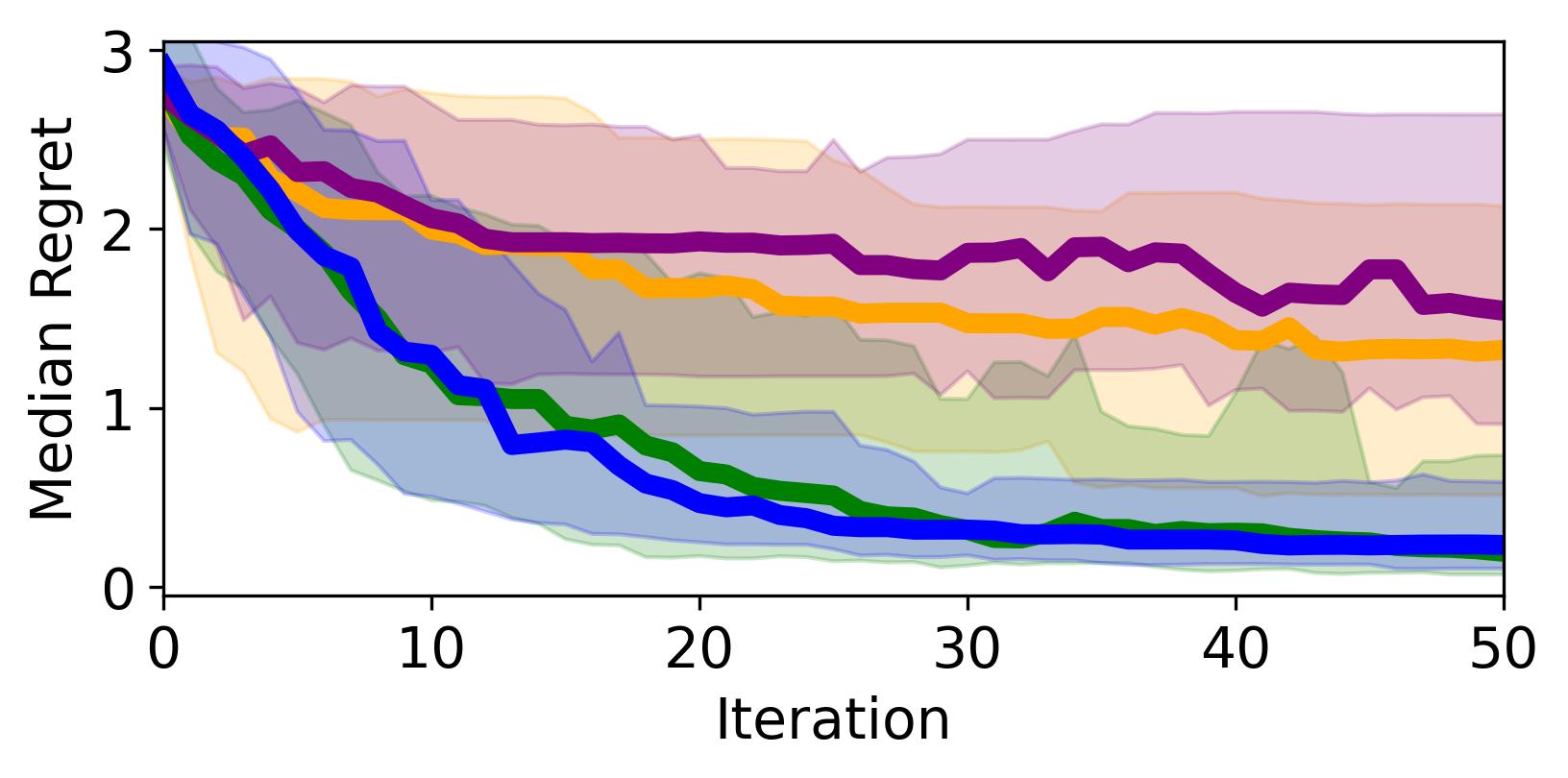}
	\caption{Hartmann 6D. (synch)}
	\end{subfigure}
    \begin{subfigure}[t]{0.32\textwidth}
	\includegraphics[width = \textwidth]{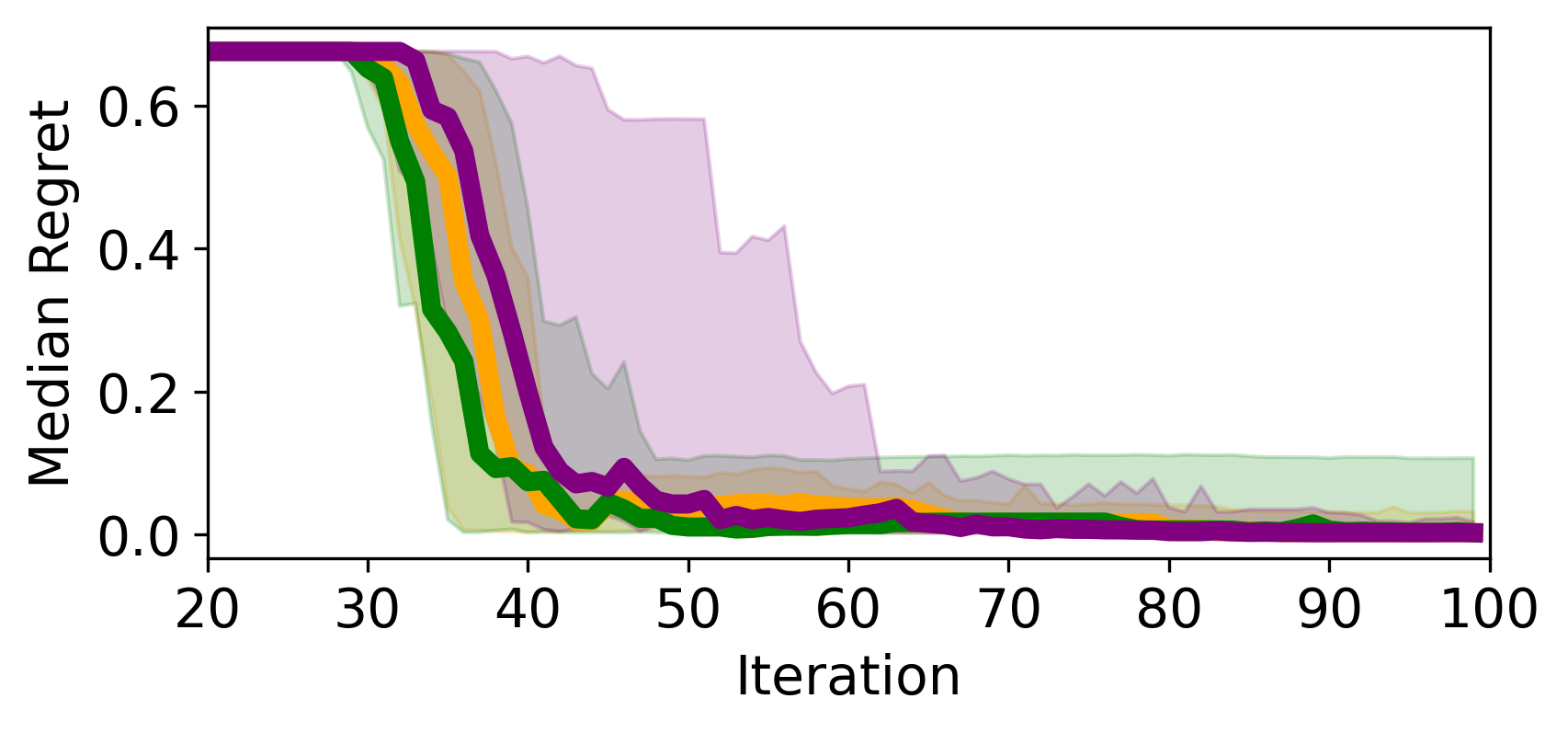}
	\caption{Michalewicz 2D. (asynch)}
	\end{subfigure}
	\hfill
	\begin{subfigure}[t]{0.32\textwidth}
	\includegraphics[width = \textwidth]{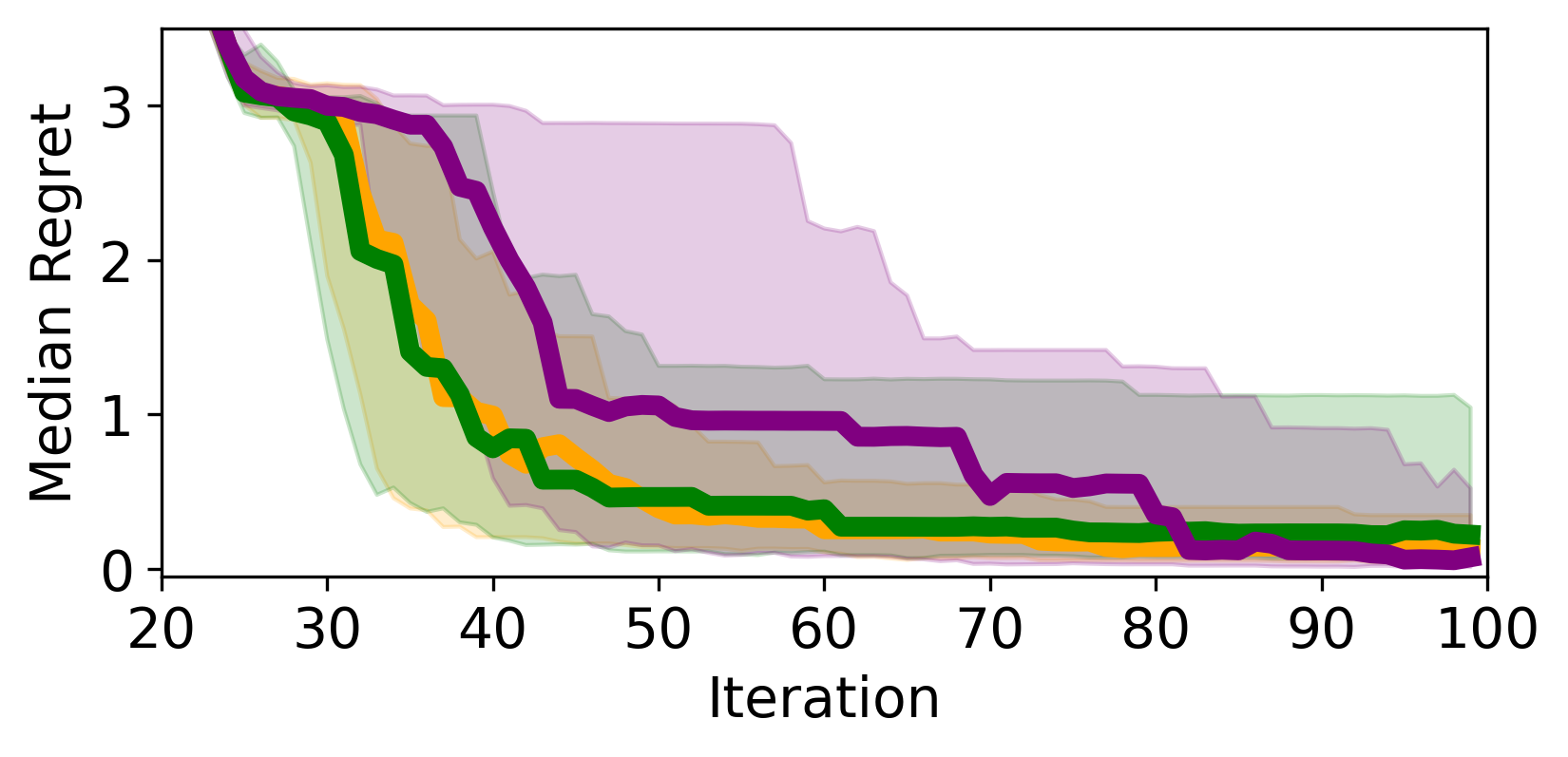}
	\caption{Hartmann 3D. (asynch)}
	\end{subfigure}
	\hfill
	\begin{subfigure}[t]{0.32\textwidth}
	\includegraphics[width = \textwidth]{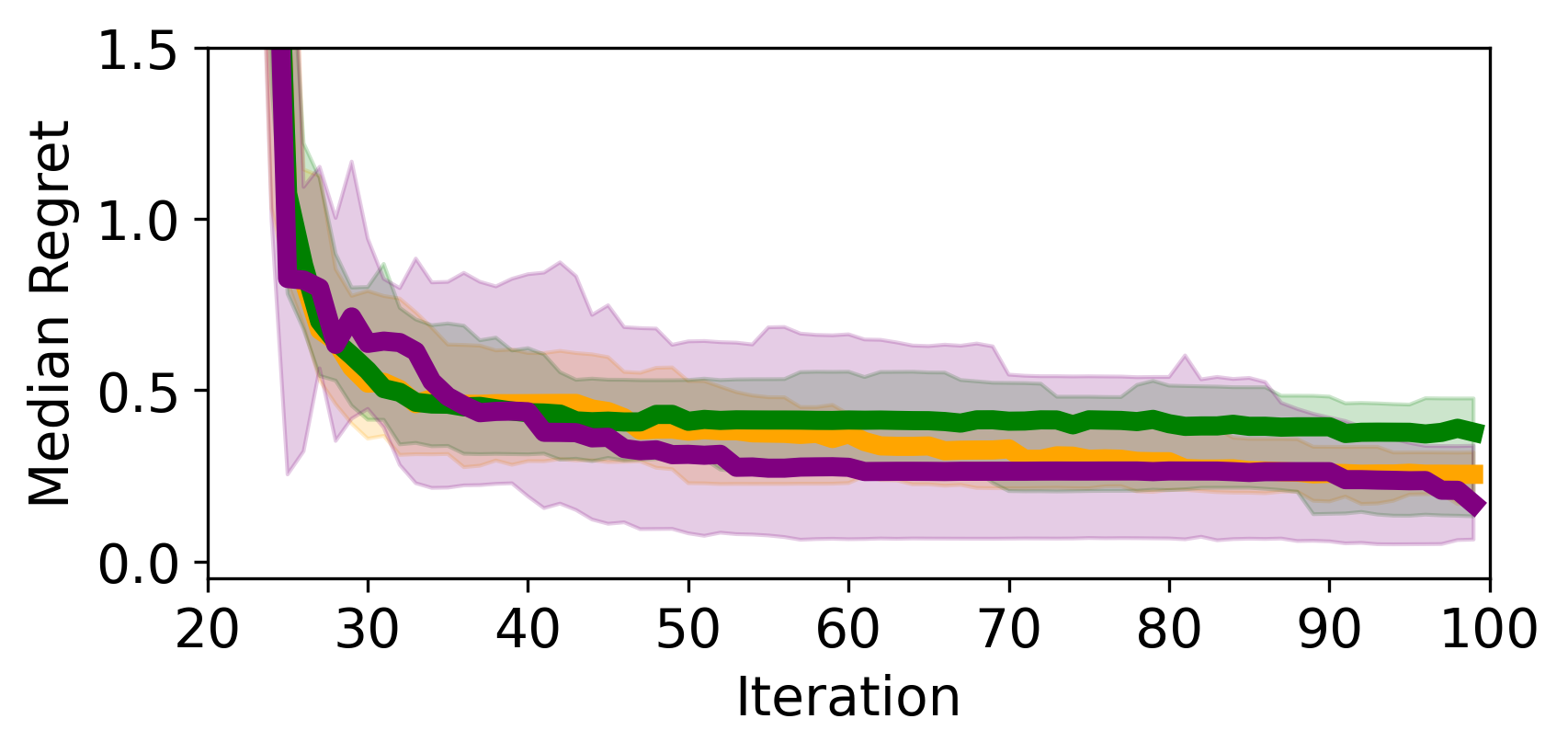}
	\caption{SnAr 4D. (asynch)}
	\end{subfigure}
	\caption{Results of experiments on the asynchronous and synchronous benchmarks. We plot the median predictive regret and the 10\% and 90\% quantiles. For the asynchronous experiments, we can see that the paths taken by MDP-BO-TS are more consistent, and the final performance is comparable to TrSnAKe. While in the asynchronous setting, we found creating the maximization set using Thompson Sampling gave a stronger performance, in the synchronous setting, UCB is preferred. LSR gives a very strong performance, comparable to MDP-BO-UCB in almost all benchmarks.}
	\label{fig: asynch_bench_main}
\end{figure*}

Our chemical reactor benchmark synthetizes Knorr pyrzole in a transient flow reactor. In this experiment, we can control the flow-rate (residence time) $\tau$ and ratio of reactants $B$ in the reactor. We observe product concentration at discrete time intervals and we can also change inputs at these intervals. Our goal is to find the best parameters of the reaction subject to natural movement constraints on $B$, and $\tau$. In addition, we assume \emph{decreasing} the flow rate of a reactor can be easily achieved. However, \textit{increasing} the flow rate can lead to inaccurate readings \citep{schrecker2024comparative}. A lower flow rate leads to higher residence time, so we impose that $\tau$ must be non-decreasing. 

% We seek to optimize the final product concentration of the reaction by controlling two variables: (i) the ratio between the initial reactants, $B$, and (ii) the residence time of the reaction, $\tau$.

%\vspace{-0.1cm}
\textbf{The kernel.} %\looseness -1 
\citet{schrecker2023discovery} indicate the reaction can be approximately represented by simple kinetics via a differential equation model. We use this information along with techniques for representing linear ODE as constraints in GP fitting \citep{mutny2022experimental, besginow2022constraining} to create an approximate ODE kernel $k_{ode}$ through the featurization:
\begin{equation*}
    \Phi_{ode}(\tau, B) = (1 - \mathcal{S}(B)) y^{(1)}(\tau, B) + \mathcal{S}(B) y^{(2)}(\tau, B)
\end{equation*}
where $y^{(i)}(\tau, B)$ are equal to:
\begin{equation*}
     \gamma_i(B) \left(\frac{\lambda^{(i)}_2}{\lambda^{(i)}_1 - \lambda^{(i)}_2} e^{\lambda^{(i)}_1 \tau} - \frac{\lambda^{(i)}_1}{\lambda^{(i)}_1 - \lambda^{(i)}_2} e^{\lambda^{(i)}_2 \tau} + 1 \right)
\end{equation*}
for $i = 1, 2$, where $\lambda^{(i)}_1$ and $\lambda^{(i)}_2$ are eigenvalues of the linearized ODE at different stationary points, $\gamma_1(B) = B$, $\gamma_2(B) = 1 - B$, and $\mathcal{S}(x) := (1 + e^{-\alpha_{sig}(x - 0.5)})^{-1}$ is a sigmoid function. Appendix \ref{sec: appendix_derivations_ode} holds the details and derivations which may be of independent interest. As the above kernel is only an approximation of the true ODE kernel, which itself is imperfect, we must account for the model mismatch. Therefore, we add a squared exponential term to the kernel to ensure a non-parametric correction, i.e.: $k(\tau, B) = \alpha_{ode} k_{ode}(\tau, B) + \alpha_{rbf}(\tau, B)$.

%It has been shown how to restrict GPs to systems of linear ODEs \citep{mutny2022experimental, besginow2022constraining}. 
%The ODEs in the reaction kinetics are nonlinear, but we can linearize them and build an embedding whose inner product defines a kernel:
%\begin{equation*}
%     \Phi_{ode}(\tau, B) = (1 - \mathcal{S}(B)) y^{(1)}(\tau, B) + \mathcal{S}(B) y^{(2)}(\tau, B)
% \end{equation*}
% where $y^{(i)}(\tau, B)$ are equal to:
% \begin{equation*}
%      \gamma_i(B) \left(\frac{\lambda^{(i)}_2}{\lambda^{(i)}_1 - \lambda^{(i)}_2} e^{\lambda^{(i)}_1 \tau} - \frac{\lambda^{(i)}_1}{\lambda^{(i)}_1 - \lambda^{(i)}_2} e^{\lambda^{(i)}_2 \tau} + 1 \right)
% \end{equation*}
% for $i = 1, 2$, where $\lambda^{(i)}_1$ and $\lambda^{(i)}_2$ are eigenvalues of the linearized ODE at different stationary points, $\gamma_1(B) = B$, $\gamma_2(B) = 1 - B$, and $\mathcal{S}(x) := (1 + e^{-\alpha_{sig}(x - 0.5)})^{-1}$ is a sigmoid function. Appendix \ref{sec: appendix_derivations_ode} holds the details and derivations.
%Using $\Phi_{ode}$ it is then trivial to obtain a kernel which gives us GP samples approximately satisfying the ODE, $k_{ode}$.

%  \vspace{-0.5cm} 
% \textbf{Constraints.} \looseness -1  We deploy our method under the following constraints, determined by the flow reactor:
% \begin{itemize}
%     %\item[(I)] We cannot change either $B$ or $\tau$ by more than 0.1. %This is due to reactor limitations.
%     \item[(II)] 
% \end{itemize}
We report the examples of the trajectories in the search space in Figure \ref{fig: knorr_joint_expereiment}. Notice that all satisfy the transition constraints. The paths are not space-filling and avoid sub-optimal areas because of our choice of non-isotropic kernel based on the ODE considerations. We run the experiment with episodic feedback, for $10$ episodes of length $10$ each, starting each episode with $(\tau_R, B) = (0, 0)$. Figure \ref{fig: knorr_joint_expereiment} reports quantitative results and shows that the best-performing algorithm is MDP-BO. 

% \vspace{-0.15cm}
\subsection{Monitoring Lake Ypacarai} \looseness - 1 \label{sec: constrained_ypacarai}
\citet{samaniego2021bayesian} investigated automatic monitoring of Lake Ypacarai, and \citet{folch2022snake} and \citet{yang2023mongoose} benchmarked different BayesOpt algorithms for the task of finding the largest contamination source in the lake. We introduce local transition constraints to this benchmark by creating the lake containing obstacles that limit movement (see Figure \ref{fig: ypacarai} in the Appendix). Such obstacles in environmental monitoring may include islands or protected areas for animals. We add an initial and final state constraint with the goal of modeling that the boat has to finish at a maintenance port.

We focus on \textit{episodic} feedback, where each episode consists of 50 iterations. Results can be seen in Figure \ref{fig: ypacarai_results}. MDP-EI struggles to identify the maximum contamination for the first few episodes. On the other hand, our method correctly identifies the maximum in approximately 50\% of the runs by episode two and achieves better regret.
% \vspace{-0.1cm}
\subsection{Free-electron laser: Transition-driven corruption} \label{sec: electron_laser} \looseness -1
Apart from hard constraints, we can apply our framework to state-dependent BayesOpt problems involving transitions. For example, the magnitude of noise $\epsilon$ may depend on the transition. This occurs in systems observing equilibration constraints such as a free-electron laser \citep{Kirchner2022}. Using the simplified simulator of this laser \citep{Mutny2020}, we use our framework to model heteroscedastic noise depending on the difference between the current and next state, $\sigma^2(x, x') = s(1 + w||x-x'||_2)$. By choosing $\mA = \mX$, we rewrite the problem as $\sigma(s,a) = s(1 + w||x-a||_2)$. The larger the move, the more noisy the observation. This creates a problem, where the BayesOpt needs to balance between informative actions and movement, which can be directly implemented in the objective \eqref{eq: fiez_bo_objective} via the matrix $\bV(d_\pi) = \sum_{x,a \in \mX} d_\pi(x,a) \frac{1}{\sigma^2(x,a)} \Phi(x)\Phi(x)^\top + \frac{1}{TH}\bI$. Figure \ref{fig: swissfel} reports the comparison between worst-case stateless BO and our algorithm. Our approach substantially improves performance. 

% \vspace{-0.25cm}
\subsection{Synthetic Benchmarks} \label{sec: synthetic_benchmarks} \looseness -1 
We benchmark on a variety of classical BayesOpt problems while imposing local movement constraints and considering both immediate and asynchronous feedback (by introducing an observation delay of 25 iterations). We also include the chemistry SnAr benchmark, from Summit \citep{felton2021summit}, which we treat as asynchronous as per \citet{folch2022snake}. Results are in Figure \ref{fig: asynch_bench_main}. In the synchronous setting, we found using the UCB maximizer criteria for MDP-BO yields the best results (c.f. Appendix for details of this variant). We also found that LSR performs very competitively on many benchmarks, frequently matching the performance of MDP-BO. In the asynchronous settings we achieved better results using MDP-BO with Thompson sampling. TrSnAKe baseline appears to be competitive in all synthetic benchmarks as well. However, MDP-BO is more robust having less variance in the chosen paths as seen in the quantiles. It is important to highlight that SnAKe and LSR are specialist heuristic algorithms for local box-constraints, and therefore it is not surprising they perform strongly. Our method can be applied to more general settings and therefore it is very encouraging that MDP-BO is able to match these SOTA algorithms in their specialist domain.

% \vspace{-0.1cm}
% \vspace{-0.15cm}
\section{Conclusion} \looseness -1 
We considered transition-constrained BayesOpt problems arising in physical sciences, such as chemical reactor optimization, that require careful planning to reach any system configuration. Focusing on maximizer identification, we formulated the problem with transition constraints using the framework of Markov decision processes and constructed a tractable algorithm for provably and efficiently solving these problems using dynamic programming or model predictive control sub-routines. We showcased strong empirical performance in a large variety of problems with physical transitions, and achieve state-of-the-art results in classical BayesOpt benchmarks under local movement constraints. This work takes an important step towards the larger application of Bayesian Optimization to real-world problems. Further work could address the continuous variant of the framework to deal with more general transition dynamics, or explore the performance of new objective functions.

\section*{Acknowledgments}

JPF is funded by EPSRC through the Modern Statistics and Statistical Machine Learning (StatML) CDT (grant no. EP/S023151/1) and by BASF SE, Ludwigshafen am Rhein. RM acknowledges support from the BASF / Royal Academy of Engineering Research Chair in Data-Driven Optimisation. This publication was created as part of NCCR Catalysis (grant number
180544), a National Centre of Competence in Research funded by the
Swiss National Science Foundation, and was partially supported by the
European Research Council (ERC) under the European Union’s Horizon
2020 research and Innovation Program Grant agreement no. 815943. We would also like to thank Linden Schrecker, Ruby Sedgwick, and Daniel Lengyel for providing valuable feedback on the project.

\clearpage

\bibliography{references}
\bibliographystyle{unsrtnat}

\clearpage

%%%%%%%%%%%%%%%%%%%%%%%%%%%%%%%%%%%%%%%%%%%%%%%%%%%%%%%%%%%%%%%%%%%%%%%%%%%%%%%
%%%%%%%%%%%%%%%%%%%%%%%%%%%%%%%%%%%%%%%%%%%%%%%%%%%%%%%%%%%%%%%%%%%%%%%%%%%%%%%
% APPENDIX
%%%%%%%%%%%%%%%%%%%%%%%%%%%%%%%%%%%%%%%%%%%%%%%%%%%%%%%%%%%%%%%%%%%%%%%%%%%%%%%
%%%%%%%%%%%%%%%%%%%%%%%%%%%%%%%%%%%%%%%%%%%%%%%%%%%%%%%%%%%%%%%%%%%%%%%%%%%%%%%
\newpage
\appendix
\onecolumn
\section{Visual abstract of the algorithm} \label{app: visual_abstract}
\vspace{-0.3cm}

In Figure \ref{fig: visual_abstract} we summarize how our algorithm creates non-Markovian policies for maximizer identification and the corresponding connections to other works in the literature.

\begin{figure}[ht]
\centering
\begin{tikzpicture}[scale=0.6]
  % Making the bottom rectangle two units larger in height
  \draw[black, very thick, fill=blue!10] (-0.1, 0.1) rectangle ++(25, 12);  % outermost rectangle (back)
  \draw[black, very thick, fill=blue!10] (-0.1, 13) rectangle ++(25, 10);  % second rectangle (moved up)
  \draw[black, very thick, fill=gray!10] (-0.1, 24) rectangle ++(25, 2);  % second rectangle (moved up)

  \node at (8.0, 25.1) {\large \textbf{Transition Constrained BO Problem Formulation}};
  \node at (20, 22.1) {\large \textbf{Objective Formulation}};
  \node at (4, 22.1) {\small \textbf{(a) Need to plan ahead}};
  \node at (3.5, 18.5) [text width = 3.75cm]{\small \textbf{\textcolor{red}{Look-ahead Utilities and averaging out uncertainty (intractable)}}};
  % Adding a cross underneath the "multi-step utilities" node
  \draw[red, very thick] (2.5, 16) -- (3.5, 17);
  \draw[red, very thick] (3.5, 16) -- (2.5, 17);

  \node at (12, 22.1) {\small \textbf{Hypothesis Testing}};
  \node at (10.5, 19.1) [text width = 3.2cm]{\small \textbf{(b) Variance reduction of a trajectory}};
  \node at (20, 19.1) [text width = 5cm]{\small \textbf{\textcolor{blue}{Unconstrained Linear Bandits (\citet{soare2014best}) (frequentist)}}};
  \node at (20, 14.8) [text width = 5cm]{\small \textbf{\textcolor{blue}{Adaptive Objective with optimality guarantees for Linear Bandits (\citet{fiez2019sequential}) (frequentist)}}};
  \node at (7.3, 14.8) [text width = 3.5cm]{\small \textbf{(c) Acquisition function}};
  \node at (21.5, 11.1) [text width = 4.3 cm]{\large \textbf{Acquisition function optimization}};
  \node at (3.5, 10.0) [text width = 3.5cm]{\small \textbf{\textcolor{red}{Direct optimization in the space of trajectories (intractable)}}};

  % Adding a cross underneath the "Direct optimization in the space of trajectories" node
  \draw[red, very thick] (2.5, 8.75) -- (3.5, 7.75);
  \draw[red, very thick] (3.5, 8.75) -- (2.5, 7.75);

  \node at (12, 10.5) [text width = 4cm]{\small \textbf{(d) Relaxation to space of state-action distributions}};
  \node at (12, 7.5) [text width = 4cm]{\small \textbf{(e) Solvable by Frank-Wolfe Algorithm}};
  \node at (21, 8.5) [text width = 3cm]{\small \textbf{\textcolor{blue}{\citet{Hazan2019, mutny2023active}}}};
  \node at (21, 4.5) [text width = 4cm]{\small \textcolor{blue}{\textbf{Convergence to optimal Markovian policy (\citet{mutny2023active})}}};
  \node at (3.5, 4.5) [text width = 3cm]{\small \textbf{\textcolor{blue}{Efficient RL solvers, e.g. Dynamic Programming}}};
  \node at (12, 4.5) [text width = 4cm]{\small \textbf{(f) Iterative Reinforcement Learning sub-problem}};
  \node at (3.5, 1.5) [text width = 3cm]{\small \textbf{Adaptive resampling at every iteration}};
  \node at (12, 1.5) [text width = 4cm]{\small \textbf{(g) Non-Markovian Policies}};

  % Adding arrows
  \draw[->, ultra thick] (4, 24.5) -- (4, 22.5);  % "Transition Constrained BO formulation" to "Need to plan ahead"
  \draw[->, ultra thick, red] (4, 21.1) -- (3, 20.1);  % "Need to plan ahead" to "Multi-step utilities"
  \draw[->, ultra thick] (4, 21.1) -- (8, 20.1);  % "Need to plan ahead" to "Variance reduction of a trajectory"
  \draw[->, ultra thick] (12.5, 21.1) -- (10.0, 20.1);  % "Hypothesis Testing" to "Variance reduction of a trajectory"
  \draw[->, ultra thick, blue] (12.5, 21.1) -- (18, 20.1);  % "Hypothesis Testing" to "Unconstrained linear bandits"
  \draw[->, ultra thick] (11, 18.1) -- (7.5, 15.8);  % "Variance reduction of a trajectory" to "Acquisition function"
  \draw[->, ultra thick, blue] (20, 18.1) -- (20, 15.8);  % "Unconstrained linear bandits" to "Adaptive objective with optimality guarantees"
  \draw[->, ultra thick, blue] (15, 14.75) -- (10.5, 14.75);  % "Adaptive objective with optimality guarantees" to "Acquisition function"
  \draw[->, ultra thick, red] (7.5, 14.3) -- (3.5, 11.5);  % "Acquisition function" to "Direct optimization"
  \draw[->, ultra thick] (7.5, 14.3) -- (11, 11.5);  % "Acquisition function" to "Relaxation to space of"
  \draw[->, ultra thick] (11.5, 9.5) -- (11.5, 8.2);  % "Relaxation to the space of" to "Solvable by Frank Wolfe"
  \draw[->, ultra thick] (11.5, 6.5) -- (11.5, 5.5);  % "Solvable by Frank Wolfe" to "Iterative reinforcement learning subproblems"
  \draw[->, ultra thick] (11.5, 3.5) -- (11.5, 2.1);  % "Iterative reinforcement learning subproblems" to "Non-markovian Policies"
  \draw[->, ultra thick, blue] (18, 8.5) -- (16.5, 9.8);  % "\citet{hazan, mutny" to "Relaxation to space of"
  \draw[->, ultra thick, blue] (18, 8.5) -- (16.5, 7.8);  % "\citet{hazan, mutny" to "Solvable by Frank Wolfe"
  \draw[->, ultra thick, blue] (17, 4.5) -- (15, 4.5);  % "Convergence to optimal Markovian Policy" to "Iterative reinforcement learning subproblems"
  \draw[->, ultra thick, blue] (6.2, 4.5) -- (8, 4.5);  % "Efficient RL solvers" to "Iterative reinforcement learning subproblems"
  \draw[->, ultra thick] (6.2, 1.5) -- (8, 1.5);  % "Adaptive resampling" to "Non-Markovian Policies"

\end{tikzpicture}
\caption{Visual abstract of the work. In black we show the method presented in this paper, with literature connections shown in blue. In red we show solutions which we did not pursue due to intractability. The problem creates the \textbf{(a) need to plan ahead}. To do this, we take inspiration from hypothesis testing and focus on \textbf{(b) the variance reduction in a set of maximizers}, which leads to  our \textbf{(c) acquisition function}. The objective is the same as \citet{fiez2019sequential} introduced in the linear bandits literature from a frequentist perspective. To optimize it, we follow developments in \citet{Hazan2019, mutny2023active} by \textbf{(d) relaxing the acquisition function to the space of state-action distributions} and \textbf{(e) solving the planning problem using the Frank-Wolfe algorithm}. This consists of iteratively solving tractable \textbf{(f) reinforcement learning sub-problems} which give us optimal Markov policies. We then apply adaptive resampling to obtain \textbf{(g) non-Markovian policies}.}
\label{fig: visual_abstract}
\end{figure}
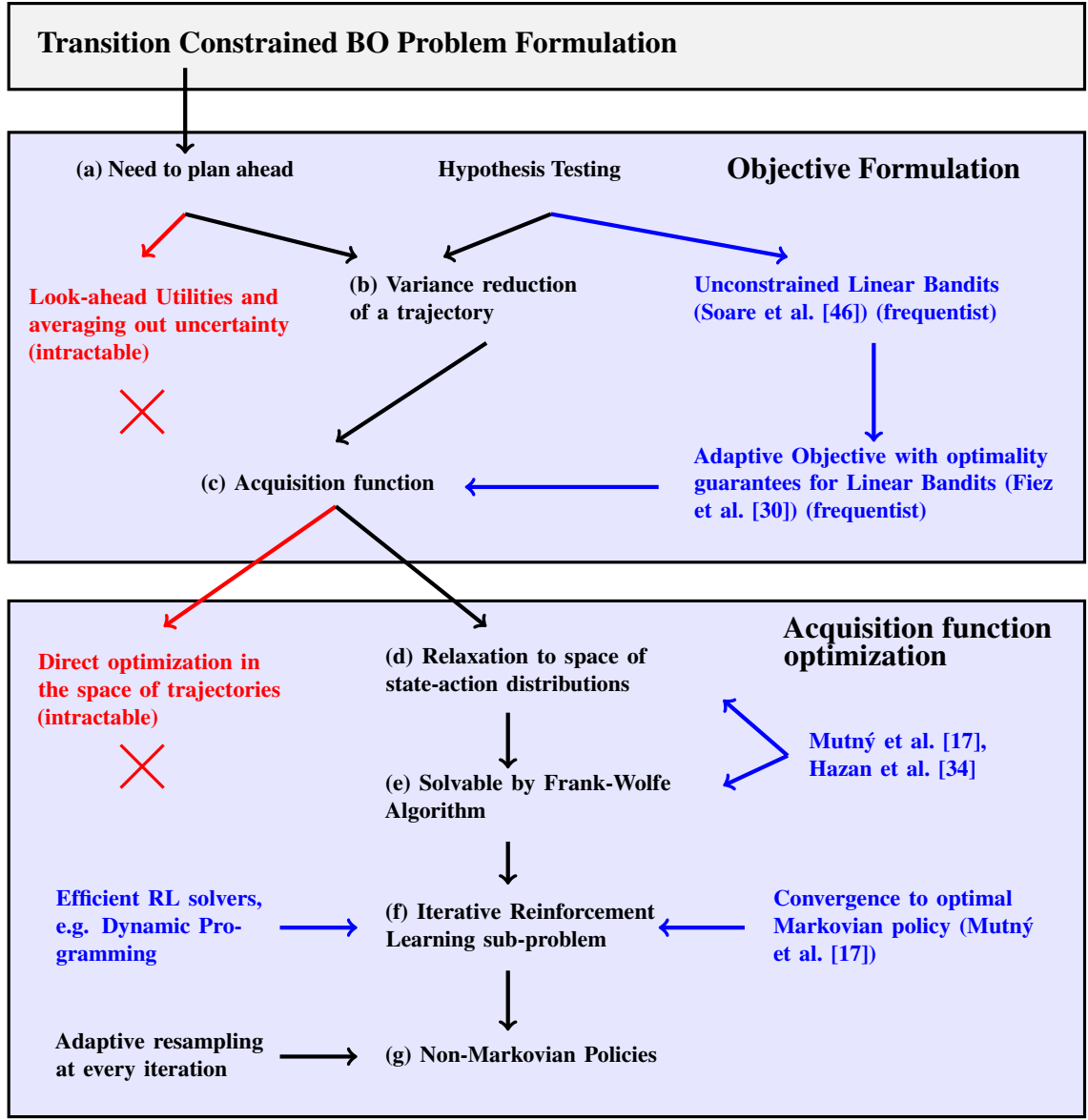

\section{Additional Empirical Results} \label{app:extra-results}

\subsection{Constrained Ypacarai}

We also run the Ypacarai experiment with immediate feedback. To increase the difficulty, we used large observation noise, $\sigma^2 = 0.01$. The results can be seen in Figure \ref{fig: yparacari_immediate_results}. The early performance of MDP-EI is much stronger, however, it gets overtaken by our algorithm from episode three onwards, and gives the worst result at the end, as it struggles to identify which of the two optima is the global one.

\begin{figure}
    \centering
    \includegraphics[width = 0.5\textwidth]{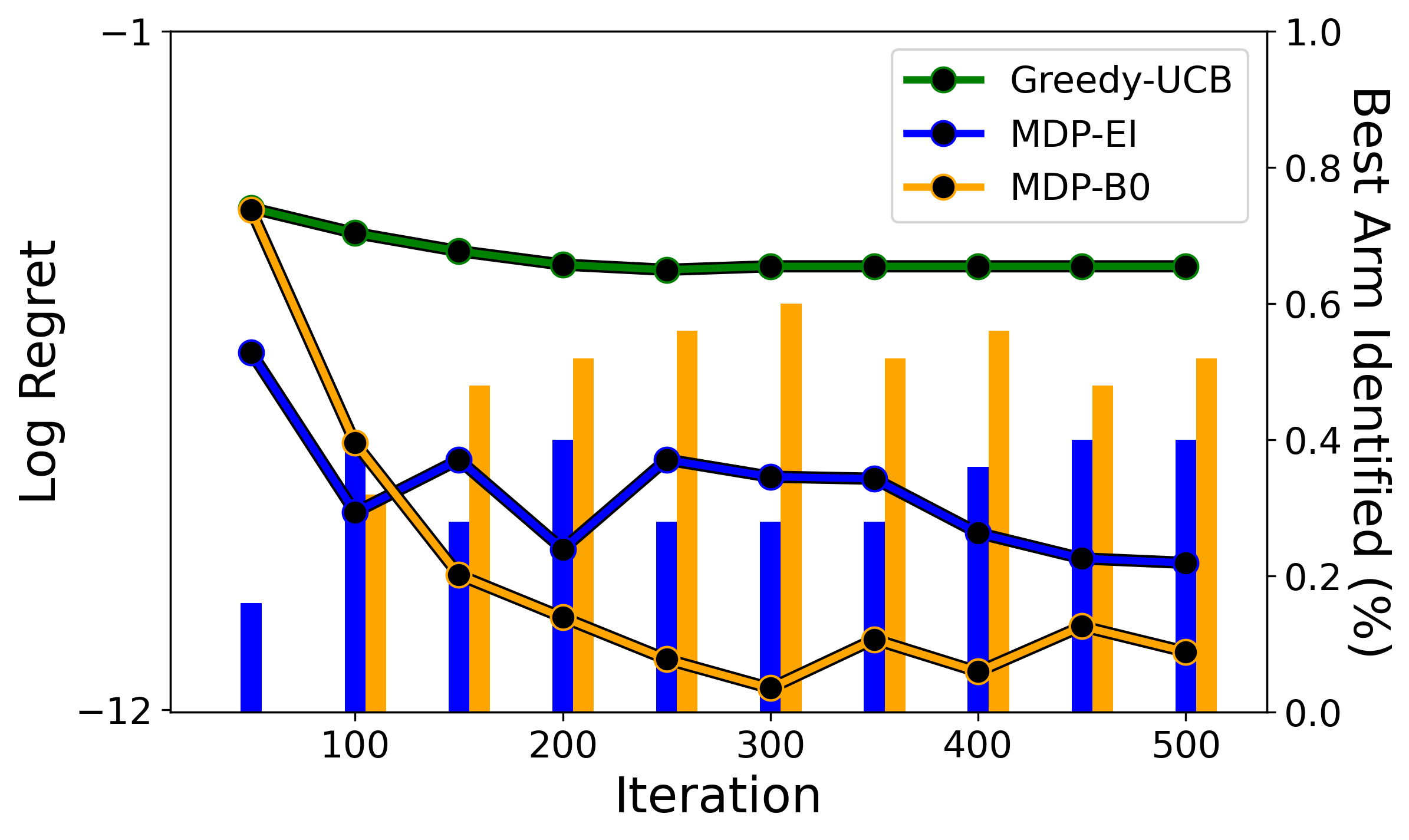}
    \caption{High noise constrained Ypacari experiment with immediate feedback.}
    \label{fig: yparacari_immediate_results}
\end{figure}

\subsection{Knorr pyrazole synthesis}

We also include results for the Knorr pyrazole synthesis with immediate feedback. In this case we observe very strong early performance from MDP-BO, but by the end MDP-EI is comparable. The greedy method performs very poorly.

\begin{figure}
    \centering
    \includegraphics[width = 0.5\textwidth]{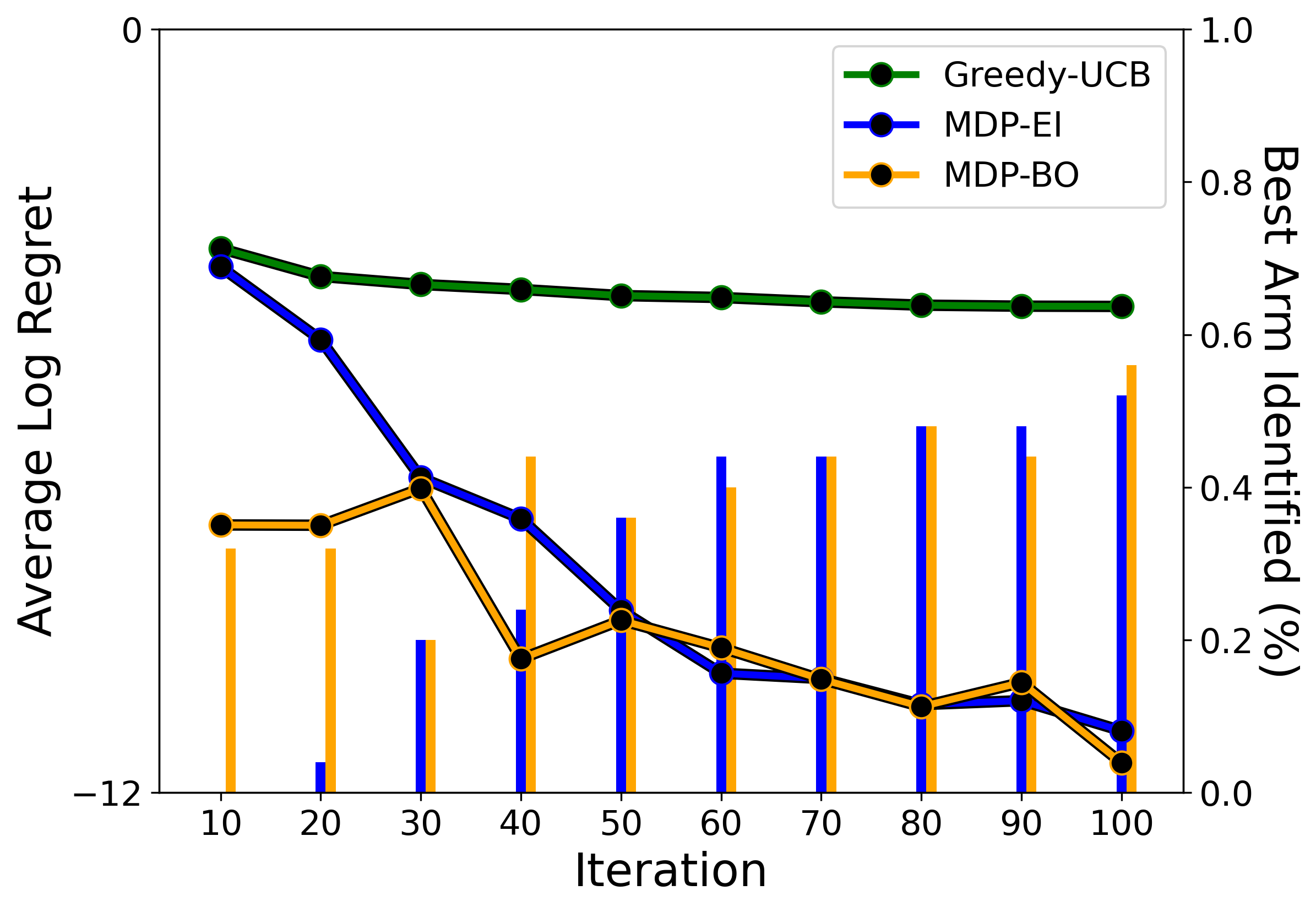}
    \caption{Knorr pyrazole synthesis with immediate feedback}
    \label{fig: flow_ode_mono_immediate_results}
\end{figure}

\subsection{Additional synthetic benchmarks}

Finally, we also include additional results on more synthetic benchmarks for both synchronous and asynchronous feedback. The results are shown in Figures \ref{fig: asynch_bench_appendix} and \ref{fig: synch_bench_appendix}. The results back the conclusions in the main body. All benchmarks do well in 2-dimensions while highlighting further that MDP-BO-UCB and LSR can be much stronger in the synchronous setting than Thompson Sampling planning-based approaches (with the one exception of the Levy function).

\begin{figure*}[!ht]
	\centering
	\begin{subfigure}[t]{0.32\textwidth}
	\includegraphics[width = \textwidth]{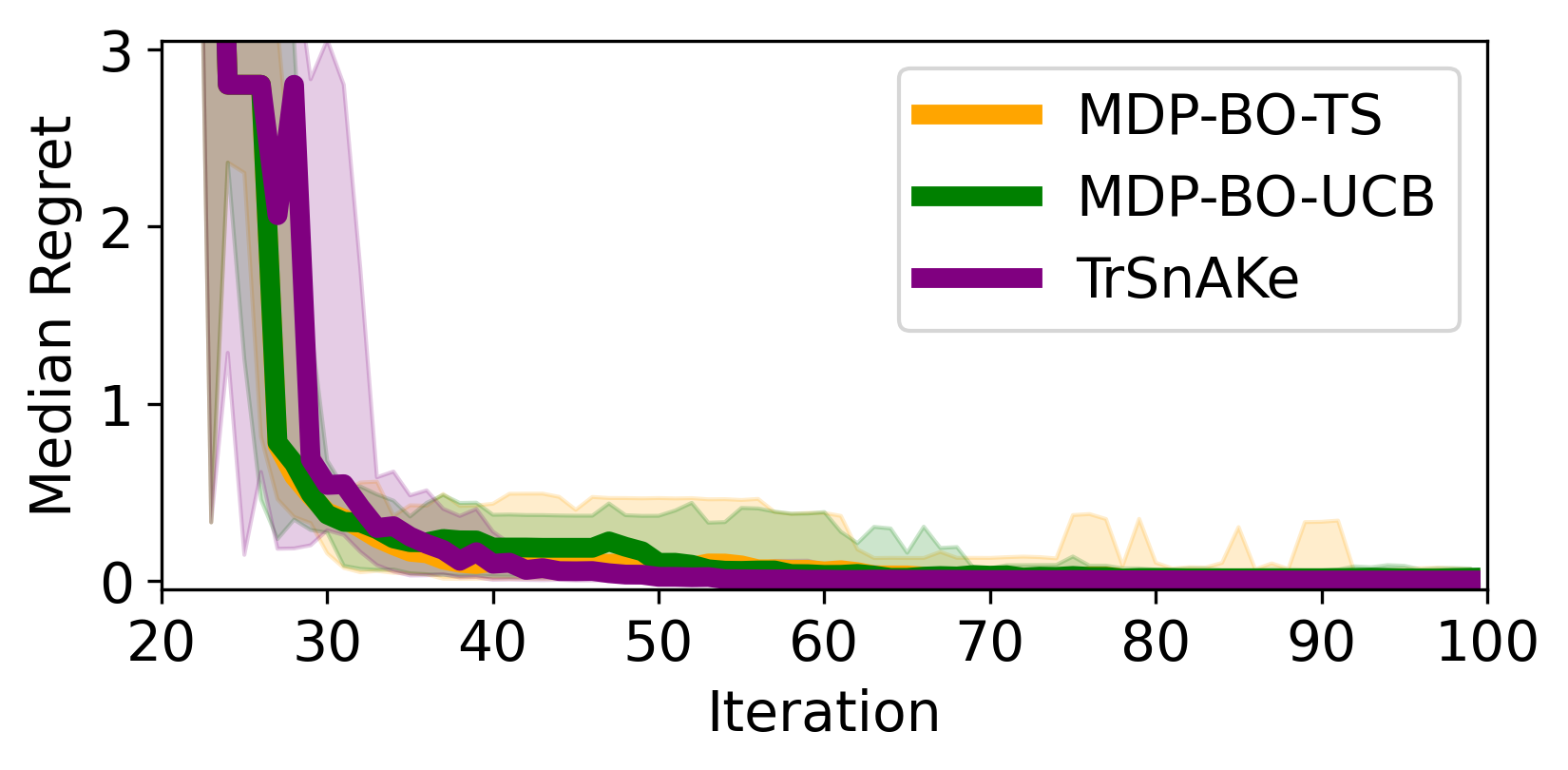}
	\caption{Branin 2D. (asynch)}
	\end{subfigure}
	\hfill
     \begin{subfigure}[t]{0.32\textwidth}
	\includegraphics[width = \textwidth]{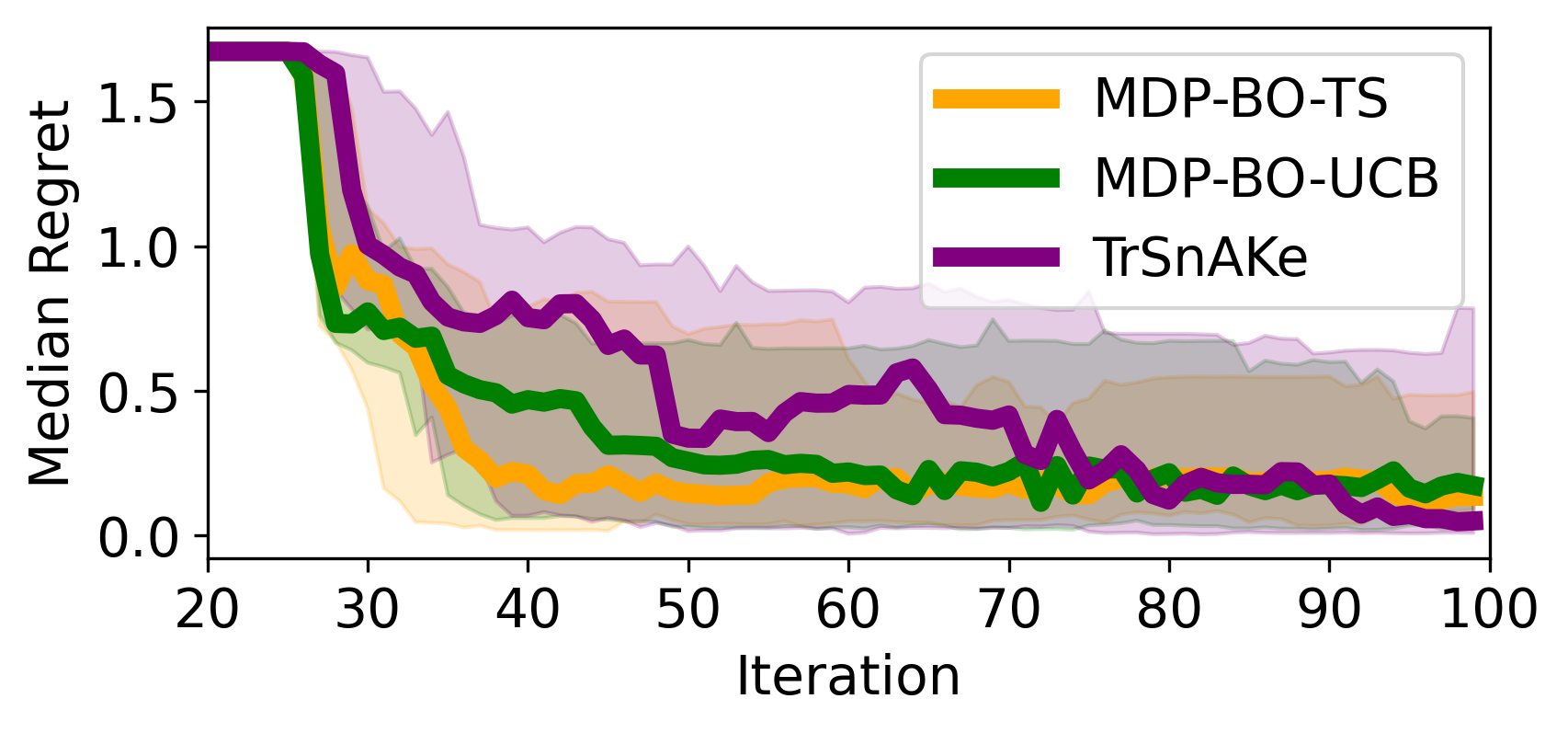}
	\caption{Michalewicz 3D. (asynch)}
	\end{subfigure}
	\hfill
	\begin{subfigure}[t]{0.32\textwidth}
	\includegraphics[width = \textwidth]{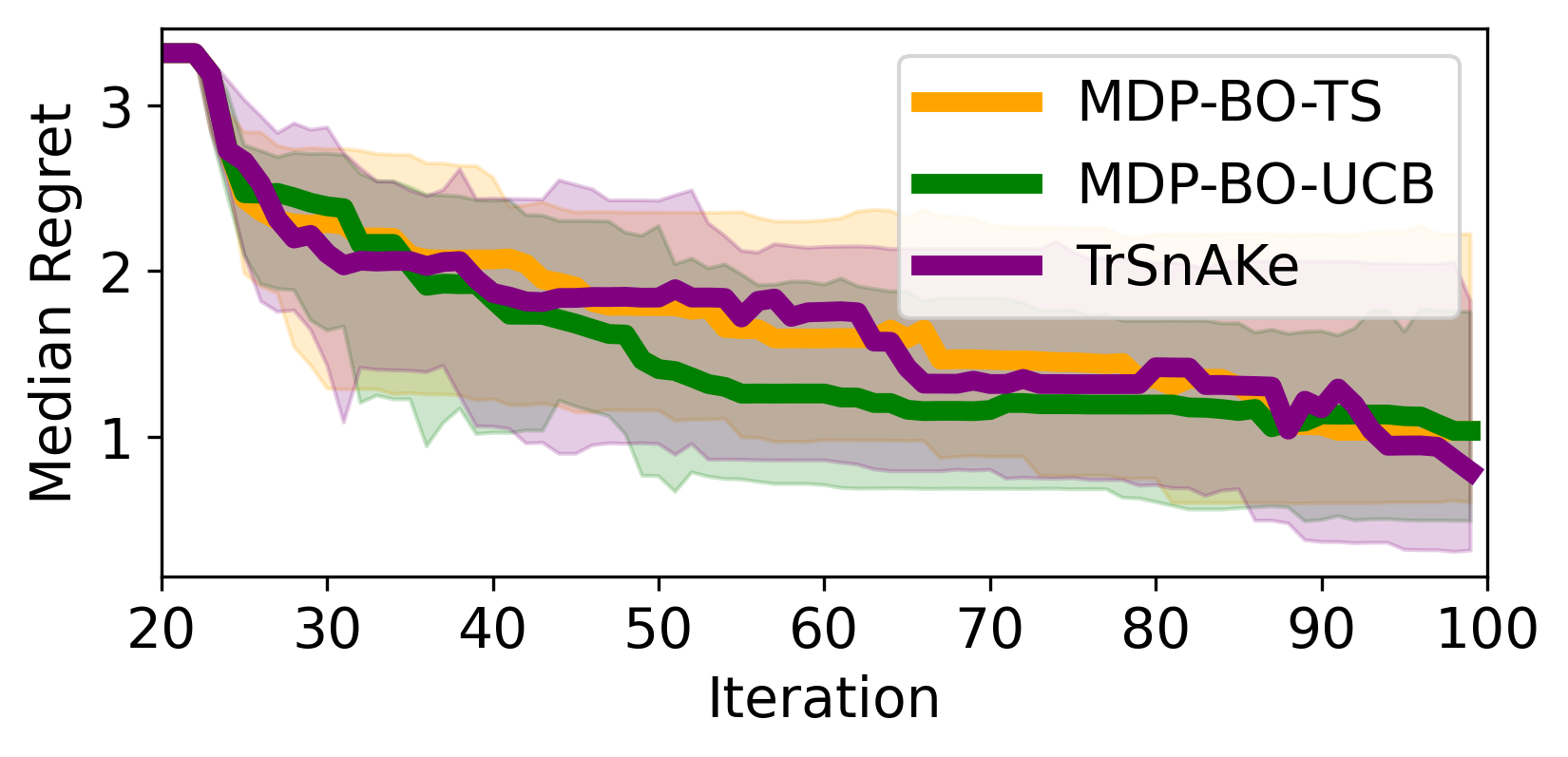}
	\caption{Hartmann 6D. (asynch)}
	\end{subfigure}
	\caption{Additional asynchronous results.}
	\label{fig: asynch_bench_appendix}
\end{figure*}

\begin{figure*}[!ht]
	\centering
	\begin{subfigure}[t]{0.32\textwidth}
	\includegraphics[width = \textwidth]{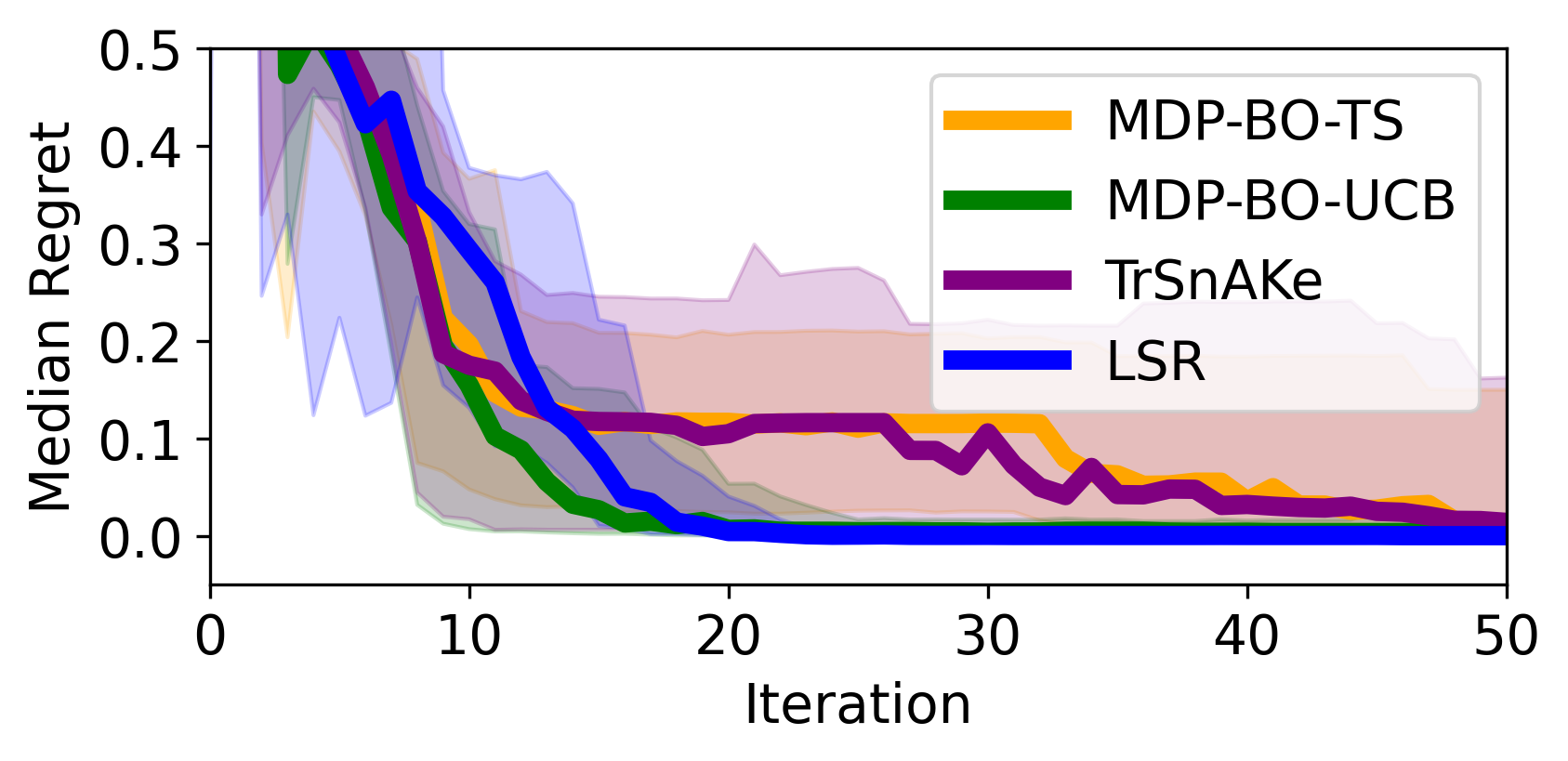}
	\caption{Branin 2D.}
	\end{subfigure}
	\hfill
	\begin{subfigure}[t]{0.32\textwidth}
	\includegraphics[width = \textwidth]{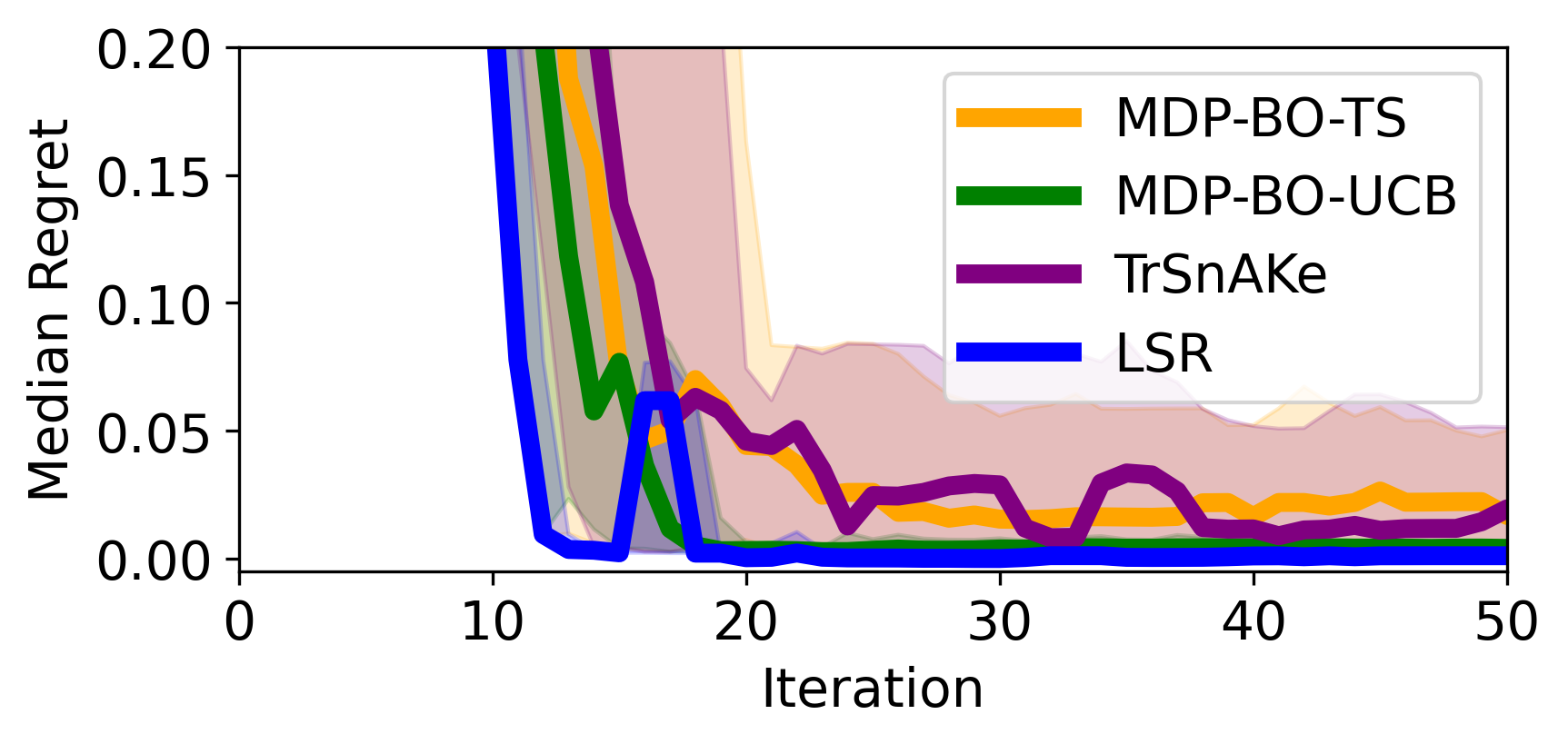}
	\caption{Michalewicz2D.}
	\end{subfigure}
    \hfill
	\begin{subfigure}[t]{0.32\textwidth}
	\includegraphics[width = \textwidth]{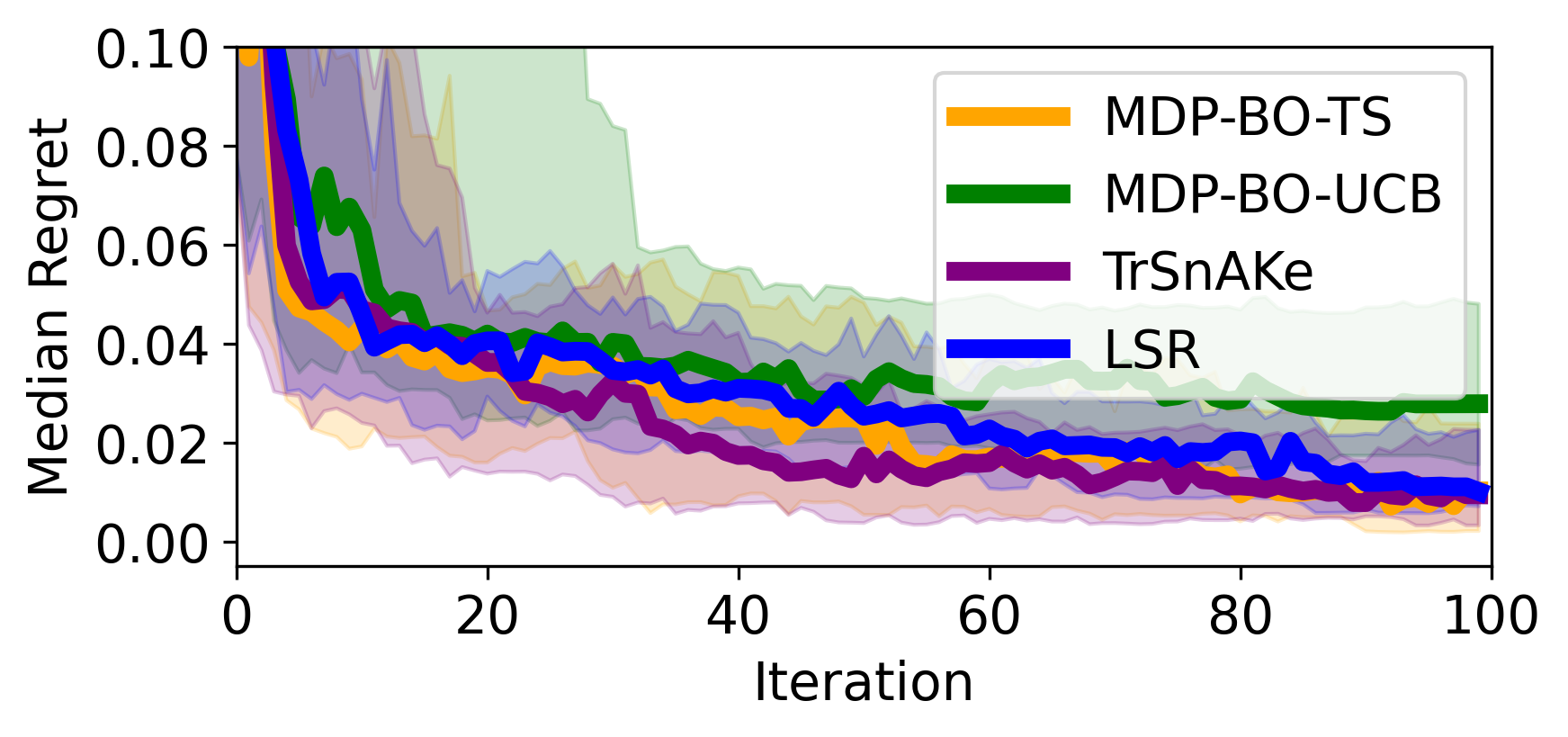}
	\caption{Levy4D.}
	\end{subfigure}
	\caption{Additional synchronous results.}
	\label{fig: synch_bench_appendix}
\end{figure*}

\begin{table}[H]
  \centering
  \caption{Average acquisition function solving times for each practical benchmark. We give the solving times to the nearest second, and provide the size of the state-space, $|\mathcal{S}|$, the maximum number of actions one can take from a specific state, $|\mathcal{A}(S)|$, and the planning horizon. In all benchmarks we are able to solve the problem in a few seconds.}
  \label{tab: computational_costs}
  \begin{tabular}{|c|c|c|c|c|}
    \hline
    \textbf{Benchmark} &  \textbf{Solve Time} &\textbf{$|\mathcal{S}|$} & \textbf{Maximum $|\mathcal{A}(S)|$} & \textbf{Planning horizon} \\
    \hline
    Knorr pyrazole & 1s & 100 & 6 & 10 \\
    Ypacarai & 3s & 100 & 8 & 50 \\
    Electron laser & 15s & 100 & 100 & 100 \\
    \hline
  \end{tabular}
\end{table}

\subsection{Computational study} \label{sec: computational_study}
We include the average acquisition function solving time for each of the discrete problems. For the continuous case the running time was comparable to Truncated SnAKe \citep{folch2023practicalsnake} since most of the computational load was to create the set of maximizers using Thompson Sampling. The times were obtained in a simple 2015 MacBook Pro 2.5 GHz Quad-Core Intel Core i7. The bulk of the experiments was ran in parallel on a High Performance Computing cluser, equipped with AMD EPYC 7742 processors and 16GB of RAM. 

\subsection{Median plots for Ypacarai and reactor experiments}

In Figures \ref{fig: fig: extra_median_plots_knorr} and \ref{fig: extra_median_plots_ypacarai} we give the median and quantile plots for the Knorr pyrazole synthesis and the Ypacarai experiment, which were not included in the main paper to avoid cluttering the graphics.

\begin{figure*}[!ht]
	\centering
	\begin{subfigure}[t]{0.49\textwidth}
	\includegraphics[width = \textwidth]{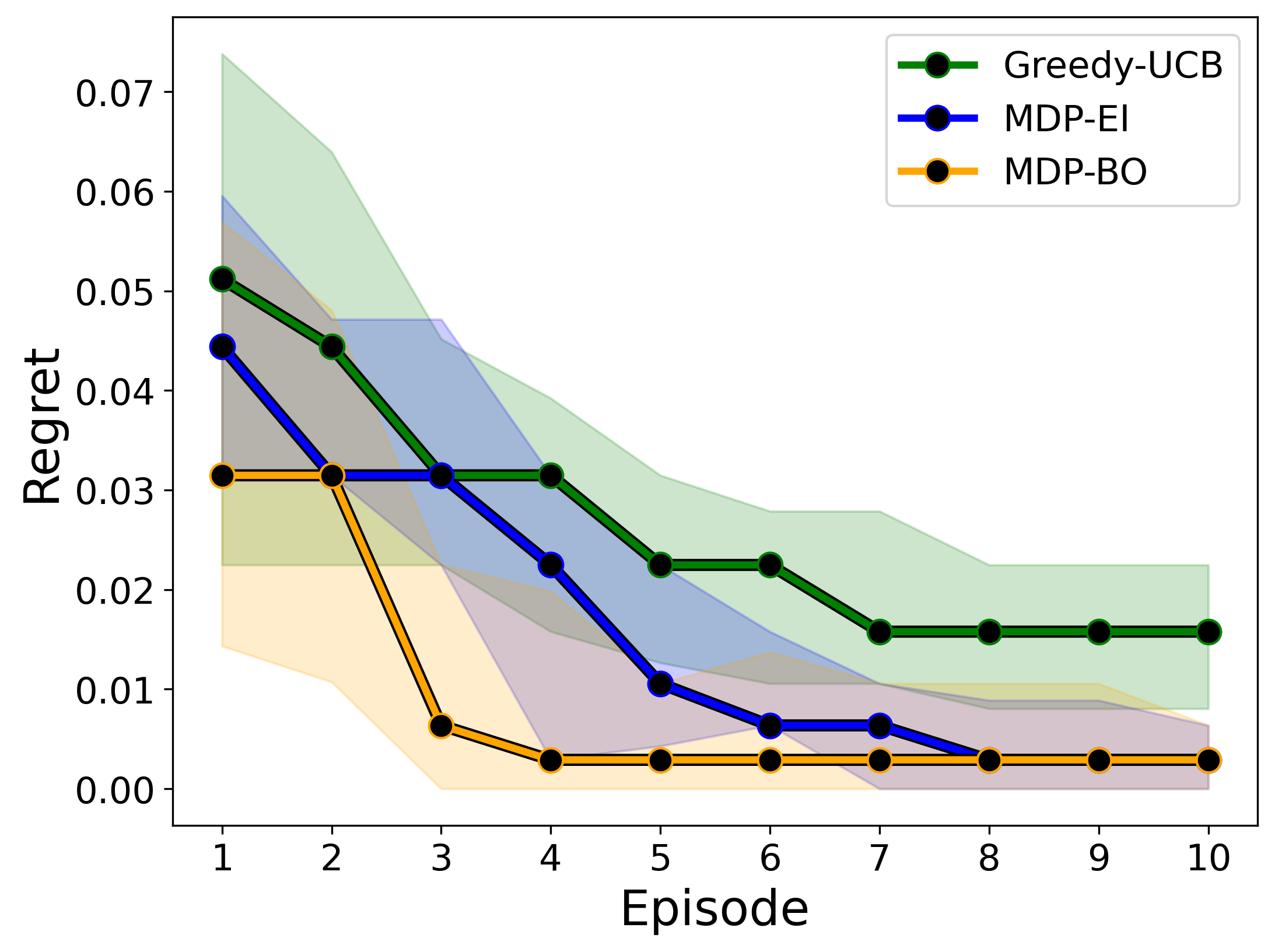}
	\caption{Episodic feedback results}
	\end{subfigure}
	\hfill
	\begin{subfigure}[t]{0.49\textwidth}
	\includegraphics[width = \textwidth]{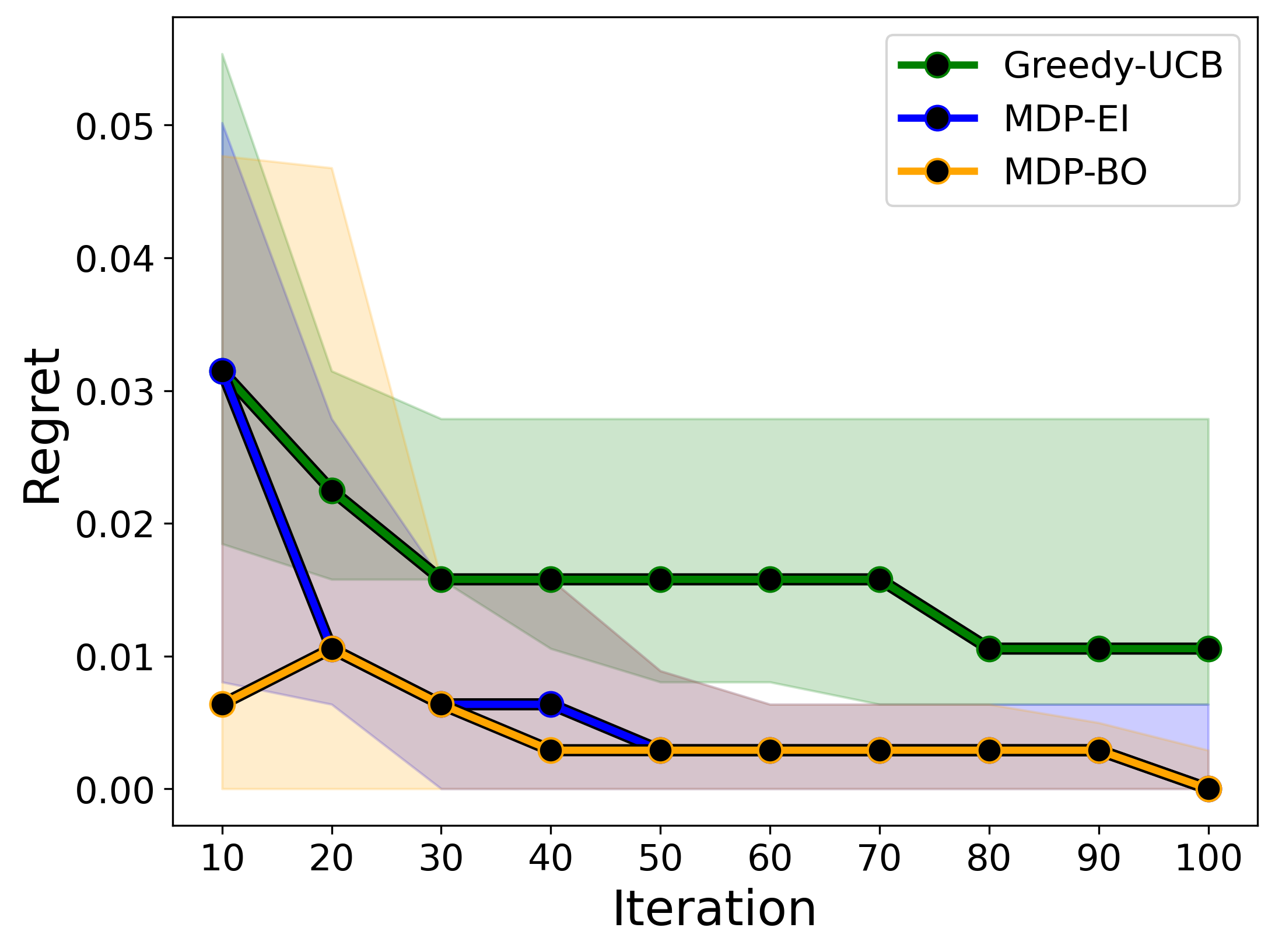}
	\caption{Immediate feedback results}
	\end{subfigure}
	\caption{Median and 10th/90th quantile plots for Knorr pyrazole synthesis experiment.}
	\label{fig: fig: extra_median_plots_knorr}
\end{figure*}

\begin{figure*}[!ht]
	\centering
	\begin{subfigure}[t]{0.49\textwidth}
	\includegraphics[width = \textwidth]{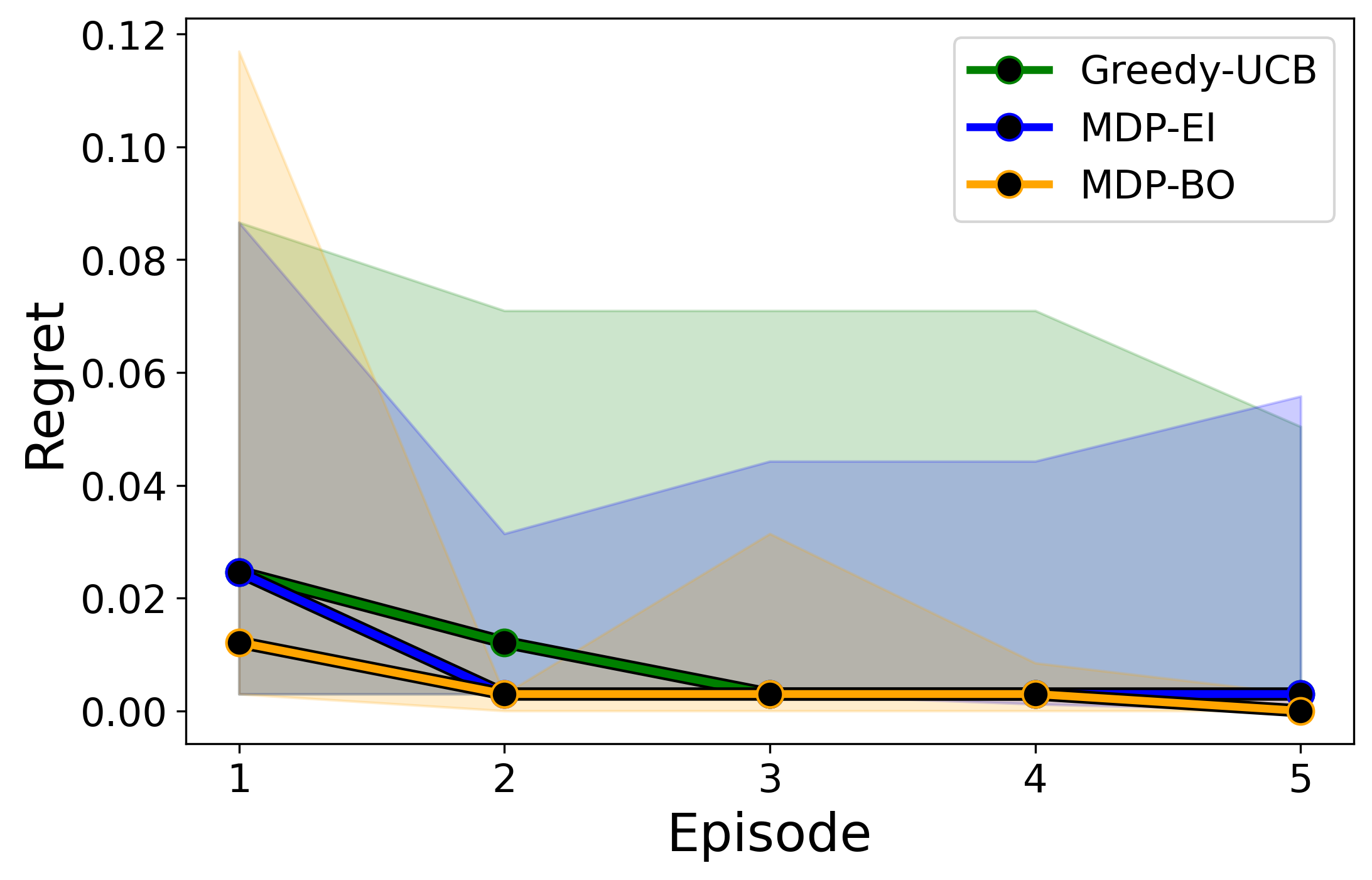}
	\caption{Episodic feedback results}
	\end{subfigure}
	\hfill
	\begin{subfigure}[t]{0.49\textwidth}
	\includegraphics[width = \textwidth]{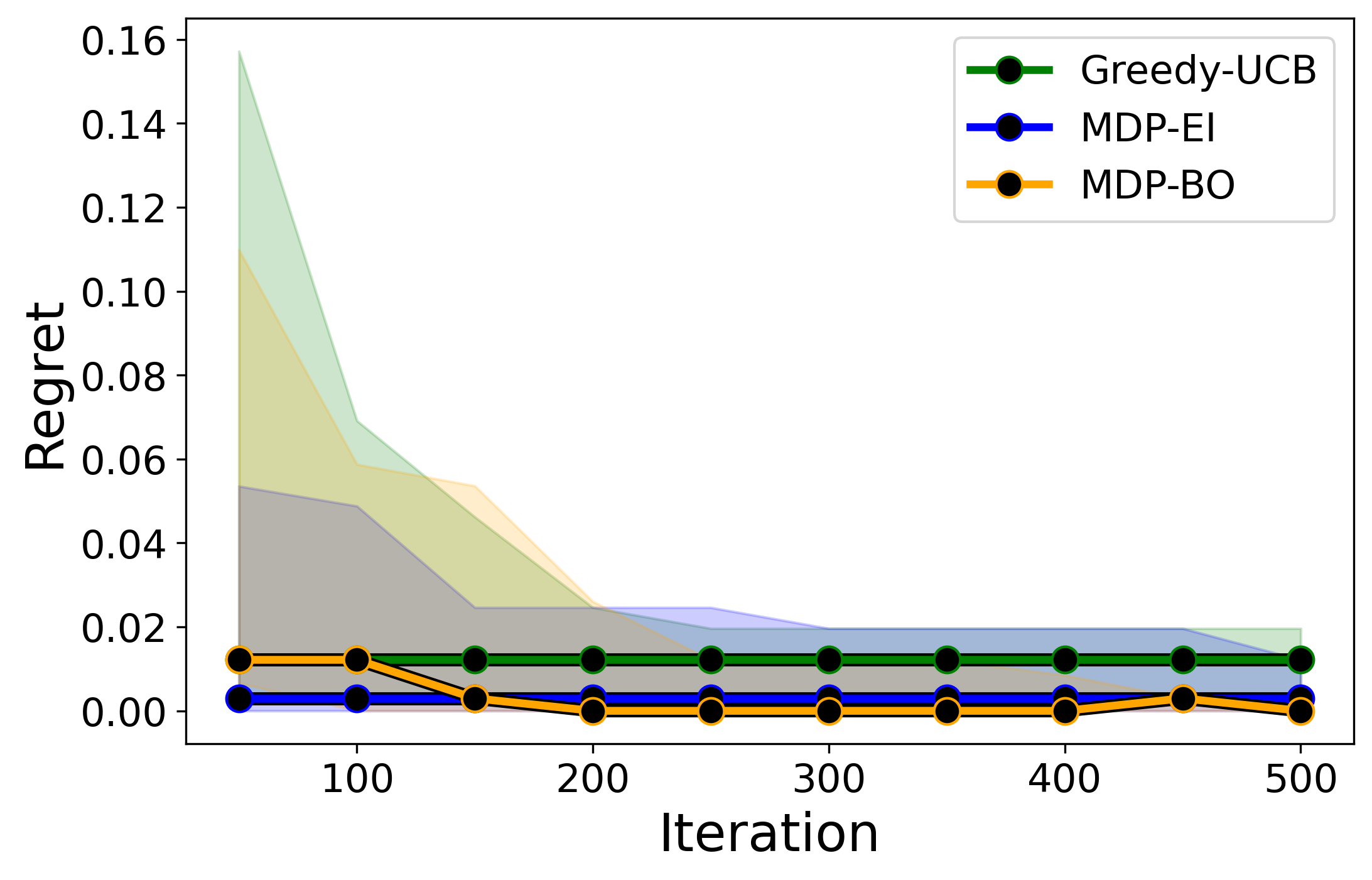}
	\caption{Immediate feedback results}
	\end{subfigure}
	\caption{Median and 10th/90th quantile plots for Ypacarai experiment.}
	\label{fig: extra_median_plots_ypacarai}
\end{figure*}

%%%%%%%%%%%%%%%%%%%%%%%%%%%%%%%%%%%%%%%%%%%%%%%%%%%%%%%%%%%%%%%%%%%%%%%%%%%%%%%
%%%%%%%%%%%%%%%%%%%%%%%%%%%%%%%%%%%%%%%%%%%%%%%%%%%%%%%%%%%%%%%%%%%%%%%%%%%%%%%
\section{Utility function: Additional Info}\label{app: more_deta}
We describe the utility function in complete detail using the kernelized variant that allows to extend the utility beyond the low-rank assumption in the main text. 

\subsection{Derivation of the Bayesian utility}\label{app: derivation-utility}
Suppose that our decision rule is to report the best guess of the maximizer after the $T$ steps as, 
\[x_T = \argmax_{x \in \mZ} \mu_T(x).\]
We call this the selection the \emph{recommendation rule}. We focus on this recommendation rule as this rule is interpretable to the facilitator of the analysis and experimenters. In this derivation we use that $f = \theta^\top \Phi(x)$. More commonly, the notation $\braket{\theta,\Phi(x)}$ is used, where the inner product is potentially infinite dimensional. We use use $\top$ notation for simplicity for both cases. Same is true for any other functional estimates, e.g., for the posterior mean estimate, we use $\mu_t(x) = \Phi(x)^\top \mu_t$. The inner product is in the reproducing kernel Hilbert space associated with the kernel $k$.

Now, suppose there is a given $f$ (we will take expectation over it later), then there is an $x\in \mX$ achieving optimum value, denoted $x^\star_f$ (suppose unique for this development here). Hence, we would like to model the risk associated with predicting a fixed $z \neq x_f^\star$, which is still in $\mZ$ at time $T$. Suppose we are at time $t$, we develop the utility to gather additional data $\bX_{\text{new}}$ on top of the already acquired data $\bX_t$. These should improve the discrepancy of the true answer, and the reported value the most.

Suppose there are two elements in $\mZ_{\text{simple}} = \{z, x_f^\star\}$. We will generalize to a composite hypothesis later. In two-element case, the probability of the error in incurred due to selecting $z$ is:
\[ P(\mu_T(z) - \mu_T(x_f^\star) \geq 0 | f)\].
The randomness here is due to the observations $y = f(X_{\text{new}}) + \epsilon$ that are used to fit the estimator $\mu_T(x)$. Namely due to $\epsilon \sim \mN(0, \sigma^2)$. Given $f$ (equivalently $\theta$), the distribution of our estimator (namely the posterior mean) is Gaussian. Hence, given $f$:
\[ \mu_T \sim \mN( (\bV_T+\bI_{\mH})^{-1}\bV_T \theta, \sigma^2  (\bV_T+\bI_{\mH})^{-1} V_T (\bV_T+\bI_{\mH})^{-1}), \]
where $\bV = \sum_{i=1}^T \frac{1}{\sigma^2}\Phi(x_i) \Phi(x_i)^\top$ is an operator on the reproducing kernel Hilbert space due to $k$ as $\mH \rightarrow \mH$, and $\bI_{\mH}$ the identity operator on the same space. 

This is the posterior over the posterior mean as a function. A posterior over the specific evaluation is $\mu_T(z) - \mu_T(x_f^\star) \sim \mN( \underbrace{ \theta^\top(\bV_T+\bI_{\mH})^{-1}\bV_T (\Phi(z) - \Phi(x^\star_f))}_{a}, \underbrace{\sigma^2 (\Phi(z) - \Phi(x^\star_f)^\top (\bV_T+\bI_{\mH})^{-1} \bV_T (\bV_T+\bI_{\mH})^{-1} (\Phi(z) - \Phi(x^\star_f))}_{b^2})$. 

We can now bound the probability of making an error using a Gaussian tail bound inequality:
\begin{equation*}
    \mathbb{P}(\mu_T(z) - \mu_T(x^*) \geq 0) = \mathbb{P}(\mu_T(z) - \mu_T(x^*) \geq a_z + (-a_z)) \leq e^{-\frac{a^2}{2b^2}}
\end{equation*}
with the caveat that the inequality only holds when the $a_z$ is negative. However note that $a_z \rightarrow f(z) - f(x^*) < 0$ as $T \rightarrow \infty$ therefore it will hold once $T$ is large enough. From this we can take logarithms and then the expectation across the randomness in the GP:
\begin{equation*}
    \mE_{f \sim GP} \left[ \log \mathbb{P}(\mu_T(z) - \mu_T(x^*) | f) \right] \leq - \frac{1}{2} \mE_{f \sim GP} \left[ \frac{a^2_z}{b^2_z} \right]
\end{equation*}
which is called the log \emph{Bayes' factor} and is expected log failure rate for the set of potential maximizers $\mZ_{\text{simple}}$. The expectation is over the posterior including the evaluations $\bX_t$ (or prior at the very beginning of the procedure). In fact, we can think of the posterior as being the new prior for the future at any time point. Now assuming that $\mZ$ has more than one additional element, we want to ensure the failure rate is small for all other failure modes, all other hypothesis. Technically this means, we have an alternate hypothesis, which is \emph{composite} -- composed of multiple point hypotheses. We take the worst-case perspective as its common with composite hypotheses. In expectation over the prior, we want to minimize:
\begin{equation}\label{eq: original-objective-statement}
\min_{\bX_{\text{next}}}\mE_{f}\left[\sup_{z\in \mZ\setminus \{x_f^\star\} }\log P(\mu_T(z) - \mu_T(x_f^\star) \geq 0 | f)\right] .
\end{equation}

For moderate to large $T\gg 0$, we can upper bound this objective via elegant argument to yield a very transparent objective:
\begin{equation}\label{eq: original-objective}
\min_{\bX_{\text{next}}}\mE_{f}\left[\sup_{z\in \mZ\setminus \{x_f^\star\} }\log P(\mu_T(z) - \mu_T(x_f^\star) \geq 0 | f)\right] \stackrel{\cdot} \leq  -\frac{1}{2} \min_{\bX_{\text{next}}} \mE_f\left[ \sup_{z\in \mZ\setminus \{x_f^\star\}}\frac{(f(z) - f(x_f^\star))^2}{k_{\bX_t\cup\bX_{\text{new}}}(z,x_f^\star) }\right]
\end{equation}
where we have used an lower and upper bound on the $a_{z}$ and $b_{z}$, respectively as follows:
\begin{eqnarray*}
a_z^2 & = & (\theta^\top(\bV_T+\bI_{\mH})^{-1}\bV_T (\Phi(z) - \Phi(x^\star_f)))^2 \\ & = &  (\theta^\top(\bV_T+\bI_{\mH})^{-1}(\bV_T+\bI_{\mH}-\bI_{\mH}) (\Phi(z) - \Phi(x^\star_f)) )^2 \\ &  = &  (\theta^\top (\Phi(z) - \Phi(x^\star_f)) - \theta^\top (\bV_T+\bI_\mH)^{-1} (\Phi(z) - \Phi(x^\star_f))) ^2 \\ & \stackrel{T\gg 0}\approx & (\theta^\top (\Phi(z) - \Phi(x^\star_f)))^2 = (f(z) -  f(x^\star_f))^2 \\
% a_z^2 & = & (\theta^\top(\bV_T+\bI_{\mH})^{-1}\bV_T (\Phi(z) - \Phi(x^\star_f)))^2 \\ & = &  (\theta^\top(\bV_T+\bI_{\mH})^{-1}(\bV_T+\bI_{\mH}-\bI_{\mH}) (\Phi(z) - \Phi(x^\star_f)) )^2 \\ &  = &  (\theta^\top (\Phi(z) - \Phi(x^\star_f) - \theta^\top (\bV_T+\bI_\mH)^{-1} (\Phi(z) - \Phi(x^\star_f))) ^2 \\ & \geq & \kappa(T) (\theta^\top (\Phi(z) - \Phi(x^\star_f)))^2 = \kappa(T) (f(z) -  f(x^\star_f))^2 
% \end{eqnarray*}
% where $\kappa(T) = \lambda_{\min}(\bI_{\mH}-(\bV_T + \bI_{\mH}){^{-1}})$. Where $0<\kappa \leq 1$, monotonically increasing to $1$ with $T$. 
% \begin{eqnarray*}
b_z^2 & = & \sigma^2 (\Phi(z) - \Phi(x^\star_f)^\top (\bV_T+\bI_{\mH})^{-1} \bV_T (\bV_T+\bI_{\mH})^{-1} (\Phi(z) - \Phi(x^\star_f))  \\
& \leq &  \sigma^2 (\Phi(z) - \Phi(x^\star_f))^\top (\bV_T+\bI_{\mH})^{-1}(\Phi(z) - \Phi(x^\star_f)) = k_{\bX}(z,x^\star_f).
\end{eqnarray*}
In the last line we have used the same identity as in Eq. \eqref{eq: relation}. We will explain how to eliminate the expectation in Section \ref{app: utility-upper-bound}

\subsection{Upper-bounding the objective: Eliminating $\mE_f$ for large $T$.} \label{app: utility-upper-bound}
The objective Eq. \eqref{eq: original-objective} is intractable due to the expectation of the prior and which involves expectation over the maximum $f(x^\star_f)$, which is known to be very difficult to estimate. Interestingly, the denominator is independent of $f$ if we adopt the worst-case perspective over the $x^\star_f$, and hence the only dependence is through the set $\mZ$ as well as the denominator. Given all current prior information, we can determine $\mZ$, and hence split the expectation. Let us now express 

At any time point, we can upper-bound the denominator by the minimum as done by \citet{fiez2019sequential}. Even if $\mZ$ decreases, as we get more information, the worst-case bound is always proportional to the smallest gap $\operatorname{gap}(f)$ between two arms in $\mX$. Hence, we can upper bound the objective as: 
\begin{eqnarray*}
- \mE_f\left[\sup_{z\in \mZ\setminus \{x_f^\star\}}\frac{(f(z) - f(x_f^\star))^2}{k_{\bX\cup \bX_{\text{new}}}(z,x_f^\star)}\right] & \leq & -\sup_{z \in \mZ\setminus \{x_f^\star\}} \frac{\mE_{f}\left[\operatorname{gap}(f)\right]}{k_{\bX\cup \bX_{\text{new}}}(z,x_f^\star)} \\ & \leq & - \mE_{f}\left[\operatorname{gap}(f)\right] \sup_{z \in \mZ\setminus \{x_f^\star\}} \frac{1}{\operatorname{Var}\left[f(z) - f(x_f^\star)|\bX_t \cup \bX_{\text{new}}\right]}. 
\end{eqnarray*}
As the constant in front of the objective does not influence the optimization problem, we do not need to consider it when defining the utility. Furthermore, in order to minimise the probability of an error we can just minimise the variance in the denominator instead (since $\arg\min_{x}-g(x)$  is equivalent to $\arg\min_{x} \frac{1}{g(x)}$ when $g(x) > 0$). However, the non-trivial distribution of $f(x^\star)$ \citep{hennig2012entropy} renders the utility intractable; therefore we employ a simple and tractable upper bound on the objective by minimising the uncertainty among \textit{all} pairs in $\mathcal{Z}$:
\begin{equation}
    U(\bX_{\text{new}}) = \max_{z', z \in \mathcal{Z}, z\neq z'} \text{Var}[f(z) - f(z') | \bX_t \cup \bX_{\text{new}}].
\end{equation}

Surprisingly, this objective coincides with the objective from \citet{fiez2019sequential} which has been derived as lower bound to the best-arm identification problem (maximum identification) with linear bandits. Their perspective is however slightly different as they try to minimize $T$ for a fixed $\delta$ failure rate. Perhaps it should not be surprising that the dual variant, consider here, for fixed $T$ and trying to minimize the failure rate leads to the same decision for large $T$ when $\log(b_z)$ can be neglected. 

\subsection{Approximation of Gaussian Processes} \label{sec: appendix_nystrom}
Let us now briefly summarize the Nystr\"om approximation \cite{williams2000using, yang2012nystrom}. Given a kernel $k(\cdot, \cdot)$, and a data-set $X$, we can choose a sub-sample of the data $\hat{x}_1, ..., \hat{x}_m$. Using this sample, we can create a low $r$-rank approximation of the full kernel matrix
\begin{equation*}
    \hat{K}_r = K_b \hat{K}^{\dagger} K_b
\end{equation*}
where $K_b = [k(x_i, \hat{x}_j)]_{N \times m}$, $\hat{K} = [k(\hat{x}_i, \hat{x}_j)]_{m \times m}$ and $K^{\dagger}$ denotes the pseudo-inverse operation. We can then define the Nyström features as:
\begin{equation}
    \phi_n(x) = \hat{D}_r^{-1/2} \hat{V}_r^T (k(x, x_1), ..., k(x, x_m))^T,
\end{equation}
where $\hat{D}_r$ is the diagonal matrix of non-zero eigenvalues of $\hat{K}_r$ and $\hat{V}_r$ the corresponding matrix of eigenvectors. It follows that we obtain a finite-dimensional estimate of the GP:
\begin{equation}
    f(x) \approx \Phi(x)^T \theta
\end{equation}
where $\Phi(x) = (\phi_1(x), \dots\phi_m(x))^T$, and $\theta$ are weights with a Gaussian prior.

\subsection{Theory: convergence to the optimal policy}\label{app :theory}
The fact that our objective is derived using Bayesian decision theory  makes it well-rooted in theory. In addition to the derivation of Section \ref{app: derivation-utility}, we can prove that our scheme is able to converge in terms of the utility. 

Notice that the set of potential maximizers is changing over time, and hence we add a time subscript to $\mZ$ as $\mZ_t$. Let us contemplate for a second what could the optimal policy. As the set of $\mZ_t$ is changing, we follow the line of work of started by \citet{russo2016simple} and introduce an optimal algorithm that knows the true $x^\star_f$ for each possible realization of the prior $f$. In other words, its an algorithm that any time $t$, would follow:
\[ d^\star_t  = \min_{d \in \mD} \mE_f\left[\max_{z\in \mZ_t\setminus \{x_f^\star\}} k_{\hat{d}_t \oplus d}(z,x_f^\star)\right], \]
where in the above $\hat{d}_t \oplus d$ represents the weighted sum as in the main Algorithm \ref{alg: movement_bo_via_mdps} that scales the distributions properly according to $t$ and $T$, so to make the sum of them a valid distribution. Notice that in contrast to our objective, it does not take the maximum over $z' \in \mZ$, but fixes it to the value $x_f^\star$ that the hypothetical algorithm has privileged access to. To eliminate the cumbersome notation, we will refer to the objectives as $\mU(d|\mZ_t, \mZ_t)$ as the objective used by our algorithm (real execution) and $\mU(d|\mZ_t, \{x_f^\star\})$, as the objective that the privileged algorithm is optimizing which serves as theoretical baseline. 

The visitation of $d^\star_t$ represents the best possible investment of the resources (of the size $T-t$) to execute at time $t$ had we known the $x^\star_f$ instead of only $\mZ_t$. This is interpreted as if the modeler knows $x^\star_f$, and sets up an optimal curriculum that is being shown to an observer in order to convince him/her of that $x^\star_f$ is the optimal value. He or she is using statistical testing to elucidate it from execution of the policy. Like the algorithm, the optimal policy changes along the optimization procedure due to changes in $\mZ_t$. Hence, our goal is to show that we are closely tracking the performance of these optimal policies in time $t$, and eventually there is little difference between our sequence of executed policies (visitations) $\hat{d}_t$ and the algorithm optimal $d^\star_t$. 

In order to prove the theorem formally, we need to assume that $\mZ_t$ is decreasing. The rate at which this set is decreasing determines the performance of the algorithm to a large extent. Namely, we assume that given two points in time, having the same empirical information $\hat{d}_t$. Given, $f$, suppose
\begin{equation}\label{eq: decrease} \tag{$\star$}
 \sup_{d \in \mD} |d^\top (\nabla \mathcal{U}(\hat{d}_{t}|\mZ_{t},\mZ_{t}) -  \nabla \mathcal{U}(\hat{d}_{t}|\mZ_{t},\{x_f^\top\}))| \leq C_t.
\end{equation}
As we gather information in our procedure the, $\{x_f^\star\} \subset \mZ_{t} \subseteq \mZ_{t-1}$, but the exact decrease depends on how $\mZ_t$ is constructed. We leave the particular choice for $C_t$ to make the above hold for future work. We conjecture that this is decreasing as $C_t \approx \frac{\gamma_t}{\sqrt{t}}$, where $\gamma_t$ is the information gain due to \citet{srinivas2009gaussian}. We are now ready to state the formal theorem along with its assumptions. 

\begin{proposition}\label{prop :theory}
Assuming episodic feedback, and suppose that for any $\mZ$,
\begin{enumerate}
    \item $\mathcal{U}$ is convex on $\mD$ 
    \item $B$-locally Lipschitz continuous under $||\cdot||_\infty$ norm
    \item locally smooth with constant $L$, i.e, 
\begin{equation}
		\mU(\eta + \alpha h) \leq  			\mU(\eta) + \nabla \mU(\eta)^\top h + \frac{L_{\eta,\alpha}}{2} \norm{h}^2_2 .
	\end{equation}
	for $\alpha \in (0,1)$ and $\eta,h \in \Delta_p$, $L :=\max_{\eta,\alpha} L_{\eta,\alpha}$   
    \item condition in \eqref{eq: decrease} holds with Bayesian posterior inference,
\end{enumerate}
we can show that the Algorithm \ref{alg: movement_bo_via_mdps} satisfied for the sequences of iterates $\{\hat{d}_t\}_{t=1}^T$:
 \[ \frac{1}{T}\sum_{t=1}^{T-1} \mathcal{U}(d_t|\mZ_t,\{x_f^\star\}) -\mathcal{U}(d_t^\star|\mZ_t,\{x_f^\star\})  \leq \mO\left(\frac{1}{T}\sum_{t=1}^{T-1}C_t + \frac{L\log T}{T}+ \frac{B}{\sqrt{T}}\log\left(\frac{1}{\delta}\right)\right), \]
 with probability $1-\delta$ on the sampling from the Markov chain. The randomness on the confidence set is captured by Assumption in Eq. \eqref{eq: decrease}.
\end{proposition}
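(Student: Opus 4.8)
The plan is to analyze Algorithm~\ref{alg: movement_bo_via_mdps} as an instance of Frank--Wolfe run on a \emph{moving} sequence of objectives $\mathcal{U}(\cdot \mid \mathcal{Z}_t, \mathcal{Z}_t)$, and to compare its iterates to the (also moving) privileged optima $d_t^\star$. First I would set up the one-step descent: fix episode $t$, and use the local smoothness (Assumption 3) together with the fact that each mixture update is a convex combination $\hat{d}_{t+1} = (1-\alpha_t)\hat{d}_t + \alpha_t d_{\pi_t}$ with $d_{\pi_t}$ the exact minimizer of the linearized subproblem (Eq.~\eqref{eq: frank-wolfe-sub-problem}, solvable exactly in the episodic discrete case via dynamic programming). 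Smoothness gives
\begin{equation*}
\mathcal{U}(\hat{d}_{t+1}\mid \mathcal{Z}_t,\mathcal{Z}_t) \leq \mathcal{U}(\hat{d}_t\mid \mathcal{Z}_t,\mathcal{Z}_t) + \alpha_t \nabla\mathcal{U}(\hat{d}_t\mid\mathcal{Z}_t,\mathcal{Z}_t)^\top(d_{\pi_t}-\hat{d}_t) + \tfrac{L\alpha_t^2}{2}\|d_{\pi_t}-\hat{d}_t\|_2^2,
\end{equation*}
and by optimality of $d_{\pi_t}$ the linear term is bounded by $\alpha_t\nabla\mathcal{U}(\hat{d}_t\mid\mathcal{Z}_t,\mathcal{Z}_t)^\top(d_t^\star - \hat{d}_t)$, which by convexity is at most $\alpha_t(\mathcal{U}(d_t^\star\mid\mathcal{Z}_t,\mathcal{Z}_t) - \mathcal{U}(\hat{d}_t\mid\mathcal{Z}_t,\mathcal{Z}_t))$. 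This is the standard FW gap-contraction inequality.

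Second, I would bridge between the objective the algorithm actually uses, $\mathcal{U}(\cdot\mid\mathcal{Z}_t,\mathcal{Z}_t)$, and the theoretical baseline $\mathcal{U}(\cdot\mid\mathcal{Z}_t,\{x_f^\star\})$. This is exactly where the assumption in Eq.~\eqref{eq: decrease} enters: the two gradients differ (uniformly over $\mathcal{D}$ in the relevant inner products) by at most $C_t$, so replacing one gradient-based linear term by the other costs an additive $\alpha_t C_t$ per step. Since $\{x_f^\star\}\subseteq\mathcal{Z}_t$, the privileged objective is pointwise dominated by the algorithm's objective, which lets the substitution go through cleanly in the direction we need. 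I would also need to account for the fact that $\hat{d}_t$ is a \emph{sampled} visitation (random trajectories in the stochastic MDP) rather than the idealized $d_{\pi_t}$: the deviation $\|\hat{d}_t - \mathbb{E}[\hat{d}_t]\|_\infty$ is controlled by a martingale/Azuma-type concentration over the $tH$ sampled state-action pairs, giving the $\frac{B}{\sqrt{T}}\log(1/\delta)$ term after invoking $B$-local Lipschitzness (Assumption 2) to convert the visitation error into an objective error, with a union bound over $t$.

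Third, I would telescope. Writing $\delta_t := \mathcal{U}(\hat{d}_t\mid\mathcal{Z}_t,\{x_f^\star\}) - \mathcal{U}(d_t^\star\mid\mathcal{Z}_t,\{x_f^\star\})$, the per-step inequality takes the form $\delta_{t+1} \lesssim (1-\alpha_t)\delta_t + \alpha_t C_t + \tfrac{L\alpha_t^2}{2}D^2 + (\text{sampling error}_t)$, but with the subtlety that the comparator $d_t^\star$ and the set $\mathcal{Z}_t$ themselves change with $t$; here I would exploit monotonicity of $\mathcal{Z}_t$ (so that the feasible/privileged value moves in a controlled direction) and, as is standard in online FW analyses, absorb the drift of $d_t^\star$ into the $C_t$ budget or handle it via a regret-style summation rather than a contraction. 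Choosing $\alpha_t \asymp 1/t$ and summing yields $\frac{1}{T}\sum_t \delta_t \leq \mathcal{O}\!\big(\frac{1}{T}\sum_t C_t + \frac{L\log T}{T} + \frac{B}{\sqrt{T}}\log(1/\delta)\big)$, which is the claimed bound.

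\textbf{Main obstacle.} The genuinely delicate part is handling the \emph{moving comparator and moving objective} simultaneously: a vanilla FW contraction argument assumes a fixed objective and fixed optimum, whereas here both $\mathcal{Z}_t$ and $d_t^\star$ drift every episode. Making the telescoping rigorous requires either that the drift of $d_t^\star$ is summable (which is what Eq.~\eqref{eq: decrease} is really buying us, at rate $C_t$) or an averaged/online-regret formulation in which the $\frac{1}{T}\sum_t C_t$ term literally is the cumulative comparator-drift penalty — I expect the proof to adopt the latter framing. A secondary but nontrivial point is the interaction of the adaptive-resampling reweighting (the $\frac{H-h}{1+t}$, $\frac{tH+h}{1+t}$ factors) with smoothness constants, which under episodic feedback collapses to the clean per-episode recursion above but would need to be checked carefully so that $L$ as defined (the max over $\eta,\alpha$) genuinely dominates every step.
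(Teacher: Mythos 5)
Your proposal follows essentially the same route as the paper's proof: a Frank--Wolfe recursion with the forced step size $1/(t+1)$, smoothness plus convexity for the gap contraction, the gradient-difference assumption in Eq.~\eqref{eq: decrease} to bridge the algorithm's objective and the privileged one at cost $C_t$ per step, a martingale concentration argument (with $B$-Lipschitzness) for the sampled visitations yielding the $\frac{B}{\sqrt{T}}\log(1/\delta)$ term, and a weighted telescoping that uses monotonicity of $\mZ_t$ exactly as you anticipated. The only cosmetic difference is that the paper keeps the sampling error as scalar martingale differences $\epsilon_t=\nabla\mU_t(\hat d_t)^\top(\delta_t-q_t)$ rather than concentrating $\hat d_t$ itself in $\|\cdot\|_\infty$, and it runs the smoothness step directly on the privileged objective, but these do not change the substance of the argument.
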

The previous proposition shows that as the budget of the experimental campaign $T$ is increasing, we are increasingly converging to the optimal allocation of the experimental resources on average also on the objective that is unknown to us. In other words, our algorithm is becoming approximately optimal also under the privileged information setting representing the best possible algorithm. Despite having a limited understanding of potential maximizers at the beginning by following our procedure, we show that we are competitive to the best possible allocation of the resources. Now, we prove the Proposition. The proof is an extension of the Theorem 3 in \citep{mutny2023active}. Whether the objective satisfied the above conditions depends on the set $\mX$. Should the objective not satisfy smoothness, it can be easily extended by using the Nesterov smoothing technique as explained in the same priorly cited work. 

\begin{proof}[Proof of Proposition \ref{prop :theory}]
The proof is based on the proof of Frank-Wolfe convergence that appears Appendix B.4 in Thm. 3. in \citet{mutny2023active}.

Let us start by notation. 
We will use the notation that $\mU_t$ is the privileged objective $\mU(d|\mZ_t,\{x_t^f\})$, while the original objective will be specified as $\mU(d|\mZ_t, \mZ_t)$.

First, what we follow in the algorithm:
\begin{equation}\label{eq:fw-update-other}
    q_t = \argmin_{d\in \mD} \nabla \mU(\hat{d}_t|\mZ_t, \mZ_t)^\top d
\end{equation}
The executed visitation is simply generated via sampling a trajectory from $q_t$. Let us denote the empirical visiation of the trajectory as $\delta_t$,
\begin{equation}\label{eq:sampling}
    \delta_t \sim q_t.
\end{equation}

For the analysis, we also need the best greedy step for the unknown (privileged) objective $\mU$ as
\begin{equation}\label{eq:fw-update}
    z_t = \argmin_{d\in \mD} \nabla \mU_t\left(\hat{d}_t\right)^\top d.
\end{equation}

Let us start by considering the one step update:
\begin{eqnarray*}
\mU_t(\hat{d}_{t+1}) & = & \mU_t\left(\hat{d}_{t} + \frac{1}{t+1}(\delta_t-\hat{d}_t)\right) \\
& \stackrel{L\text{-smooth}} \leq & \mU_t(\hat{d}_t) + \frac{1}{t+1}\nabla \mU_t(\hat{d}_t)^\top(\delta_t - \hat{d}_t) + \frac{L}{2(1+t)^2}\norm{\delta_t - \hat{d}_t }^2\\
\mU(\hat{d}_{t+1})& \stackrel{\text{bounded}} \leq & \mU_t(\hat{d}_t) + \frac{1}{t+1}\nabla \mU_t(\hat{d}_t)^\top(\delta_t - \hat{d}_t) + \frac{L}{(1+t)^2} \\
& = & \mU_t(\hat{d}_t) + \frac{1}{t+1}\nabla \mU_t(\hat{d}_t)^\top(q_t - \hat{d}_t) + \frac{1}{t+1}\underbrace{\nabla \mU_t(\hat{d}_t)^\top(-q_t + \delta_t)}_{\epsilon_t} + \frac{L}{(1+t)^2} \\
\end{eqnarray*}
We will now carefully insert and subtract two set of terms depending on the real objective so that we can bound them using $\eqref{eq: decrease}$:
\begin{eqnarray*}
& = & \mU_t(\hat{d}_t) + \frac{1}{t+1}\nabla (\mU_t(\hat{d}_t)^\top-\nabla \mU_t(\hat{d}_t|\mZ_t,\mZ_t)^\top)(q_t - z_t) + \frac{1}{1+t}\mU_t(\hat{d}_t)^\top(z_t - \hat{d}_t) \\ & & +  \frac{1}{1+t}\epsilon_t + \frac{L}{(1+t)^2} \\
& \stackrel{\text{Using } \star} \leq & \mU_t(\hat{d}_t) + 2\frac{1}{1+t}C_t + \frac{1}{t+1}\mU_t(\hat{d}_t)^\top(z_t - \hat{d}_t) +  \frac{1}{1+t}\epsilon_t + \frac{L}{(1+t)^2} 
\end{eqnarray*}
Carrying on,  
\begin{eqnarray*}
\mU_t(\hat{d}_{t+1})  & \stackrel{\eqref{eq:fw-update}} \leq & \mU_t(\hat{d}_t) + \frac{1}{t+1}\nabla \mU_t(\hat{d}_t)^\top(d^\star_t - \hat{d}_t) + \frac{1}{1+t} \epsilon_t + \frac{L}{(1+t)^2} \\
& \stackrel{\text{convexity}} \leq & \mU_t(\hat{d}_t) - \frac{1}{t+1}(\mU_t(\hat{d}_t)-\mU_t(\eta^\star_t)) +  \frac{1}{1+t}\epsilon_t + \frac{L}{(1+t)^2} + \frac{1}{1+t}C_t \\
\mU_t(\hat{d}_{t+1}) - \mU_t(d^\star_t)& \leq & \mU_t(\hat{d}_t) - \mU_t(d^\star_t) - \frac{1}{t+1}(\mU_t(\hat{d}_t)-\mU_t(d^\star_t)) +  \frac{1}{1+t}\epsilon_t + \frac{L}{(1+t)^2} + \frac{1}{1+t} C_t\\
& \leq & \frac{t}{1+t} \left(\mU_t(\hat{d}_t) - \mU_t(d^\star_t)\right) +  \frac{1}{1+t}\epsilon_t+ \frac{L}{(1+t)^2} +  \frac{1}{1+t}C_t\\
& = & \frac{t}{1+t} \left(\mU_t(\hat{d}_t) - \mU_t(d^\star_t)\right) +  \frac{1}{1+t}\epsilon_t + \frac{L}{(1+t)^2} + \frac{1}{1+t}C_t
\end{eqnarray*}
Now multiplying by $t+1$ both sides, and summing on $\frac{1}{T-1}\sum_{t=1}^{T-1}$. Using the shorthand $\rho_t(\hat{d}_t) =\mU_t(\hat{d}_t) - \mU_t(d^\star_t)$ we get:

\[ \frac{1}{T}\sum_{t=1}^{T-1}(t+1)\rho_{t+1}(\hat{d}_{t+1}) \leq \frac{1}{T}\sum_{t=1}^{T-1} t\rho_t(\hat{d}_t) + \frac{1}{T}\sum_{t=1}^{T-1}\left(\epsilon_t +C_t + L/(1+t)\right)\]

First notice that $\frac{1}{T-1}\sum_{t=1}^{T-1}\epsilon_t \leq \frac{B}{\sqrt{T}}\log(1/\delta)$ by Lemma in \citet{mutny2023active} due to $\epsilon_t$ being martingale difference sequence. The other term is the sum on $\frac{1}{T-1}\sum_{t=1}C_t$ which appears in the main result. The sum on $\sum_{t=1}^{T-1} \frac{1}{1+t}\leq L\frac{\log T}{T}$. The rest is eliminated by the reccurence of the terms, and using that $\mU(d|\mZ_t, \{x_f^\star \} \leq \mU(d|\mZ_{t-1}, \{x_f^\star \})$ for any $d$. This is due to set $\mZ_t$ decreasing over time. We report the result in asymptotic notation as function of $T$ and $\log(1/\delta)$. 
\end{proof}
\section{Objective reformulation and linearization} \label{app:proof}

For the main objective we try to optimize over a subset of $T$ trajectories $\bX = \{\tau_i \in \mathcal{X}^H\}_{i=1}^T$. Let $\mathcal{X}^H$ be the set of sequences of inputs $\tau = (x_1, ..., x_H)$ where they consist of states in the search space $\mX$. Furthermore, assume there exists, in the deterministic environment, a constraint such that $x_{h+1} \in \mC(x_{h})$ for all $h = 1, ..., H-1$. Then we seek to find the set $\bX_*$, consisting of $T$ trajectories (possibly repeated), such that we solve the constrained optimization problem:
\begin{equation}
    \bX_* = \argmin_{\bX \in \mathcal{X}^{TH}}\max_{z, z' \in \mathcal{Z}} \text{Var}[f(z) - f(z') | \bX] \quad \text{ s.t. } \quad x_{h+1} \in \mC(x_{h}) \quad \forall t = 1, ..., h-1 \label{eq: unrelaxed_optim_prob_appendix}
\end{equation}
We define the objective as:
\begin{equation}
    U(\bX) = \max_{z, z' \in \mZ} \text{Var}\left[f(z) - f(z') | \bX \right] \label{eq: main_util_appendix}
\end{equation}
Our goal is to show that optimization over sequences can be simplified to state-action visitations as in \citet{mutny2023active}. For this, we require that the objective depends additively involving terms $x,a$ separately. We formalize this in the next result. In order to prove the result, we utilize the theory of reproducing kernel Hilbert spaces \citep{Cucker2002}.

\begin{lemma}[Additivity of Best-arm Objective] \label{app:additivity}
Let $\bX$ be a collection of $t$ trajectories of length $H$. Assuming that $f \sim \mathcal{GP}(0,k)$. Assuming that $k$ has Mercer decomposition as $k(x,y) = \sum_{k}\lambda_k \phi_k(x)\phi_k(y)$. 
 \[ f(x) = \sum_k \phi_k(x) \theta_k \quad \theta_k \sim \mathcal{N}(0,\lambda_k). \]
Let $d_{\bX}$ be the visitation of the states-action in the trajectories in $\bX$, as $d_{\bX} = \frac{1}{TH}\sum_{t=1}^T\sum_{x,a\in \tau_t}\delta_{x,a}$, where the $\delta_{x,a}$ represent delta function supported on $x,a$. Then optimization of the objective Eq. \eqref{eq: unrelaxed_optim_prob_appendix} can be rewritten as: \[U(d_\bX) = \frac{1}{TH} \max_{z, z' \in \mZ} ||\Phi(z)-\Phi(z')||_{\bV(d_{\bX})^{-1}}^2,\]
where $\bV(d)=\sum_i \sum_{x,a\in \tau_i} d(x,a) \Phi(x)\Phi(x)^\top + \bI\sigma^2/(TH) $ is a operator $\bV(d):\mH_k \rightarrow \mH_k$, the norm is RKHS norm, and $\Phi(z)_k = \phi_k(z)$. 
% Now let $d_\pi(\bX)$ be the distribution induced by $\bX$, i.e., $d_\pi(\bX) = \frac{1}{H} \sum_{x \in \bX} \delta_x $ where $\delta_x$ is a delta mass at $x$. Finally, define the new utility:
% \begin{equation}
%     F(d_\pi(\bX)) = \max_{z, z' \in \mZ} || \Phi(z) - \Phi(z') ||_{\bV(d_\pi(\bX))^{-1}}
% \end{equation}
% where $\bV(d_\pi) = \left( \sum_{x \in \mX} \frac{d_\pi(x)\Phi(x) \Phi(x)^\top}{\sigma^2} + \Tilde{\lambda} I \right)$ and $\tilde{\lambda} = \lambda / H$. Then, it follows that $U(\bX) \propto F(d_{\pi(\bX)})$.
\end{lemma}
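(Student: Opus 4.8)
The plan is to compute the posterior variance $\operatorname{Var}[f(z) - f(z') \mid \bX]$ explicitly using the Mercer/feature representation $f(x) = \Phi(x)^\top \theta$ with $\theta \sim \mathcal{N}(0, \bI)$ (after absorbing the eigenvalues $\lambda_k$ into the features), and then show the dependence on $\bX$ enters only through the empirical second-moment operator $\sum_{i}\sum_{x,a \in \tau_i} \Phi(x)\Phi(x)^\top$, which is manifestly additive over the state-action pairs in the trajectories. First I would recall the standard GP/Bayesian linear regression posterior: conditioning on observations $y_j = \Phi(x_j)^\top \theta + \epsilon_j$ with $\epsilon_j \sim \mathcal{N}(0,\sigma^2)$ over all $x_j$ appearing in $\bX$, the posterior covariance operator of $\theta$ is $\left(\sum_{x \in \bX}\frac{\Phi(x)\Phi(x)^\top}{\sigma^2} + \bI\right)^{-1}$. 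Then for any fixed pair $z, z'$, writing $\Delta = \Phi(z) - \Phi(z')$, we have $f(z) - f(z') = \Delta^\top\theta$, so $\operatorname{Var}[f(z)-f(z') \mid \bX] = \Delta^\top\left(\sum_{x \in \bX}\frac{\Phi(x)\Phi(x)^\top}{\sigma^2} + \bI\right)^{-1}\Delta = \|\Delta\|^2_{\bV(d_\bX)^{-1}}$ up to the $\frac{1}{TH}$ scaling that comes from pulling the normalization out of $d_\bX$. Taking the max over $z, z' \in \mathcal{Z}$ gives the claimed form of $U(d_\bX)$.

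The key conceptual step — and the one I would present carefully — is the observation that since the posterior variance depends on $\bX$ only through the (unnormalized) sum $\sum_i \sum_{x,a \in \tau_i}\Phi(x)\Phi(x)^\top$, and this sum decomposes as a sum of rank-one terms indexed by individual state-action visitations, we may replace the explicit list of trajectories by the visitation measure $d_\bX = \frac{1}{TH}\sum_i\sum_{x,a\in\tau_i}\delta_{x,a}$ without any loss of information relevant to the objective: $\sum_i\sum_{x,a\in\tau_i}\Phi(x)\Phi(x)^\top = TH \sum_{x,a} d_\bX(x,a)\Phi(x)\Phi(x)^\top$. This is precisely the additivity/linearity-in-$d$ structure required by the reduction of \citet{mutny2023active} to turn an optimization over trajectories into one over the state-action polytope $\mathcal{D}$; I would state this as the content of the lemma and note that convexity of $U$ in $d$ then follows since $d \mapsto \bV(d)$ is affine and $\bV \mapsto \Delta^\top \bV^{-1}\Delta$ is operator-convex (composed with a max, which preserves convexity).

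For rigor in the infinite-dimensional (RKHS) case I would invoke the theory of reproducing kernel Hilbert spaces: $\bV(d)$ is a bounded, self-adjoint, strictly positive operator on $\mathcal{H}_k$ (strictly positive because of the $+\bI\sigma^2/(TH)$ regularization term), hence boundedly invertible, so all the quadratic forms $\Delta^\top \bV(d)^{-1}\Delta$ are well-defined and finite; the posterior-variance formula itself can be justified either by the matrix inversion lemma applied to the finite data matrix and then passing to the operator form, or directly via the standard GP posterior covariance $k(z,z') - k_{z,\bX}(K_{\bX\bX}+\sigma^2 \bI)^{-1}k_{\bX z'}$ and algebraically rewriting it in feature coordinates. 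The main obstacle I anticipate is purely bookkeeping: matching the normalization constants ($\frac{1}{TH}$ factors, the role of $\sigma^2$ versus heteroscedastic $\sigma^2(x,a)$, and whether features carry the eigenvalues $\lambda_k$ or not) so that the final expression exactly matches the stated $\bV(d)=\sum_i\sum_{x,a\in\tau_i}d(x,a)\Phi(x)\Phi(x)^\top + \bI\sigma^2/(TH)$, together with making the operator-algebra manipulations (Woodbury identity in Hilbert space) precise rather than formal. None of this is deep, but it requires care.
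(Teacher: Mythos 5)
Your proposal is correct and takes essentially the same route as the paper: both reduce $\operatorname{Var}[f(z)-f(z')\mid \bX]$ to the quadratic form $\|\Phi(z)-\Phi(z')\|^2_{\bV(d_\bX)^{-1}}$ and observe that $\bV$ is a sum of rank-one terms $\Phi(x)\Phi(x)^\top$ over visited state-action pairs, which is exactly the additivity needed to replace the trajectory list by the visitation measure $d_\bX$ (up to the $1/(TH)$ and $\sigma^2$ bookkeeping you flag). The only cosmetic difference is that you start from the weight-space (Bayesian linear regression) posterior of $\theta$, whereas the paper starts from the function-space posterior kernel $k_{\bX}(z,z')$ and converts it to feature coordinates via a Sherman--Morrison--Woodbury/matrix-inversion lemma, a route you yourself mention as an equivalent justification.
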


\begin{proof}
Notice that the posterior GP of any two points $z,z'$ is $(f(z),f(z')) = \mathcal{N}((\mu(z),\mu(z')),\bK_{z,z'})$, where $\bK_{z,z'}$ is posterior kernel (consult \citet{rasmussen2005gps} for details) defined via a posterior kernel $k_{\bX}(z,z') = k(z,z') - k(z,\bX) (\bK(\bX,\bX) + \sigma^2 \bI)^{-1}k(\bX, z')$. 

Utilizing $k(z,z')= \Phi(z)^\top \Phi(z)$ (RKHS inner product) with the Mercer decomposition we know that $k_t(z) = \Phi(\bX) \phi(z)$. Applying the matrix inversion lemma, the above can be written as using $\bV = \sum_{t=1}^T \sum_{x\in \tau_t} \Phi(x)\Phi(x)^\top + \sigma^2 \bI_{\mH}. $

    \begin{eqnarray*}
    k_{\bX}(z,z') & = & k(z,z') - k_t(z)^\top(\bK_{\bX,\bX} + \sigma^2\bI)^{-1} k_t(z') \\
    & \stackrel{\text{Mercer}}  = & \Phi(z)^\top\Phi(z') - \Phi(z)^\top \Phi(\bX)^\top  (\Phi(\bX) \Phi(\bX)^\top  + \sigma^2\bI)^{-1}   \Phi(\bX) \Phi(z') \\
    & \stackrel{\text{Lemma \ref{lemma:matrix_inversion_lemma}}} = &  \Phi(z)^\top\Phi(z') - \Phi(z)^\top \bV^{-1}  (\bV-\bI\sigma^2) \Phi(z')\\
    & = & \Phi(z)^\top\bV^{-1}\bV\Phi(z') - \Phi(z)^\top \bV^{-1}  (\bV-\bI\sigma^2) \Phi(z')\\
    & = & \Phi(z)^\top\bV^{-1}(\bV-\bV + \bI \sigma^2)\Phi(z')
    \end{eqnarray*}
    Leading finally to:
    \begin{equation}\label{eq: relation}
    k_{\bX}(z,z') = \sigma^2 \Phi(z)^\top\left(\sum_{t=1}^T\sum_{x \in \tau_t} \Phi(x)\Phi(x)^\top + \sigma^2 \bI_{\mH_k} \right)^{-1}\Phi(z'). 
     \end{equation}

Let us calculate $ \text{Var}[f(z) - f(z') | \bX ]$. The variance does not depend on the mean. Hence, 
\begin{eqnarray*}
& & \text{Var} \left[ f(z) - f(z') | \bX \right]\\ &  = &  \text{Var}(f(z)) - \text{Var}(f(z'))- 2\text{Cov}(f(z),f(z')) \\
& = & k_{\bX}(z,z) + k_{\bX}(z',z') - 2 k_{\bX}(z,z') \\
& \stackrel{\eqref{eq: relation}}  =& (\Phi(z)-\Phi(z'))\left(\sum_{t=1}^T\sum_{x \in \tau_t} \Phi(x)\Phi(x)^\top + \sigma^2 \bI_{\mH_k}\right)^{-1} (\Phi(z)-\Phi(z')) \\
& = & (\Phi(z)-\Phi(z'))\left(\frac{TH}{TH}\sum_{t=1}^T\sum_{x \in \mX} \#(x \in \tau_t) \Phi(x)\Phi(x)^\top + \sigma^2 \bI_{\mH_k}\right)^{-1} (\Phi(z)-\Phi(z'))\\
\end{eqnarray*}
to arrive at:
\begin{eqnarray}\label{eq:relation3}
 \text{Var} \left[ f(z) - f(z') | \bX \right] = \frac{ (\Phi(z)-\Phi(z'))}{TH}\left(\sum_{x \in \mX} d(\bX) \Phi(x)\Phi(x)^\top + \frac{\sigma^2}{TH} \bI_{\mH_k}\right)^{-1} (\Phi(z)-\Phi(z'))  
\end{eqnarray}
The symbol $\#$ counts the number of occurrences. Notice that we have been able to show that the objective decomposes over state-action visitations as $d_\bX$ decomposes over their visitations
% The posterior covariance of $\theta$ given data-points $\bX \in \mathbb{R}^{n \times d}$ is given by:
% \begin{equation*}
%     \Sigma_{\theta} = (\Phi(\bX)^T \Phi(\bX) + \lambda \sigma^2 I)^{-1} \sigma^2
% \end{equation*}
% where $\sigma^2$ is the variance of the observation noise, $\lambda = \Tilde{\lambda} H$ the model's regularization constant, and $\Phi(X)^T \Phi(X) = \sum_{i = 1}^n \Phi(x_i) \Phi(x_i)^T$. Then note that:
% \begin{equation*}
%     \sum_{i = 1}^H \Phi(x_i) \Phi(x_i)^T = H \sum_{x \in \mathcal{X}} d_\pi(x) \Phi(x) \Phi(x)^T
% \end{equation*}
% And therefore for any $z, z' \in \mZ$:
% \begin{align*}
%     || \Phi(z) - \Phi(z') ||^2_{\bV(d_\pi(\bX))^{-1}} &= || \Phi(z) - \Phi(z') ||^2_{H^{-2} \sigma^{-4} \Sigma_\theta} \\
%     &= H^{-2} \sigma^{-4} (\Phi(z) - \Phi(z'))^T \Sigma_\theta (\Phi(z) - \Phi(z')) \\
%     &\propto \Phi(z)^T \Sigma_\theta \Phi(z) + \Phi(z')^T \Sigma \Phi(z') - 2 \Phi(z)^T \Sigma_\theta \Phi(z)  \\
%     &= \text{Var}(f(z) | \bX) + \text{Var}(f(z') | \bX) - 2 \text{Cov}(f(z), f(z') | \bX) \\ 
%     &= \text{Var}\left[f(z) - f(z') | \bX \right]
% \end{align*}
% In particular, we have
% \begin{equation}
%     \max_{z, z' \in \mZ} || \Phi(z) - \Phi(z') ||_{\bV(d_\pi(\bX))^{-1}} = H^{-2} \sigma^{-4} \text{Var}[f(z) - f(z') | \bX]
% \end{equation}
\end{proof}

Note that the objective equivalence does \textit{not} imply that optimization problem in Eq. \eqref{eq: unrelaxed_optim_prob_appendix} is equivalent to finding,
\begin{equation}
     d_* = \argmin_{d_\pi \in \mD} \max_{z, z' \in \mZ} || \Phi(z) - \Phi(z') ||_{\bV(d_\pi)^{-1}} \label{eq: relaxed_optim_problem_appendix}.
\end{equation}
In other words, optimization over trajectories and optimization over $d_\pi \in \mD$ is not equivalent. The latter is merely a continuous relaxation of discrete optimization problems to the space of Markov policies. It is in line with the classical relaxation approach addressed in experiment design literature with a rich history, e.g., \citet{Chaloner1995}. For introductory texts on the topic, consider \citet{Pukelsheim2006} for the statistical perspective and \citet{Boyd2004} for the optimization perspective. However, as \citet{mutny2023active} points out, reducing the relaxed objective does reduce the objective as Eq. \eqref{eq: unrelaxed_optim_prob_appendix} as well. In other words, by optimizing the relaxation with a larger budget of trajectories or horizons, we are able to decrease Eq. \eqref{eq: unrelaxed_optim_prob_appendix} as well. 

%The core difference where the two formulations differ is that there is no unique way to construct an optimal trajectory from state visitations.

% This is because the space of densities does not always induce a fixed set of $H$ inputs. For example, assume we had to choose a single experiment from two possible states, we could have an optimal density of $d_* = (0.9, 0.1)$, however, the solution of problem \eqref{eq: unrelaxed_optim_prob_appendix} would instead induce the density $d_\pi(\bX_*) = (1, 0)$.
For completeness, we state the auxiliary lemma. We make use of the Sherman-Morrison-Woodbury (SMW) formula, \citep{woodbury1950inverting}:

\begin{lemma}[Sherman-Morrison-Woodbury (SMW)]
	Let $\bA \in \mR^{n\times q}$ and $\mathbf{D} \in \mR^{q\times q}$ then:
    \begin{equation}\label{eq:woodbury}
	   (\bA^\top \mathbf{D}\bA + \rho^2 \bI)^{-1} = \rho^{-2}\bI - \rho^{-2} A^T(\mathbf{D}^{-1}\rho^{2} + \bA \bA^\top)^{-1}\bA.
	\end{equation}
	Here we do the opposite, and invert an $n \times n$ matrix instead of a $q \times q$ one.
\end{lemma}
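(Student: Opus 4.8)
The plan is to prove the identity by \emph{direct verification}: denote the claimed right-hand side by $\mathbf{M} := \rho^{-2}\bI - \rho^{-2}\bA^\top(\rho^2\mathbf{D}^{-1} + \bA\bA^\top)^{-1}\bA$ and show $(\bA^\top\mathbf{D}\bA + \rho^2\bI)\mathbf{M} = \bI$. Since both factors are square this one-sided check already certifies $\mathbf{M}$ as the inverse. I would first fix the intended reading of the dimensions, under which $\bA$ acts so that $\bA^\top\mathbf{D}\bA + \rho^2\bI$ is the $n\times n$ matrix being inverted on the left while $\rho^2\mathbf{D}^{-1} + \bA\bA^\top$ is the $q\times q$ matrix inverted on the right (this is precisely the computational payoff advertised in the lemma's closing sentence). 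The standing assumptions are $\rho\neq 0$, $\mathbf{D}$ invertible, and existence of the two displayed inverses.

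Next I would introduce the shorthand $\mathbf{S} := (\rho^2\mathbf{D}^{-1} + \bA\bA^\top)^{-1}$, so that $\mathbf{M} = \rho^{-2}(\bI - \bA^\top\mathbf{S}\bA)$ and, crucially, $\mathbf{S}^{-1} = \rho^2\mathbf{D}^{-1} + \bA\bA^\top$. The one nontrivial ingredient is the rearrangement $\bA\bA^\top = \mathbf{S}^{-1} - \rho^2\mathbf{D}^{-1}$, which is what drives the cancellation below.

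I would then expand the product and collect terms so that the result is $\bI$ plus a remainder of the form $\rho^{-2}\bA^\top[\,\cdot\,]\bA$; concretely, $(\bA^\top\mathbf{D}\bA + \rho^2\bI)\mathbf{M} = \bI + \rho^{-2}\bA^\top\big(\mathbf{D} - \mathbf{D}\bA\bA^\top\mathbf{S} - \rho^2\mathbf{S}\big)\bA$. The key algebraic move is to substitute $\bA\bA^\top = \mathbf{S}^{-1} - \rho^2\mathbf{D}^{-1}$ into the bracketed middle factor: the term $\mathbf{D}\mathbf{S}^{-1}\mathbf{S}$ reduces to $\mathbf{D}$ and the term $\rho^2\mathbf{D}\mathbf{D}^{-1}\mathbf{S}$ reduces to $\rho^2\mathbf{S}$, so the bracket telescopes as $\mathbf{D} - \mathbf{D} + \rho^2\mathbf{S} - \rho^2\mathbf{S} = 0$. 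Hence the remainder vanishes and the product equals $\bI$, proving the identity.

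The main obstacle here is not conceptual but purely bookkeeping: because the matrices do not commute I must keep every product in its correct left-to-right order, and I must track the scalar $\rho$ factors so that the telescoping is exact rather than merely approximate. A quick symmetric check of $\mathbf{M}(\bA^\top\mathbf{D}\bA + \rho^2\bI) = \bI$ (or simply invoking squareness) confirms that $\mathbf{M}$ is a genuine two-sided inverse, completing the argument.
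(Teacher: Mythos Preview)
Your direct-verification argument is correct: defining $\mathbf{S}=(\rho^{2}\mathbf{D}^{-1}+\bA\bA^{\top})^{-1}$ and expanding $(\bA^{\top}\mathbf{D}\bA+\rho^{2}\bI)\mathbf{M}$ indeed yields $\bI+\rho^{-2}\bA^{\top}[\mathbf{D}-\mathbf{D}\bA\bA^{\top}\mathbf{S}-\rho^{2}\mathbf{S}]\bA$, and the substitution $\bA\bA^{\top}=\mathbf{S}^{-1}-\rho^{2}\mathbf{D}^{-1}$ makes the bracket collapse to zero exactly as you describe. Squareness then gives the two-sided inverse.

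As for comparison with the paper: there is nothing to compare. The paper does \emph{not} prove this lemma; it merely states the Sherman--Morrison--Woodbury identity with a citation to Woodbury (1950) and then invokes it to prove the subsequent Matrix Inversion Lemma. So your proposal supplies a proof where the paper offers none, and the direct-verification route you take is the standard and cleanest one. One minor caveat: the dimensions in the paper's statement are internally inconsistent (with $\bA\in\mR^{n\times q}$ the product $\bA^{\top}\mathbf{D}\bA$ forces $\mathbf{D}$ to be $n\times n$, not $q\times q$, and the left-hand inverse is then $q\times q$ while the right-hand one is $n\times n$), so your parenthetical about ``fixing the intended reading'' is warranted, though your specific assignment of which side is $n\times n$ versus $q\times q$ is swapped relative to the closing sentence of the lemma. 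This does not affect the algebra.
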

to show the following. 
	\begin{lemma}[Matrix Inversion Lemma]\label{lemma:matrix_inversion_lemma}
			Let $\bA \in \mR^{n\times q}$ then 
			\begin{equation}\label{eq:matrix_inversion_lemma}
			\bA^\top(\bA\bA^\top + \rho^2 \bI)^{-1} = (\bA^\top \bA + \rho^2 \bI)^{-1}\bA^\top.
			\end{equation}
			Note that instead of inverting $ n \times n $ matrix, we can invert a $q \times q$ matrix.
		\end{lemma}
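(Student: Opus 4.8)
The plan is to prove the Matrix Inversion Lemma \eqref{eq:matrix_inversion_lemma} directly from the Sherman--Morrison--Woodbury identity \eqref{eq:woodbury} stated just above it, which we are free to invoke. The claim is the operator identity $\bA^\top(\bA\bA^\top + \rho^2\bI)^{-1} = (\bA^\top\bA + \rho^2\bI)^{-1}\bA^\top$ for $\bA \in \mR^{n\times q}$, with the two $\bI$'s being identities of the appropriate (different) sizes, $n\times n$ on the left-hand side inside the inverse and $q\times q$ on the right-hand side.

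First I would apply SMW to the right-hand side. Set $\mathbf{D} = \bI_n$ in \eqref{eq:woodbury}; then the formula reads $(\bA^\top\bA + \rho^2\bI_q)^{-1} = \rho^{-2}\bI_q - \rho^{-2}\bA^\top(\rho^2\bI_n + \bA\bA^\top)^{-1}\bA$. Right-multiplying this identity by $\bA^\top$ gives
\[
(\bA^\top\bA + \rho^2\bI_q)^{-1}\bA^\top = \rho^{-2}\bA^\top - \rho^{-2}\bA^\top(\bA\bA^\top + \rho^2\bI_n)^{-1}\bA\bA^\top.
\]
The goal is then to show the right-hand side equals $\bA^\top(\bA\bA^\top + \rho^2\bI_n)^{-1}$. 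Write $\bM = \bA\bA^\top + \rho^2\bI_n$, so $\bA\bA^\top = \bM - \rho^2\bI_n$. Substituting, the right-hand side becomes $\rho^{-2}\bA^\top - \rho^{-2}\bA^\top \bM^{-1}(\bM - \rho^2\bI_n) = \rho^{-2}\bA^\top - \rho^{-2}\bA^\top + \bA^\top\bM^{-1} = \bA^\top\bM^{-1} = \bA^\top(\bA\bA^\top + \rho^2\bI_n)^{-1}$, which is exactly the left-hand side of \eqref{eq:matrix_inversion_lemma}. This completes the proof.

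There is essentially no real obstacle here; the only thing to be careful about is bookkeeping of the two different identity sizes and invertibility, since both $\bA\bA^\top + \rho^2\bI_n$ and $\bA^\top\bA + \rho^2\bI_q$ are positive definite for $\rho \neq 0$ and hence invertible, so SMW applies verbatim. An alternative, even shorter, route would be to multiply the claimed identity on the left by $(\bA^\top\bA + \rho^2\bI_q)$ and on the right by $(\bA\bA^\top + \rho^2\bI_n)$ and check that both sides reduce to $(\bA^\top\bA + \rho^2\bI_q)\bA^\top = \bA^\top\bA\bA^\top + \rho^2\bA^\top = \bA^\top(\bA\bA^\top + \rho^2\bI_n)$ using only the trivial commutation $\bA^\top(\bA\bA^\top) = (\bA^\top\bA)\bA^\top$; I would likely present this verification version as the cleanest, with the SMW-based derivation as a remark since the surrounding text explicitly frames the lemma as a consequence of SMW. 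Either way the write-up is a few lines of elementary algebra.
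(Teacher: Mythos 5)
Your proof is correct and takes essentially the same approach as the paper: both derive the identity from the SMW formula by expanding one of the two regularized inverses and simplifying with a short algebraic manipulation (the paper expands the $n\times n$ inverse $(\bA\bA^\top+\rho^2\bI)^{-1}$ on the left-hand side, whereas you expand the $q\times q$ inverse on the right-hand side --- a mirror image of the same computation). The even shorter direct verification you sketch at the end, using only $\bA^\top(\bA\bA^\top)=(\bA^\top\bA)\bA^\top$, is also valid and arguably cleaner, but it is not the route the paper takes.
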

		
		\begin{proof} 
			\begin{eqnarray*}
				\bA^\top(\bA\bA^\top + \rho^2 \bI)^{-1}  & \stackrel{\text{SMW}} = & \bA^\top (   \rho^{-2}\bI  - \rho^{-2}\bA(\rho^{2}\bI + \bA^\top \bA )^{-1}\bA^\top  )\\
				& = &  (\rho^{-2}\bI  - \rho^{-2}\bA^\top\bA(\rho^{2}\bI + \bA^\top \bA )^{-1} )\bA^\top  \\
				& = & (\rho^{-2}(\rho^{2}\bI + \bA^\top \bA )  - \rho^{-2}\bA^\top\bA )(\rho^{2}\bI + \bA^\top \bA )^{-1}\bA^\top \\
				& = & (\bA^\top \bA + \rho^2 \bI)^{-1}\bA^\top
			\end{eqnarray*}
		\end{proof}

\subsection{Objective formulation for general kernel methods} \label{sec: general_kernel_formulation}
The previous discussion also allows us to write the objective in terms of the general kernel matrix instead of relying on finite dimensional embeddings. The modification is very similar and relies again on Sherman-Mirrison-Woodbury lemma. 

We now work backwards from \eqref{eq:relation3}, and first write the objective in terms of features of arbitrarily large size. Using the shorthand, $\tilde{\sigma}^2 = \sigma^2 / TH$, let us define a diagonal matrix that describes the state-action distribution $\mathbf{D} = \text{diag}(\{d_x : x \in \mathcal{X}\})$ of the size $|\mX|\times |\mX|$, and $\Phi(\mathcal{X})$ which corresponds to the unique (possibly infinite-dimensional) embeddings of each element in $\mathcal{X}$, ordered in the same way as $\mathbf{D}$.
\begin{align*}
    \tilde{\sigma}^2\left(\sum_{x \in \mathcal{X}} d(x) \Phi(x) \Phi(x) + \bI \tilde{\sigma}^2 \right)^{-1} &= \tilde{\sigma}^2 \left(\Phi(\mathcal{X})^T \mathbf{D} \Phi(\mathcal{X}) + \bI \tilde{\sigma}^2 \right)^{-1} \\
    &= \bI - \Phi(\mathcal{X})^T (\tilde{\sigma}^2 \mathbf{D}^{-1} + \Phi(\mathcal{X})\Phi(\mathcal{X})^\top)^{-1}\Phi(\mathcal{X})
\end{align*}
If we then pre-multiply by $\Phi(z)^\top$ and $\Phi(z')$  we obtain:
\begin{align*}
    k_{\bX}(z,z') &=\Phi(z)^\top \Phi(z') - \Phi(z)^\top \Phi(\mathcal{X})^\top(\tilde{\sigma}^2 \mathbf{D}^{-1} + \Phi(\mathcal{X})\Phi(\mathcal{X})^\top)^{-1}\Phi(\mathcal{X})\Phi(z')
\end{align*}
Finally giving:
\begin{equation}
    k_{\bX}(z,z') = k(z, z') - k(z, \mathcal{X})(\tilde{\sigma}^2\mathbf{D}^{-1} + k(\mathcal{X}, \mathcal{X}))^{-1}k(\mathcal{X}, z')
\end{equation}
which allows us to calculate the objective for general kernel methods at the cost of an $|\mathcal{X}| \times |\mathcal{X}|$ inversion. Upon identifying the $z,z'$ that maximize the above, we can use them in an optimization procedure. This holds irrespective of whether the state space is discrete or continuous. In continuous settings however, we again require a parametrization of the infinite dimensional probability distribution by some finite means such as claiming that $\bD_\theta$ contains some finite dimensional simplicity. This is what we do with the linear system example in Section \ref{sec:continuous_mdps}.

\subsection{Linearizing the objective}

To apply our method, we find ourselves having to frequently solve RL sub-problems where we try to maximize $\sum_{x, a}d(x, a) \nabla F(x, a)$. To approximately solve this problem in higher dimensions, it becomes very important to understand what the linearized functional looks like.

\begin{remark} \label{app:linearization}
    Assume the same black-box model as in Lemma \ref{app:additivity}, and further assume that we have a mixture of policies $\pi_{\mix}$ with density $d_{\pi_{\mix}}$, such that there exists a set $\ \bX_{\mix} \ $ satisfying $d_{\pi_{\mix}} = \frac{1}{N}\sum_{x \in \bX_{\mix}} \delta_x \ $ for some integer $N$. Then:
    \begin{align*}
         \nabla F(d_{\pi_{\mix}})(x, a) \propto -\left( \text{Cov}[f(z_*), f(x) | \bX_{\mix}] - \text{Cov}[f(z_*'), f(x) | \bX_{\mix}] \right)^2
    \end{align*}
where $z_*, z_*' = \argmax_{z, z' \in \mathcal{Z}} \text{Var}[f(z) - f(z')]$. 
\end{remark}

\begin{proof}
    To show this, we begin by defining:
\begin{align*}
    \Sigma_{\theta, d} &= \left(\sum_{x \in \mathcal{X}} \Phi(x) \Phi(x)^T d(x) +  \sigma^2 I \right)^{-1} \\
    z_*, z_*' &= \argmax_{z, z' \in \mathcal{Z}} || \Phi(z) - \Phi(z') ||^2_{\Sigma_{\theta, d}} \\
    \Tilde{z}_* &= \Phi(z_*) - \Phi(z_*') \\
\end{align*}
In the definition above we dropped the constant pre-factors since they do not influence the maximizer of the gradient as they are related by a constant multiplicative factor. 

It then follows, by applying Danskin's Theorem that:
\begin{align*}
    \nabla U (d)(x) &= \nabla \Tilde{z}_*^T \Sigma_{\theta, d} \Tilde{z}_* \\
    &= \nabla \text{Tr}\left\{\Tilde{z}_* \Tilde{z}_*^T \Sigma_{\theta, d} \right\} \\
    &= \text{Tr} \left\{ \Tilde{z}_* \Tilde{z}_*^T \nabla \Sigma_{\theta, d} \right\} \\
    &= - \text{Tr} \left\{ \Tilde{z}_* \Tilde{z}_*^T \Sigma_{\theta, d} \Phi(x) \Phi(x)^T \Sigma_{\theta, d} \right\} \qquad (\text{as }\partial K^{-1} = - K^{-1} (\partial K) K^{-1})) \\
    &= - \text{Tr} \left\{ \Tilde{z}_*^T \Sigma_{\theta, d} \Phi(x) \Phi(x)^T \Sigma_{\theta, d} \Tilde{z}_* \right\} \\
    &= - \left(\Tilde{z}_*^T \Sigma_{\theta, d} \Phi(x) \right) \left(\Phi(x)^T \Sigma_{\theta, d} \Tilde{z}_* \right) \\
    &\propto - \left( \text{Cov}[f(z_*), f(x)] - \text{Cov}[f(z_*'), f(x)]  \right) \left( \text{Cov}[f(x), f(z_*)] - \text{Cov}[f(x), f(z_*')] \right) \\
    &= - \left( \text{Cov}[f(z_*), f(x)] - \text{Cov}[f(z_*'), f(x)]  \right)^2
\end{align*}
\end{proof}

\section{Implementation details and Ablation study}

In this section we provide implementation details, and show some studies into the effects of specific hyper-parameters. We note that the implementation code will be made public after public review.

\subsection{Approximating the set of maximizers using Batch BayesOpt} \label{app: approximating_Z_batchBO}

We give details of the two methods used for approximating the set of potential maximizers. In particular, we first focus on Thompson Sampling \citep{kandasamy2018parallelised}:
\begin{equation*}
    \mathcal{Z}_{cont}^{(TS)} = \left\{\argmax_{x \in \mathcal{X}_c} f_i(x) : f_i \sim \mathcal{GP}(\mu_t,\sigma_t)\right\}_{i = 1}^K
\end{equation*}
where $K$ is a new hyper-parameter influencing the accuracy of the approximation of $\mZ$. %When applying this in higher dimensions, 
We found that the algorithm could be too exploratory in certain scenarios. Therefore, we also propose an alternative that encourages exploitation by guiding the maximization set using BayesOpt through the UCB acquisition function \citep{srinivas2009gaussian}: 
\begin{equation*}
    \mathcal{Z}_{cont}^{(UCB)} = \left\{\argmax_{x \in \mathcal{X}_c} \mu_t(x) + \beta_i \sigma_t(x) : \beta_i \in \mathcal{B} \right\}_{i = 1}^K
\end{equation*}
where $\mathcal{B} = \texttt{linspace(0, 2.5, K)}$ which serves as scaling for the size of set $\mathcal{Z}_{cont}^{(UCB)}$. Both cases reduce optimization over $\mZ$ to enumeration as with discrete cases.

\subsection{Benchmark Details} \label{app: benchmark_details}

For all benchmarks, aside from the knorr pyrazole synthesis example, we use a standard squared exponential kernel for the surrogate Gaussian Process:
\begin{equation*}
    k_{rbf}(x, x') = \sigma_{rbf}^2 \exp\left( -\frac{||x - x'||_2^2}{2 \ell_{rbf}} \right)
\end{equation*}
where $\sigma_{rbf}^2$ is the prior variance of the kernel, and $\ell_{rbf}$ the kernel. We fix the values of all the hyper-parameters a priori and use the same for all algorithms. The hyper-parameters for each benchmarks are included in Table \ref{tab: hypers_bench}.

For the knorr pyrazole synthesis example, we further set $\alpha_{ode} = 0.6$, $\alpha_{rbf} = 0.001$, $k_1 = 10$, $k_2 = 874$, $k_3 = 19200$, $\alpha_{sig} = 5$. Recall we are using a finite dimensional estimate of a GP such that:
\begin{equation}
    f(x) \approx \omega_{ode} \Phi_{ode}(x) + \sum_{i = 1}^M \omega_{rbf, i} \Phi_{rbf}(x)
\end{equation}
in this case we set a prior to the ODE weight such that $\omega_{ode} \sim \mathcal{N}(0.6, 0.0225)$. This is incorporating two key pieces of prior knowledge that (a) the product concentration should be positive, and (b) we expect a maximum product concentration between 0.15 and 0.45.

The number of features for each experiment, $M$, is set to be $M = |\mX|$ in the discrete cases and $M = \min\left( 2^{5 + d}, 512 \right)$ where $d$ is the problem dimensionality.

In the case of Local Search Region BayesOpt (LSR) \citep{paulson2023lsr} we set the exploration hyper-parameter to be $\gamma = 0.01$ in all benchmarks.

\begin{table}[htbp]
  \centering
  \caption{Benchmark and hyper-parameter information. $\Delta_{max}$ represents the size of the box constraints in the traditional benchmarks. For the synchronous benchmarks and for SnAr we used a noise level of $\sigma^2 = 0.001$. For the asynchronous benchmarks, and the knorr pyrazole example we used $\sigma^2 = 0.0001$. For the Ypacarai example we used $\sigma^2 = 0.001$ and $\sigma^2 = 0.01$ for the episodic and immediate feedback respectively.}
  \label{tab: hypers_bench}
  \begin{tabular}{|c|c|c|c|}
    \hline
    \textbf{Benchmark Name} &  \textbf{$\Delta_{max}$} &\textbf{Variance $\sigma_{rbf}$} & \textbf{Lengthscale $\ell_{rbf}$} \\
    \hline
    Knorr pyrazole & -- & 0.001 & 0.1 \\
    Constrained Ypacarai & -- & 1 & 0.2 \\
    Branin2D & 0.05 & 0.6 & 0.15 \\
    Hartmann3D & 0.1 & 2.0 & 0.13849 \\
    Hartmann6D & 0.2 & 1.7 & 0.22 \\
    Michalewicz2D & 0.05 & 0.35 & 0.179485 \\
    Michalewicz3D & 0.1 & 0.85 & 0.179485 \\
    Levy4D & 0.1 & 0.6 & 0.14175 \\
    SnAr & 0.1 & 0.8 & 0.2 \\
    \hline
  \end{tabular}
\end{table}

\subsection{Ypacarai Lake}

\citet{samaniego2021bayesian} investigated the use of Bayesian Optimization for monitoring the lake quality in Lake Ypacarai in Paraguay. We extend the benchmark to include additional transition constraints, as well as initial and end-point constraints. These are all shown in Figure \ref{fig: ypacarai}.

\begin{figure}
    \centering
    \includegraphics[width = 0.6\textwidth]{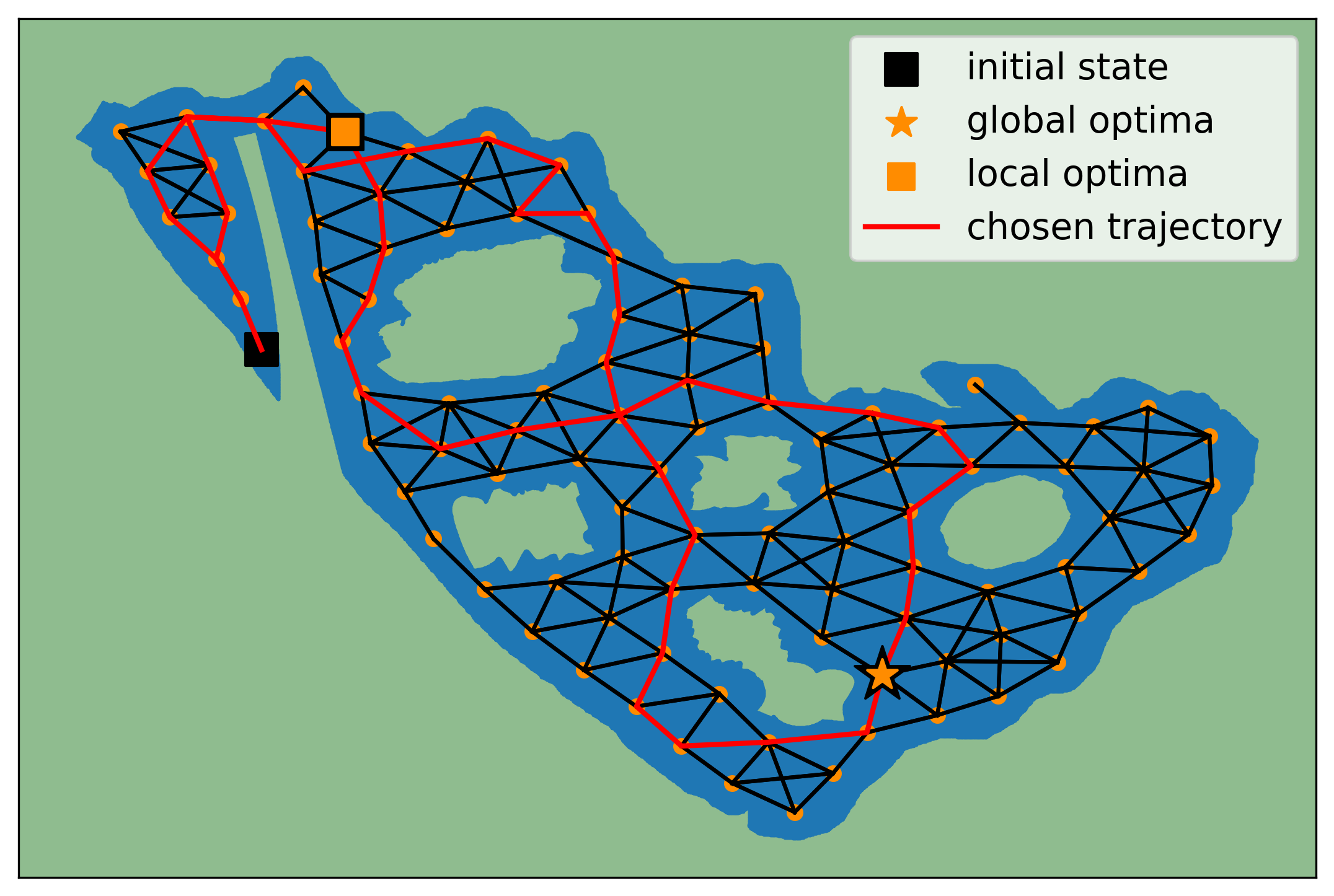}
    \caption{Lake Ypacari with the added movement constraints. We show one local optimum and one global one. The constraints of the problem requiring beginning and ending the optimization in the dark square.}
    \label{fig: ypacarai}
\end{figure}

\subsection{Free-electron Laser}\label{app:swissfel}
We use the simulator from \citet{Mutny2020} that optimizes quadrupole magnet orientations for our experiment with varying noise levels. We use a 2-dimensional variant of the simulator. We discretize the system on $10\times10$ grid and assume that the planning horizon $H = 100$. The simulator itself is a GP fit with $\gamma = 0.4$, hence we use this value. Then we make a choice that the noise variance is proportional to the change made as $\sigma^2(x,a) = s(1+w||x-a||^2)$, where $s = 0.01$ and $w = 20$. Note that $x \in [-0.5, 0.5]^2$ in this modeling setup. This means that local steps are indeed very desired. We showcase the difference to classical BayesOpt, which uses the worst-case variance $\sigma = \sup_{x,a} s(1+w||x-a||^2)$ for modeling as it does not take into account the state in which the system is. We see that the absence of state modeling leads to a dramatic decrease in performance as indicated by much higher inference regret in Figure \ref{fig: swissfel}.

\subsection{Ablation Study} \label{app: ablation}

\subsubsection{Number of mixture components}

We investigate the effect number of components used in the mixture policy when optimizing the Frank-Wolfe algorithm. We tested on the four real-world problem using $N = 1, 10$ and $25$. In the Ypacari example (see Figure \ref{fig: ablation_ypacarai}) we see very little difference in the results, while in the Knorr pyrazole synthesis (see Figure \ref{fig: ablation_flow_ode}) we observe a much bigger difference. A single component gives a much stronger performance than multiple ones -- we conjecture this is because the optimum is on the edge of the search space, and adding more components makes the policy stochastic and less likely to reach the boarder (given episodes are of length ten and ten right-steps are required to reach the boarder). 

Overall, it seems the performance of a single component is better or at worst comparable as using multiple components. This is most likely due to the fact that we only follow the Markovian policies for a single time-step before recalculating, making the overall impact of mixture policies smaller. Based on this, we only present the single-component variant in the main paper.

\begin{figure*}[!ht]
	\centering
	\begin{subfigure}[t]{0.49\textwidth}
	\includegraphics[width = \textwidth]{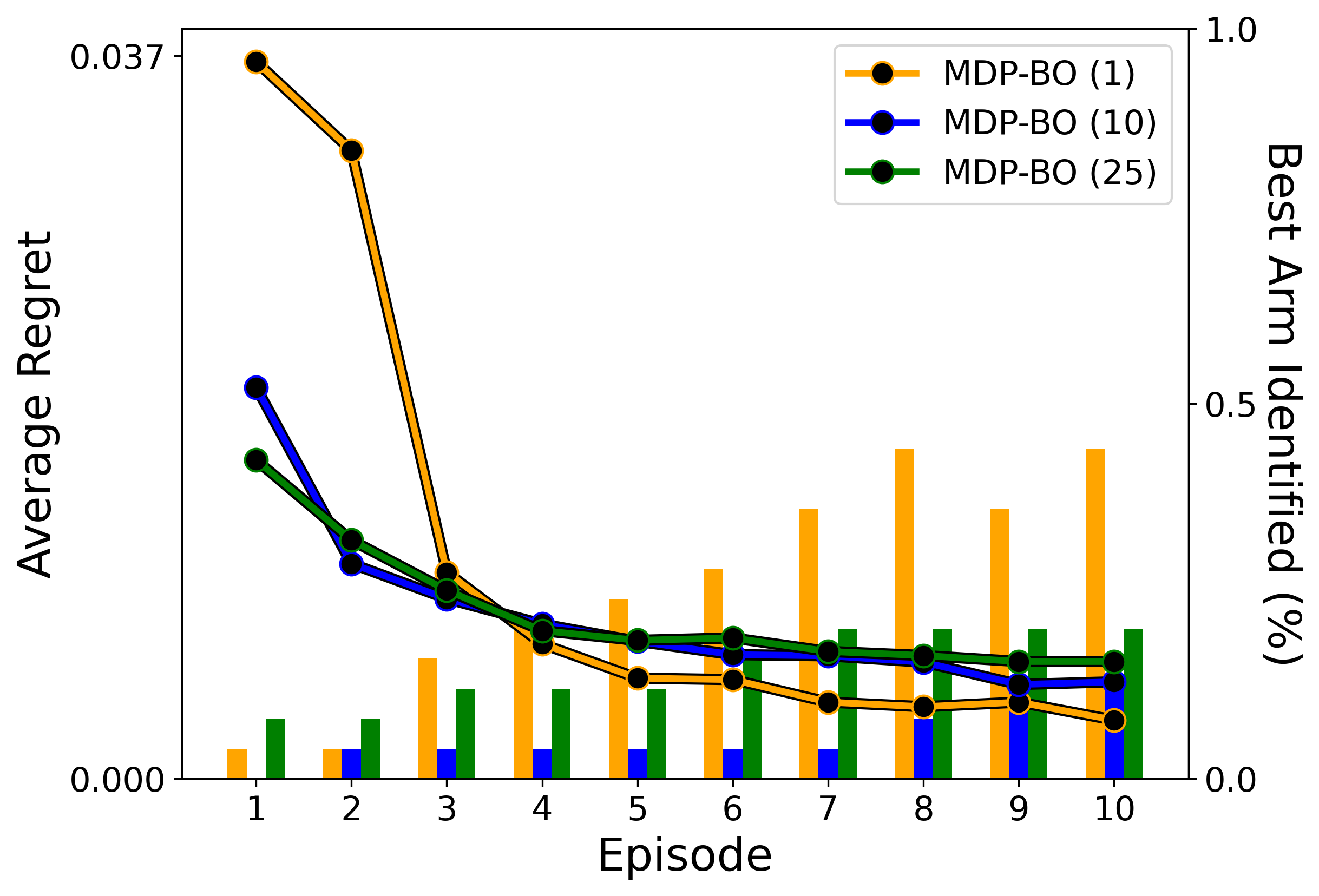}
	\caption{Episodic feedback results}
	\end{subfigure}
	\hfill
	\begin{subfigure}[t]{0.49\textwidth}
	\includegraphics[width = \textwidth]{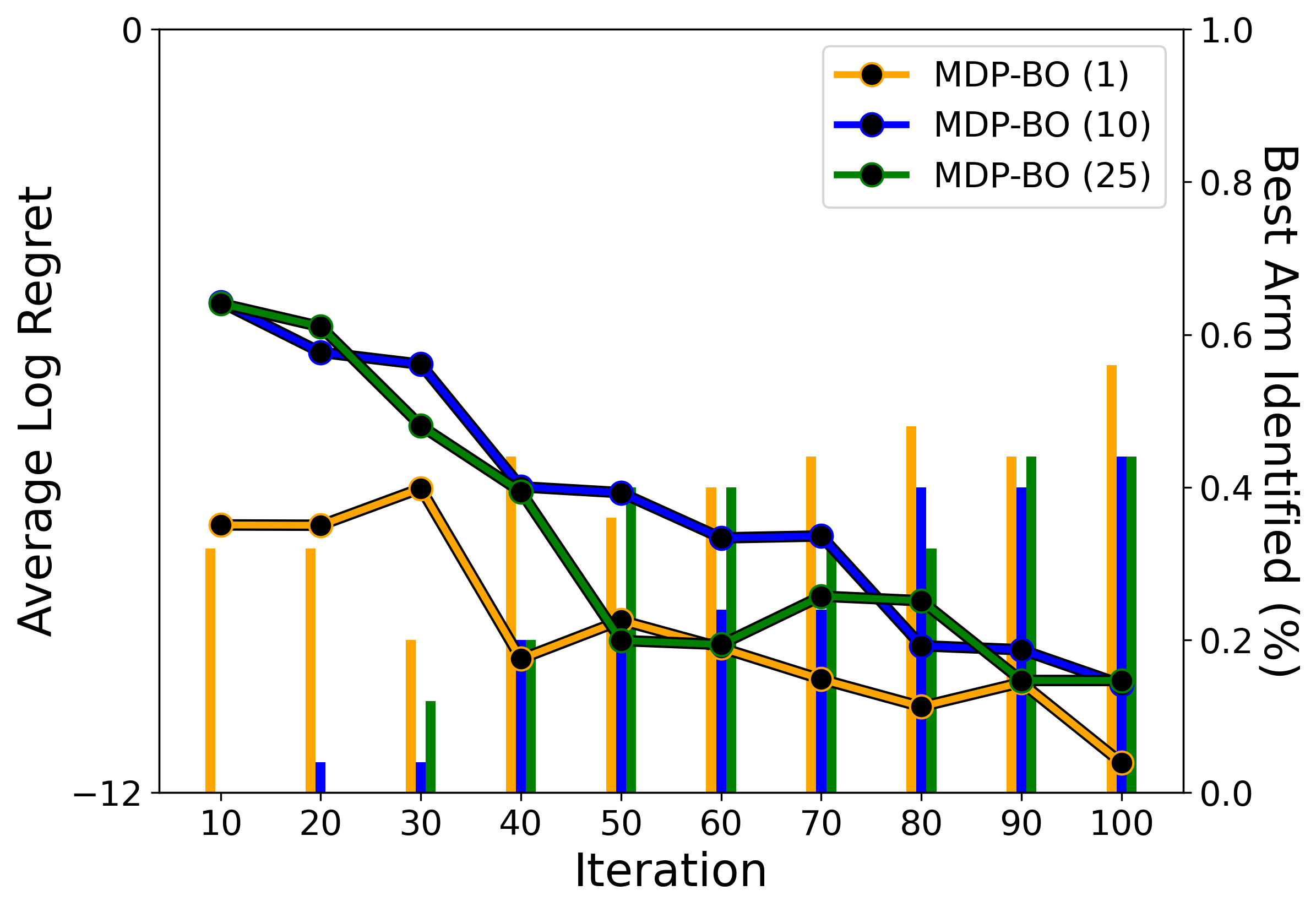}
	\caption{Immediate feedback results}
	\end{subfigure}
	\caption{Ablation study on the number of mixture components on the Knorr pyrazole synthesis benchmark}
	\label{fig: ablation_flow_ode}
\end{figure*}

\begin{figure*}[!ht]
	\centering
	\begin{subfigure}[t]{0.49\textwidth}
	\includegraphics[width = \textwidth]{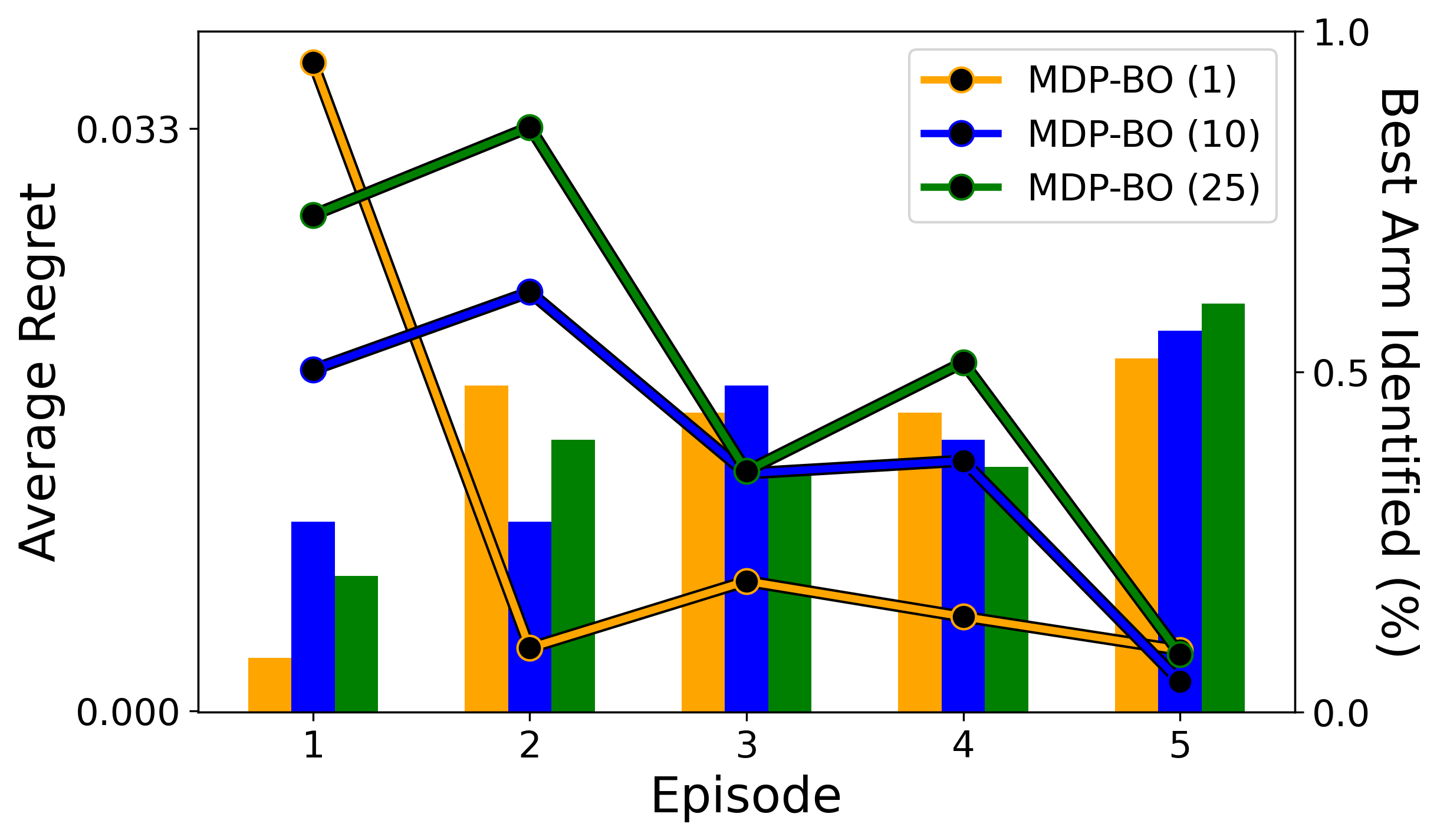}
	\caption{Episodic feedback results}
	\end{subfigure}
	\hfill
	\begin{subfigure}[t]{0.49\textwidth}
	\includegraphics[width = \textwidth]{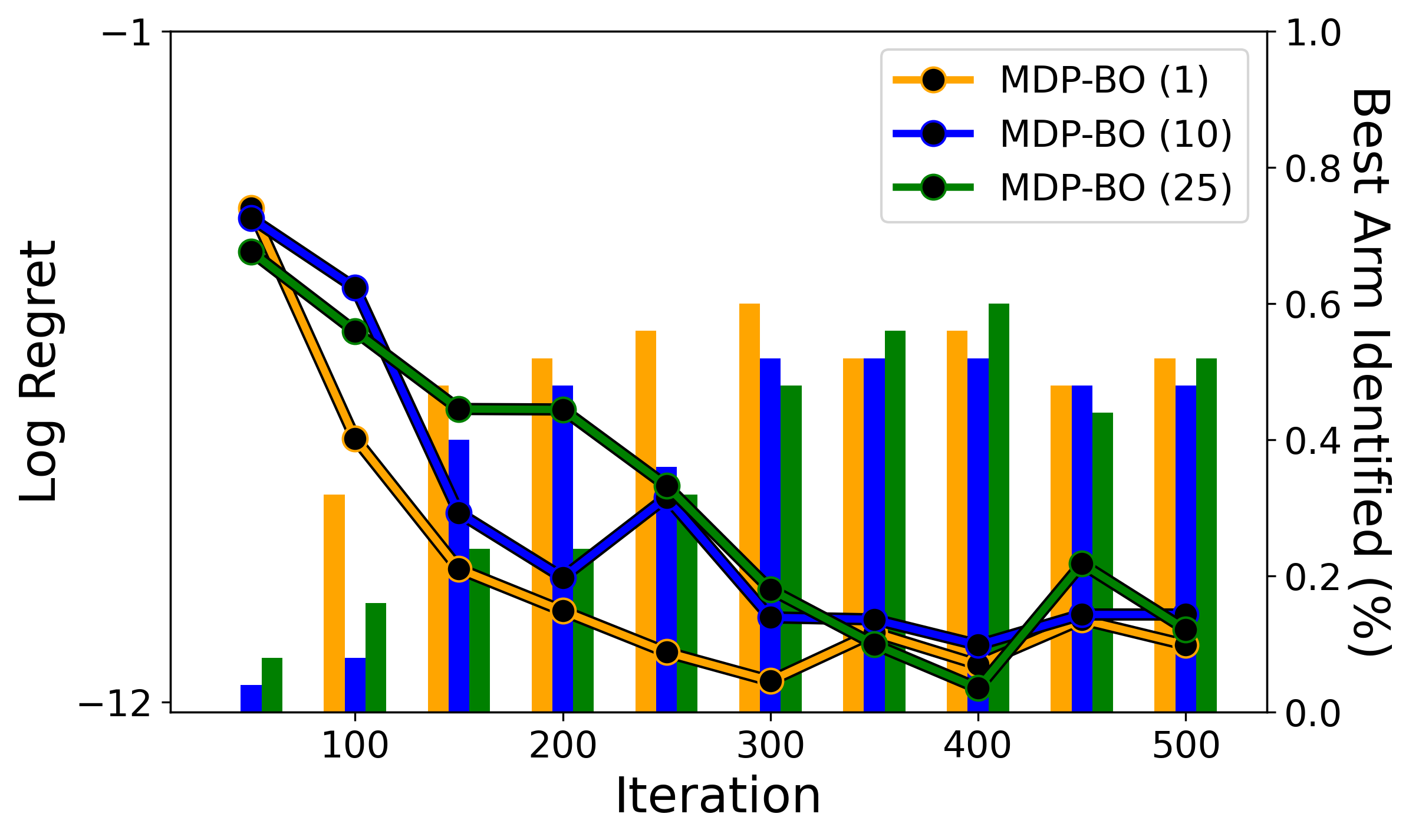}
	\caption{Immediate feedback results}
	\end{subfigure}
	\caption{Ablation study on the number of mixture components on the Ypacarai benchmark}
	\label{fig: ablation_ypacarai}
\end{figure*}

\subsubsection{Size of batch for approximating the set of maximizers}

We explore the effect of the number of maximizers, $K$, in the maximization sets $\mZ_{cont}^{(TS)}$ and $\mZ_{cont}^{(UCB)}$. Overall we found the performance of the algorithm to be fairly robust to the size of the set in all benchmarks, with a higher $K$ generally leading to a little less spread in the performance.

\begin{figure}
    \centering
    \includegraphics[width = 0.5\textwidth]{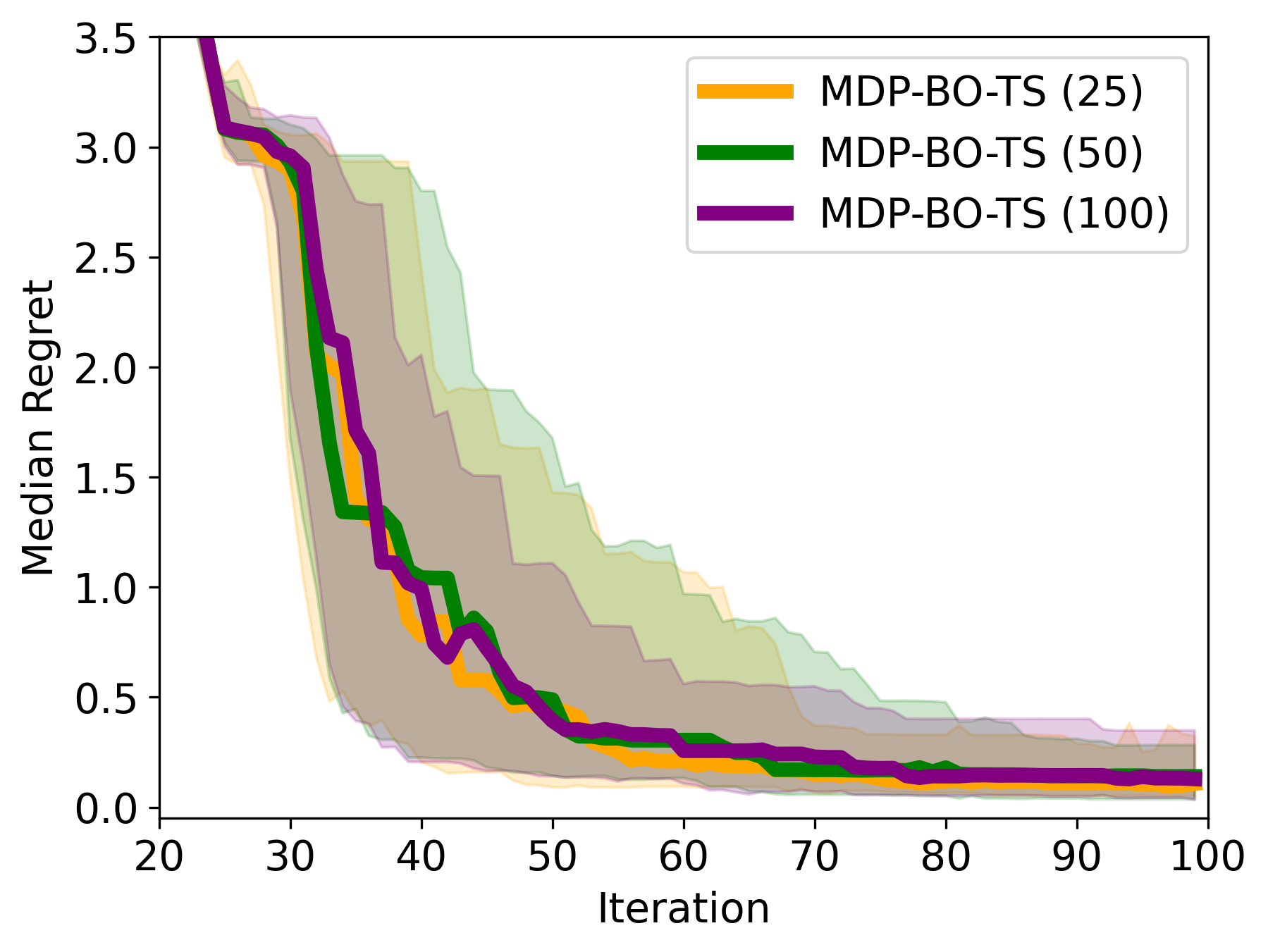}
    \caption{Ablation study into the size of the Thompson Sampling maximization set in the asynchronous Hartmann3D function. We can see that the performance of the algorithm is very similar for all values of $K = 25, 50, 100$.}
    \label{fig: max_ts_size}
\end{figure}

\begin{figure*}[!ht]
	\centering
	\begin{subfigure}[t]{0.32\textwidth}
	\includegraphics[width = \textwidth]{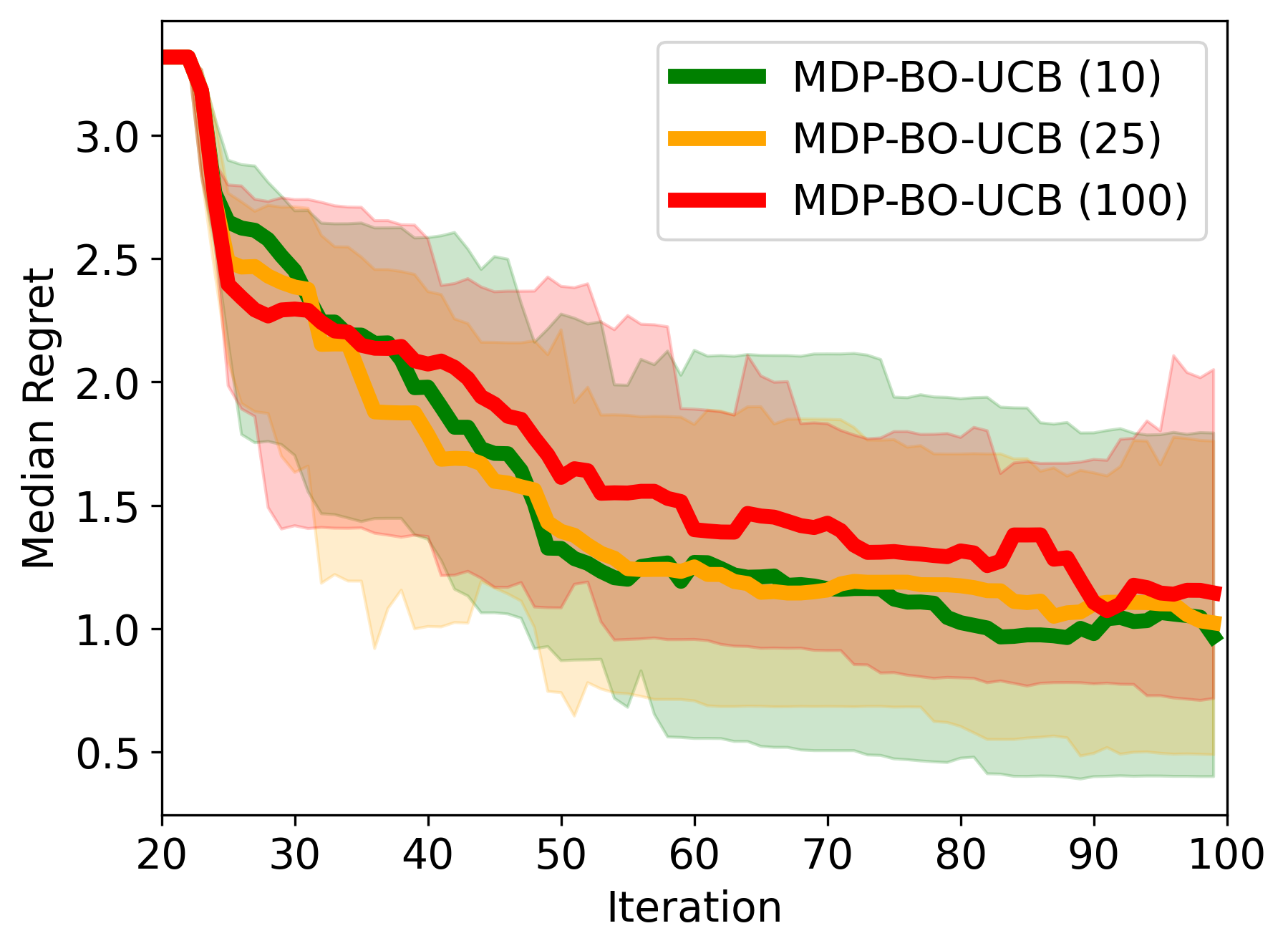}
	\caption{Hartmann6D (asynch).}
	\end{subfigure}
	\hfill
	\begin{subfigure}[t]{0.32\textwidth}
	\includegraphics[width = \textwidth]{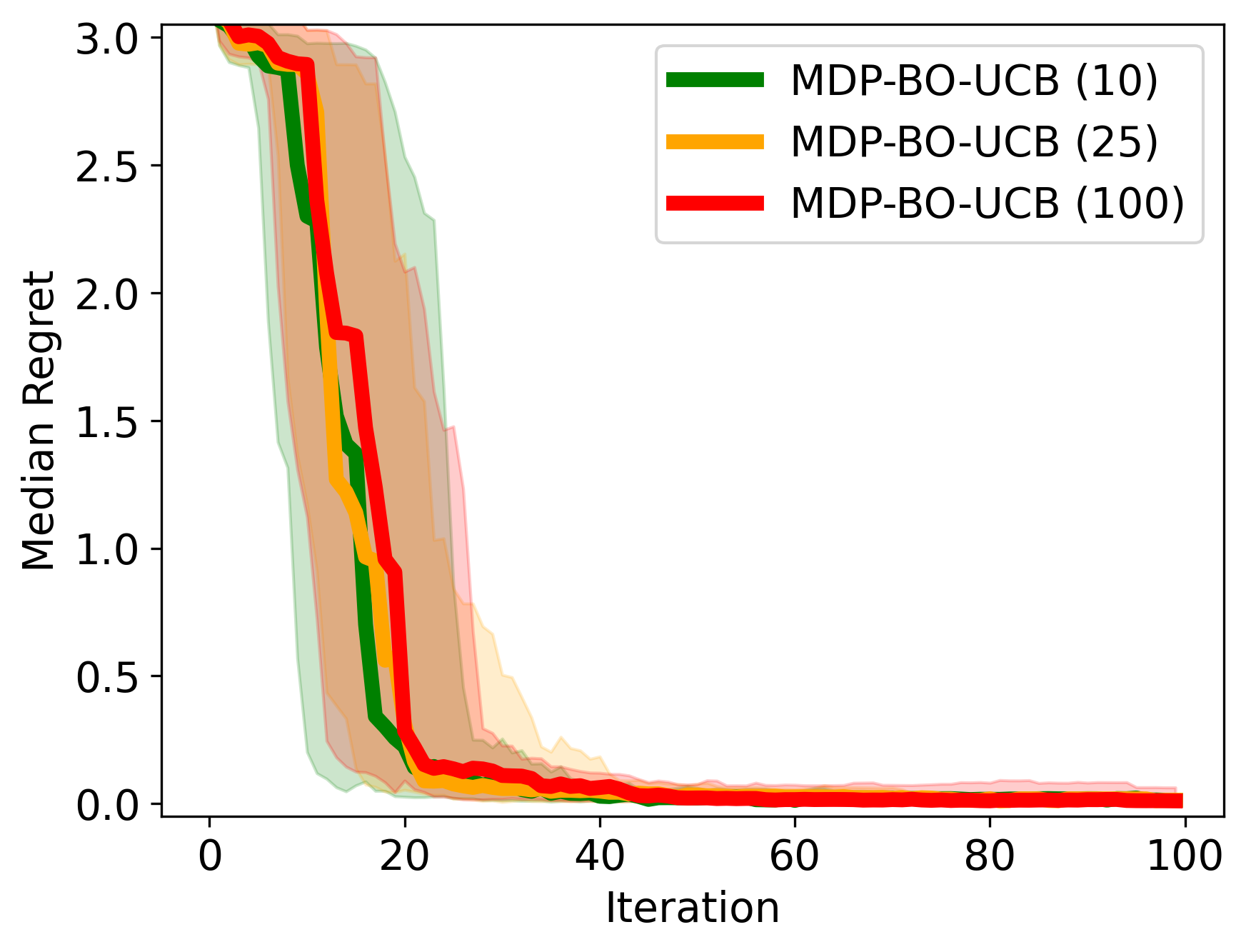}
	\caption{Hartmann3D.}
	\end{subfigure}
    \hfill
	\begin{subfigure}[t]{0.32\textwidth}
	\includegraphics[width = \textwidth]{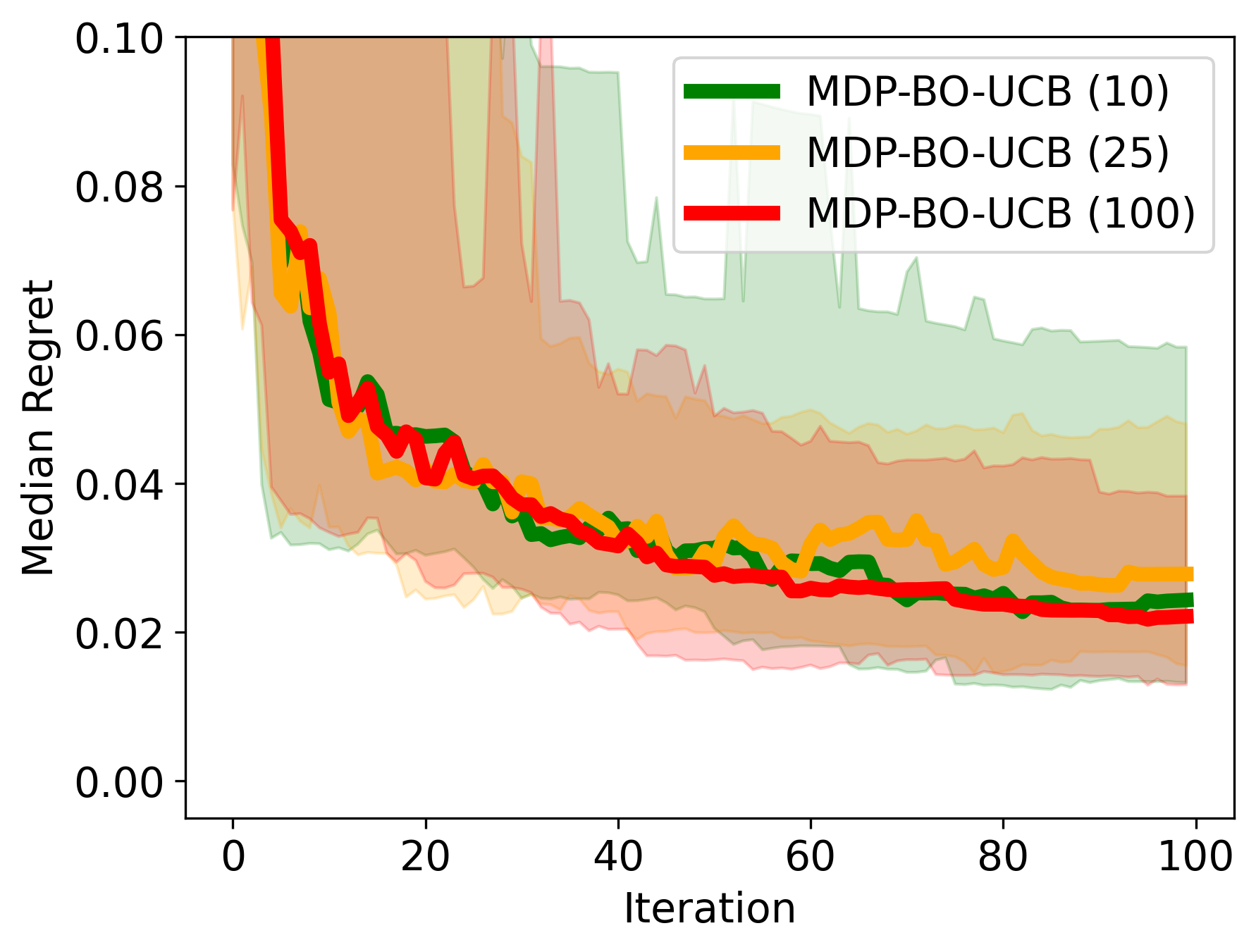}
	\caption{Levy4D.}
	\end{subfigure}
	\caption{Ablation study into the size of the UCB maximization set in a variety of benchmarks. We can see that the performance of the algorithm is very similar for all values of $K = 10, 25, 100$.}
	\label{fig: ucb_ablation}
\end{figure*}

\section{$\mathcal{XY}$-allocation vs $\mathcal{G}$-allocation}

Our objective is motivated by hypothesis testing between different arms (options) $z$ and $z'$. In particular,
\begin{equation}
    U(d) = \max_{z', z \in \mathcal{Z}} \text{Var}[f(z) - f(z') | d_{\bX}] \label{eq: appendix_main_obj}.
\end{equation}
One could maximize the information of the location of the optimum, as it has a Bayesian interpretation. This is at odds in frequentist setting, where such interpretation does not exists. Optimization of information about the maximum has been explored before, in particular via information-theoretic acquisition functions \citep{hennig2012entropy, hernandez2014predictive, tu2022joint, hvarfner2022joint}. However, good results (in terms of regret) have been achieved by focusing only on yet another surrogate to this, namely, the value of the maximum \citep{wang2017max}. This is chiefly due to problem of dealing with the distribution of $f(x^\star)$. Defining a posterior value for $f(z)$ is easy. Using, this and the worst-case perspective, an alternative way to approximate the best-arm objective, could be:
\begin{equation}
    \Tilde{U}(d) = \max_{z \in \mathcal{Z}} \text{Var}[f(z)| \bX_d] \label{eq: appendix_alt_obj}.
\end{equation}
What are we losing by not considering the differences? The original objective corresponds to the $\mathcal{XY}$-allocation in the bandits literature. The modified objective will, in turn, correspond to the $\mathcal{G}$-allocation, which has been argued can perform arbitrarily worse as it does not consider the differences, e.g.\ see Appendix A in \citet{soare2014best}. We nonetheless implemented the algorithm with objective (\ref{eq: appendix_alt_obj}), and found the results to be as expected: performance was very similar \textit{in general}, however in some cases not considering the differences led to much poorer performance. As an example, see Figure \ref{fig: appendix_xy_vs_g} for results on the synchronous Branin2D benchmark.

\begin{figure}
    \centering
    \includegraphics[width = 0.4 \textwidth]{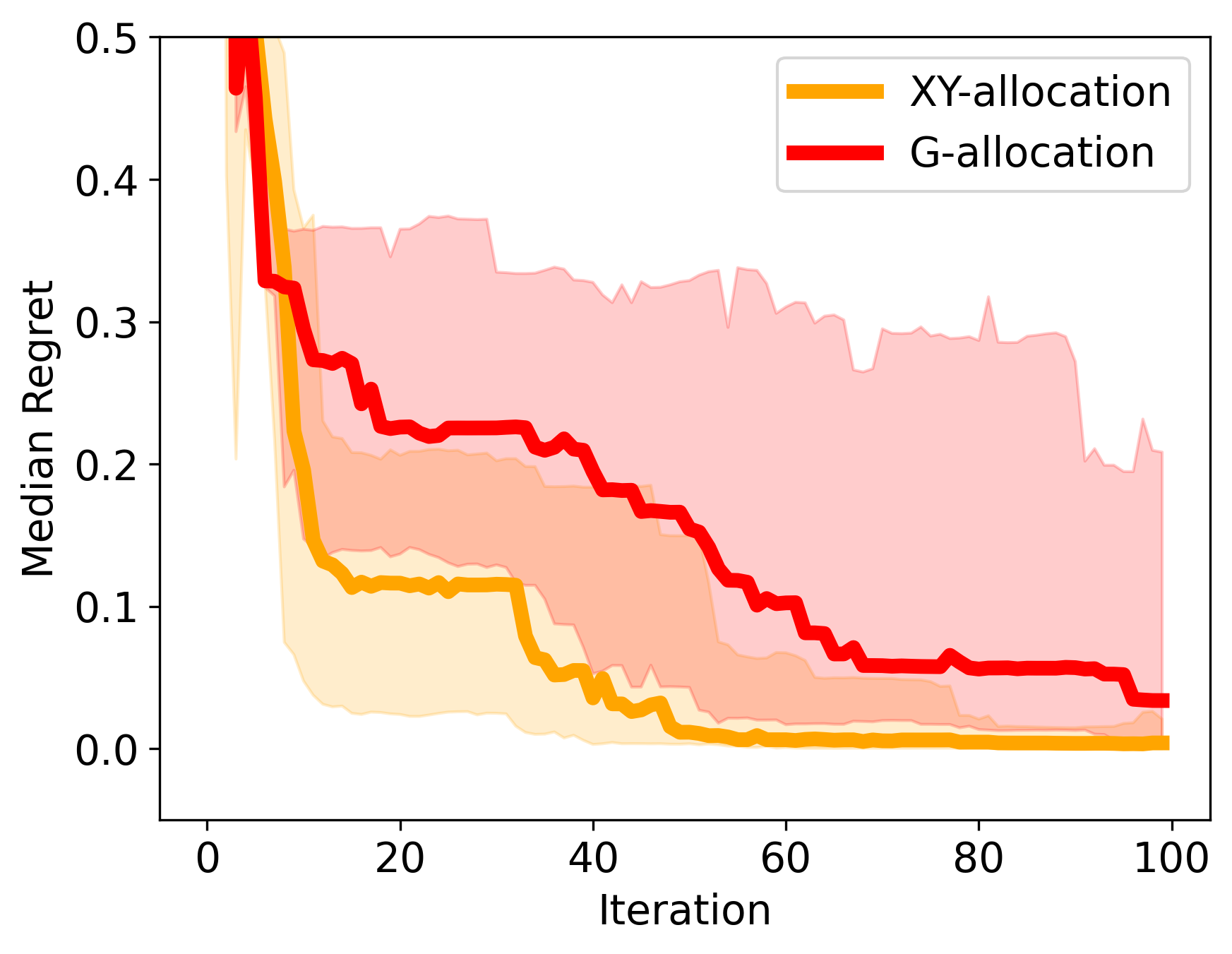}
    \caption{Comparison of using $\mathcal{XY}$-allocation against $\mathcal{G}$-allocation as the basis for the objective. In both cases the maximization sets were created using Thompson Sampling. Overall the performances were often similar, however in a few examples, such as Branin2D which we showcase here, $\mathcal{G}$-allocation performed very poorly. This is consistent with what we can expect from the bandits literature.}
    \label{fig: appendix_xy_vs_g}
\end{figure}

\section{Practical Planning for Continuous MDPs} \label{app: practical}

From remark \ref{app:linearization} it becomes clear that for decreasing covariance functions, such as the squared exponential, $\nabla F$ will consist of two modes around $z_*$ and $z_*'$ The sub-problem seems to find a sequence that maximizes the \textit{sum} of gradients, therefore the optimal solution will try to reach one of the two modes as quickly as possible. For shorter time horizons, the path will reach whichever mode is closest, and for large enough horizons, the sum will be maximized by reaching the larger of the two modes. 

Therefore we can approximately solve the problem by checking the value of the sub-problem objective in \eqref{eq:linear_cts_subproblem} for the shortest paths from $x_{t-1} \rightarrow z_*$ and $x_{t-1} \rightarrow z_*'$, which are trivial to find under the constraints in \eqref{eq:linear_cts_subproblem}. Note that the paths might not necessarily be optimal, as they may be improved by small perturbations, e.g., there might be a small deviation that allows us to visit the smaller mode on the way to the larger mode increasing the overall value of the sum of gradients, however, they give us a good and quick approximation.

\section{Kernel for ODE Knorr pyrazole synthesis} \label{sec: appendix_derivations_ode}

The kernel is based on the following ODE model, which is well known in the chemistry literature and given in \cite{schrecker2023discovery}.

\begin{align}
    R_1 &= k_{1}y_2 y_3 - k_{2} y_4 y_5 \label{eq: ode_non_linearity} \\
    R_2 &= k_3 y_4 
\end{align}
and then:
\begin{align*}
    \deriv{y_1}{t} &= R_2 \\
    \deriv{y_2}{t} &= - R_1 \\
    \deriv{y_3}{t} &= - R_1 \\
    \deriv{y_4}{t} &= R_1 - R_2 \\
    \deriv{y_5}{t} &= R_1 + R_2
\end{align*}

Our main goal is to optimize the product concentration of the reaction, which is given by $y_1$. We do this by sequentially querying the reaction, where we select the residence time, and the initial conditions of the ODE, in the form $y_0 = [0, A, B, 0, 0]$, where $A = 1 - B$.

Due to the non-linearity in Eq. \eqref{eq: ode_non_linearity} we are unable to fit a GP to the process directly. Instead, we first linearize the ODE around two equilibrium points. The set of points of equilibrium are given by:
\begin{align}
    S_1 &= \left\{y_1 = a_1, y_2 = b_1, y_3 = 0, y_4 = 0, y_5 = c_1 | a_1, b_1, c_1 \in \mathbf{R} \right\} \nonumber\\
    S_2 &= \left\{y_1 = a_2, y_2 = 0, y_3 = b_2, y_4 = 0, y_5 = c_2 | a_1, b_1, c_1 \in \mathbf{R} \right\} \nonumber
\end{align}
And the Jacobian of the system is:
\[
\mathbf{J} = 
\begin{bmatrix}
    0 & 0 & 0 & k_3 & 0 \\
    0 & -k_1 y_3 & -k_1 y_2 & k_2 y_5 & k_2 y_4 \\
    0 & -k_1 y_3 & -k_1 y_2 & k_2 y_5 & k_2 y_4 \\
    0 & k_1 y_3 & k_1 y_2 & -k_2 y_5 - k_3 & -k_2 y_4 \\
    0 & k_1 y_3 & k_1 y_2 & -k_2 y_5 + k_3 & -k_2 y_4
\end{bmatrix}
\] \\
Giving:
\[
\mathbf{J}_1 = \mathbf{J}|_{S_1} = 
\begin{bmatrix}
    0 & 0 & 0 & k_3 & 0 \\
    0 & 0 & -k_1 b_1 & k_2 c_1 & 0 \\
    0 & 0 & -k_1 b_1 & k_2 c_1 & 0 \\
    0 & 0 & k_1 b_1 & -k_2 c_1 - k_3 & 0 \\
    0 & 0 & k_1 b_1 & -k_2 c_1 + k_3 & 0
\end{bmatrix}
\]

\[
\mathbf{J}_2 = \mathbf{J}|_{S_2} = 
\begin{bmatrix}
    0 & 0 & 0 & k_3 & 0 \\
    0 & -k_1 b_2 & 0 & k_2 c_2 & 0 \\
    0 & -k_1 b_2 & 0 & k_2 c_2 & 0 \\
    0 & k_1 b_2 & 0 & -k_2 c_2 - k_3 & 0 \\
    0 & k_1 b_2 & 0 & -k_2 c_2 + k_3 & 0
\end{bmatrix}
\]
\\ 
Unfortunately, since the matrices are singular, we do not get theoretical results on the quality of the linearization. However, linearization is still possible, with the linear systems given by:
\begin{equation*}
    \deriv{\Vec{y}}{t} = \mathbf{J}_1 \Vec{y} \qquad \qquad \deriv{\Vec{y}}{t} = \mathbf{J}_2 \Vec{y} 
\end{equation*}
We focus on the first system for now. The matrix has the following eigenvalues:
\begin{align}
    \lambda_{1, 2} &= - \frac{1}{2}\left(b_1 k_1 + c_1 k_2 + k_3 \pm \sqrt{b_1^2 k_1^2 + c_1^2 k_2^2 + k_3^2 +  2 b_1 c_1 k_1 k_2 - 2 k_3(b_1 k_1 - c_1 k_2)} \right) \nonumber \\
    \lambda_{3, 4, 5} &= 0 \nonumber
\end{align}

Note that the three eigenvalues give us the corresponding solution based on their (linearly separable) eigenvectors:
\[
v_3 = \begin{bmatrix} 1 & 0 & 0 & 0 & 0 \end{bmatrix}, \quad v_4 = \begin{bmatrix} 0 & 1 & 0 & 0 & 0 \end{bmatrix}, \quad v_5 = \begin{bmatrix} 0 & 0 & 0 & 0 & 1 \end{bmatrix}
\]
\begin{equation}
    \vec{y}(t) = p_3 v_3 + p_4 v_4 + p_5 v_5 \nonumber
\end{equation}
where $p_i$ are constants. The behaviour of the ODE when this is not the case will depend on whether the remaining eigenvalues will be real or not. However, note:
\begin{equation*}
    b_1^2 k_1^2 + c_1^2 k_2^2 + k_3^2 +  2 b_1 c_1 k_1 k_2 - 2 k_ 3(b_1 k_1 - c_1 k_2) \geq b_1^2 k_1^2 + k_3^2 - 2 b_1 k_1 k_3
    = (b_1 k_1 - k_3)^2 \geq 0
\end{equation*}
and therefore all eigenvalues will always be real. Therefore we can write down the solution as:
\begin{equation}
    \vec{y}(t) = p_1 v_1 e^{\lambda_1 t} + p_2 v_2 e^{\lambda_2 t} + p_3 v_3 + p_4 v_4 + p_5 v_5 \nonumber
\end{equation}
where we ignore the case of repeated eigenvalues for simplicity (this is the case where $b_1^2 k_1^2 + 2 b_1 c_1 k_1 k_2 + c_1^2 k_2^2 - 2 k_ 3(b_1 k_1 - c_1 k_2) + k_3^2$ is exactly equal to zero). We further note that the eigenvalues will be non-negative as:
\begin{align*}
    b_1 k_1 + c_1 k_2 + k_3 &= \sqrt{(b k_1 + c_1 k_2 + k_3)^2} \\
    &= \sqrt{b_1^2k_1^2 + c^2 k_2^2 + k_3^2 + 2b_1c_1k_1k_2 + 2 c_1k_2k_2 + 2 b_1k_1k_3} \\
    & \geq \sqrt{b_1^2k_1^2 + c_1^2 k_2^2 + k_3^2 + 2b_1c_1k_1k_2 + 2 c_1k_2k_2 -2 b_1k_1k_3} \\
    \text{therefore:} \\
    & \lambda_1 \leq \lambda_2 \leq 0
\end{align*}
which means the solutions will always be a linear combination of exponentially decaying functions of time plus constants.

The eigenvectors have the closed form:
\[ v_1 = 
\begin{pmatrix}
1, \\
\frac{1}{2}\left(b_1k_1 + c_1k_2 - k_3 + \sqrt{b_1^2k_1^2 + 2b_1c_1k_1k_2 + c_1^2k_2^2 - 2(b_1k_1 - c_1k_2)k_3 + k_3^2}\right)/k_3, \\
\frac{1}{2}\left(b_1k_1 + c_1k_2 - k_3 + \sqrt{b_1^2k_1^2 + 2b_1c_1k_1k_2 + c_1^2k_2^2 - 2(b_1k_1 - c_1k_2)k_3 + k_3^2}\right)/k_3, \\
-\frac{1}{2}\left(b_1k_1 + c_1k_2 + k_3 + \sqrt{b_1^2k_1^2 + 2b_1c_1k_1k_2 + c_1^2k_2^2 - 2(b_1k_1 - c_1k_2)k_3 + k_3^2}\right)/k_3, \\
-\frac{1}{2}\left(b_1k_1 + c_1k_2 - 3k_3 + \sqrt{b_1^2k_1^2 + 2b_1c_1k_1k_2 + c_1^2k_2^2 - 2(b_1k_1 - c_1k_2)k_3 + k_3^2}\right)/k_3
\end{pmatrix} 
= 
\begin{pmatrix}
1, \\
- \lambda_1 / k_3 - 1, \\
- \lambda_1 / k_3 - 1, \\
\lambda_1 / k_3, \\
\lambda_1 / k_3 + 2
\end{pmatrix}
\]

\[
v_2 = 
\begin{pmatrix}
1, \\
\frac{1}{2}\left(b_1k_1 + c_1k_2 - k_3 - \sqrt{b_1^2k_1^2 + 2b_1c_1k_1k_2 + c_1^2k_2^2 - 2(b_1k_1 - c_1k_2)k_3 + k_3^2}\right)/k_3, \\
\frac{1}{2}\left(b_1k_1 + c_1k_2 - k_3 - \sqrt{b_1^2k_1^2 + 2b_1c_1k_1k_2 + c_1^2k_2^2 - 2(b_1k_1 - c_1k_2)k_3 + k_3^2}\right)/k_3, \\
-\frac{1}{2}\left(b_1k_1 + c_1k_2 + k_3 - \sqrt{b_1^2k_1^2 + 2b_1c_1k_1k_2 + c_1^2k_2^2 - 2(b_1k_1 - c_1k_2)k_3 + k_3^2}\right)/k_3, \\
-\frac{1}{2}\left(b_1k_1 + c_1k_2 - 3k_3 - \sqrt{b_1^2k_1^2 + 2b_1c_1k_1k_2 + c_1^2k_2^2 - 2(b_1k_1 - c_1k_2)k_3 + k_3^2}\right)/k_3
\end{pmatrix}
= 
\begin{pmatrix}
1, \\
- \lambda_2 / k_3 - 1, \\
- \lambda_2 / k_3 - 1, \\
\lambda_2 / k_3, \\
\lambda_2 / k_3 + 2
\end{pmatrix}
\]
We are optimizing over initial set of conditions $y_0 = [0, A, B, 0, 0]$, so solving for the specific values of the constants gives:
\begin{align}
    p_1 &= \frac{\lambda_2}{\lambda_1 - \lambda_2} B \nonumber \\
    p_2 &= - \frac{\lambda_1}{\lambda_1 - \lambda_2} B \nonumber \\
    p_3 &= B \nonumber \\
    p_4 &= A - B \nonumber \\
    p_5 &= 2B \nonumber
\end{align}
Finally, since we are setting $A = 1 - B$ and we are only optimizing the first component of $\Vec{y}$ we can obtain it in closed form:
\begin{align}
    y_1(t, B) &= \frac{\lambda_2}{\lambda_1 - \lambda_2}B e^{\lambda_1 t} - \frac{\lambda_1}{\lambda_1 - \lambda_2}B e^{\lambda_2 t} + B \nonumber \\
     &= B \left(\frac{\lambda_2}{\lambda_1 - \lambda_2} e^{\lambda_1 t} - \frac{\lambda_1}{\lambda_1 - \lambda_2} e^{\lambda_2 t} + 1 \right) \label{eq: exact_solution_linear_ode}
\end{align}

The second ODE is very similar to the first, recall it depends has the following matrix:
\[
\mathbf{J}_2 = \mathbf{J}|_{S_2} = 
\begin{bmatrix}
    0 & 0 & 0 & k_3 & 0 \\
    0 & -k_1 b_2 & 0 & k_2 c_2 & 0 \\
    0 & -k_1 b_2 & 0 & k_2 c_2 & 0 \\
    0 & k_1 b_2 & 0 & -k_2 c_2 - k_3 & 0 \\
    0 & k_1 b_2 & 0 & -k_2 c_2 + k_3 & 0
\end{bmatrix}
\]

The resulting ODE is symmetric to the alternate linearization giving the same solution:
\begin{equation}
    y(t) = p_1 v_1 e^{\lambda_1 t} + p_2 v_2 e^{\lambda_2 t} + p_3 v_3 + p_4 v_4 + p_5 v_5 \nonumber
\end{equation}
with the only difference being the eigenvectors now are:
\[
v_3 = \begin{bmatrix} 1 & 0 & 0 & 0 & 0 \end{bmatrix}, \quad v_4 = \begin{bmatrix} 0 & 0 & 1 & 0 & 0 \end{bmatrix}, \quad v_5 = \begin{bmatrix} 0 & 0 & 0 & 0 & 1 \end{bmatrix}
\]
which in turn leads to solutions of the form:
\begin{align}
    y_1(t, B) &= \frac{\lambda_2}{\lambda_1 - \lambda_2} A e^{\lambda_1 t} - \frac{\lambda_1}{\lambda_1 - \lambda_2} A e^{\lambda_2 t} + A \nonumber \\
     &= A \left(\frac{\lambda_2}{\lambda_1 - \lambda_2} e^{\lambda_1 t} - \frac{\lambda_1}{\lambda_1 - \lambda_2} e^{\lambda_2 t} + 1 \right) \nonumber
\end{align}
where $A = 1 - B$. Note that we now have four different eigenvalues, which depend on the linearization points:
\begin{align}
    \lambda_{1, 2}^{(1)} &= - \frac{1}{2}\left(b_1 k_1 + c_1 k_2 + k_3 \pm \sqrt{b_1^2 k_1^2 + c_1^2 k_2^2 + k_3^2 +  2 b_1 c_1 k_1 k_2 - 2 k_3(b_1 k_1 - c_1 k_2)} \right) \nonumber \\
    \lambda_{1, 2}^{(2)} &= - \frac{1}{2}\left(b_2 k_1 + c_2 k_2 + k_3 \pm \sqrt{b_2^2 k_1^2 + c_2^2 k_2^2 + k_3^2 +  2 b_2 c_2 k_1 k_2 - 2 k_3(b_2 k_1 - c_2 k_2)} \right) \nonumber
\end{align}
Giving solutions:
\begin{align}
    y_1^{(1)}(t, B) &= B \left(\frac{\lambda^{(1)}_2}{\lambda^{(1)}_1 - \lambda^{(1)}_2} e^{\lambda^{(1)}_1 t} - \frac{\lambda^{(1)}_1}{\lambda^{(1)}_1 - \lambda^{(1)}_2} e^{\lambda^{(1)}_2 t} + 1 \right) \nonumber \\
    y_1^{(2)}(t, B) &= A \left(\frac{\lambda^{(2)}_2}{\lambda^{(2)}_1 - \lambda^{(2)}_2} e^{\lambda^{(2)}_1 t} - \frac{\lambda^{(2)}_1}{\lambda^{(2)}_1 - \lambda^{(2)}_2} e^{\lambda^{(2)}_2 t} + 1 \right) \nonumber
\end{align}

\begin{figure}
    \centering
    \includegraphics[width = 0.5\textwidth]{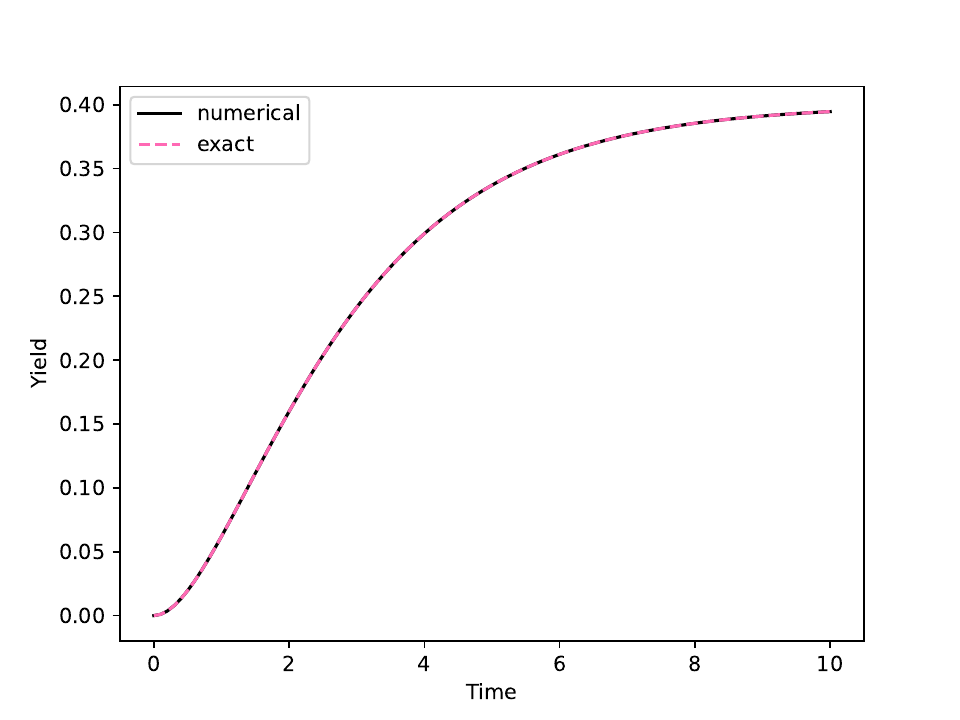}
    \caption{Comparing the numerical solution against the solutions found in equation (\ref{eq: exact_solution_linear_ode}).}
    \label{fig: confirm_numerics}
\end{figure}

Due to the length of the derivation, we confirm that our analysis is correct by comparing the numerical solution of the ODE to the exact solution we found in Figure \ref{fig: confirm_numerics}. Finally, we look at interpolating between the two solutions; so given the solutions $y^{(1)}(t, B)$ and $y^{(2)}(t, B)$ corresponding to the linearization with stationary point in $S_1$ and $S_2$ respectively, we consider a solution of the form:
\begin{equation}
    y(t, B | k_1, k_2, k_3, \alpha) = (1 - \mathcal{S}(B)) y^{(1)}(t, B) + \mathcal{S}(B) y^{(2)}(t, B) \label{eq: final_ode_soln}
\end{equation}
where $\mathcal{S}(x) := (1 + e^{-\alpha_{sig}(x - 0.5)})^{-1}$ is a sigmoid function centered at $B = 0.5$ and where we have introduced a new hyper-parameter $\alpha_{sig}$. Finally, given Eq.\ \eqref{eq: final_ode_soln} we can obtain the kernel. In particular, we want (\ref{eq: final_ode_soln}) to be a feature we are predicting on; therefore the kernel is simply the (dot) product of the features therefore:
\begin{equation}
    k_{ode}((t, B), (t', B')) = y(t, B | k_1, k_2, k_3, \alpha) \times y(t', B' | k_1, k_2, k_3, \alpha) \nonumber
\end{equation}

And because we know we are simply approximating the data we can simply correct the model by adding an Gaussian Process correction; giving us the final kernel:
\begin{equation}
    k_{joint}((t, B), (t', B')) = \alpha_{ode} k_{ode}((t, B), (t', B')) + \alpha_{rbf} k_{rbf}((t, B), (t', B')) \nonumber
\end{equation}

where $\alpha_{ode}$ and $\alpha_{rbf}$ are parameters we can learn, e.g.\ using the marginal likelihood.

\end{document}